\def\bSig\mathbf{\Sigma}
\newcommand{\E}{\mathbb{E}}
\renewcommand{\P}{\mathbb{P}}
\newcommand{\ddr}{\mathrm{d}}
\newcommand{\wz}{w^{(0)}}
\newtheorem{theorem}{Theorem}
\newtheorem{lem}[theorem]{Lemma}
\providecommand{\keywords}[1]
{
  \small	
  \textbf{\textit{Keywords:}} #1
}
\begin{document}

\title{Large-width asymptotics for ReLU neural networks with $\alpha$-Stable initializations}


\author[1]{Stefano Favaro\thanks{stefano.favaro@unito.it}}
\author[2]{Sandra Fortini\thanks{sandra.fortini@unibocconi.it}}
\author[3]{Stefano Peluchetti\thanks{speluchetti@cogent.co.jp}}
\affil[1]{\small{Department of Economics and Statistics, University of Torino and Collegio Carlo Alberto, Italy}}
\affil[2]{\small{Department of Decision Sciences, Bocconi University, Italy}}
\affil[3]{\small{Cogent Labs, Tokyo, Japan}}

\maketitle

\begin{abstract}
There is a recent and growing literature on large-width asymptotic properties of Gaussian neural networks (NNs), namely NNs whose weights are initialized according to Gaussian distributions. In such a context, two popular problems are: i) the study of the large-width distributions of NNs, which characterizes the infinitely wide limit of a rescaled NN in terms of a Gaussian stochastic process; ii) the study of the large-width training dynamics of NNs, which characterizes the infinitely wide dynamics in terms of a deterministic kernel, referred to as the neural tangent kernel (NTK), and shows that, for a sufficiently large width, the gradient descent achieves zero training error at a linear rate. In this paper, we consider these problems for $\alpha$-Stable NNs, namely NNs whose weights are initialized according to $\alpha$-Stable distributions with $\alpha\in(0,2]$, i.e. distributions with heavy-tails. First, for $\alpha$-Stable NNs with a ReLU activation function, we show that if the NN's width goes to infinity then a rescaled NN converges weakly to an $\alpha$-Stable stochastic process. As a difference with respect to the Gaussian setting, our result shows that the choice of the activation function affects the scaling of the NN, that is: to achieve the infinitely wide $\alpha$-Stable process, the ReLU activation requires an additional logarithmic term in the scaling with respect to sub-linear activations. Then, we study the large-width training dynamics of $\alpha$-Stable ReLU-NNs, characterizing the infinitely wide dynamics in terms of a random kernel, referred to as the $\alpha$-Stable NTK, and showing that, for a sufficiently large width, the gradient descent achieves zero training error at a linear rate. The randomness of the $\alpha$-Stable NTK is a further difference with respect to the Gaussian setting, that is: within the $\alpha$-Stable setting, the randomness of the NN at initialization does not vanish in the large-width regime of the training. An extension of our results to deep $\alpha$-Stable NNs is discussed. 
\end{abstract}

\keywords{$\alpha$-Stable stochastic process; gradient descent; infinitely wide limit; large-width training dynamics; neural network; neural tangent kernel; ReLU activation function}


\section{Introduction}

There is a recent and growing literature on large-width asymptotic properties of Gaussian neural networks (NNs), namely NNs whose weights are initialized or distributed according to Gaussian distributions \citep{Nea(96),Wil(97),Der(06),Gar(18),Jac(18),Lee(18),Mat(18),Nov(18),Aro(19),Lee(19),Yan(19),Yan(19a),Yan(19b),Bra(21),Eld(21),Klu(21),Yan1(21),Yan(21),Bas(22)}. Consider the following setting: i) for $d,k\geq1$ let $X$ be the $d\times k$ NN's input, with $x_j=(x_{j1},\ldots,x_{jd})^{T}$ being the $j$-th input (column vector); ii) let $\phi$ be an activation function or nonlinearity; iii) for $m\geq1$ let $W=(w^{(0)}_{1},\ldots,w^{(0)}_{m},w)$ be the NN's weights, such that $w^{(0)}_{i}=(w^{(0)}_{i1},\ldots,w^{(0)}_{id})$ and $w=(w_{1},\ldots,w_{m})$ with the $w^{(0)}_{ij}$'s and the $w_{i}$'s being i.i.d. as a Gaussian distribution with zero mean and variance $\sigma^{2}$. If
 \begin{displaymath}
f_{m}(x_{j})=\sum_{i=1}^{m}w_{i}\phi(\langle w_{i}^{(0)},x_{j}\rangle)
\end{displaymath}
for $j=1,\ldots,k$, then $f_{m}(X)=(f_{m}(x_{1}),\ldots,f_{m}(x_{k}))$ defines a (fully connected feed-forward) Gaussian $\phi$-NN of width $m$. In his seminal work, \cite{Nea(96)} first investigated large-width distributions of Gaussian $\phi$-NNs, characterizing the infinitely wide limit of the NN. In particular, under suitable assumptions on $\phi$, \cite{Nea(96)} showed that an application of the central limit theorem (CLT) leads to the following: if $m\rightarrow+\infty$ then the rescaled NN $m^{-1/2}f_{m}(X)$ converges weakly to a Gaussian stochastic process with covariance function $\Sigma_{X,\phi}$ such that $\Sigma_{X,\phi}[r,s]=\sigma^{2}\E[\phi(\langle w_{i}^{(0)},x_{r}\rangle\phi(\langle w_{i}^{(0)},x_{s}\rangle]$. Some extensions of this result have been obtained for deep NNs \citep{Mat(18)}, general NN's architectures such as convolutional NNs \citep{Yan(19a),Yan(19b)} and infinite-dimensional inputs  \citep{Bra(21),Eld(21)}.

Recent works have investigated the large-width training dynamics of Gaussian NNs, with the training being performed through the  gradient descent \citep{Jac(18),Aro(19),Du(19),Lee(19)}. Let $(X,Y)$ be the training set, where $Y=(y_{1},\ldots,y_{k})$ is the (training) output, with $y_{j}$ being the (training) output for the $j$-th input $x_{j}$. If $f_{m}(X)$ is a Gaussian $\phi$-NN, with  $\phi$ to be the ReLU activation function, then we denote by 
\begin{displaymath}
\tilde{f}_{m}(W,X)=\frac{1}{m^{1/2}}f_{m}(X)
\end{displaymath}
the rescaled (model) output. Starting at random initialization $W(0)$ for the NN's weights, and assuming the squared-error loss function, the gradient flow of $W(t)$ leads to the the training dynamics of $\tilde{f}_{m}(W(t),X)$, that is for $t\geq0$
\begin{equation}\label{ntk_gauss}
\frac{\ddr \tilde{f}_{m}(W(t),X)}{\ddr t}=-(\tilde{f}_{m}(W(t),X)-Y)\eta_{m} H_{m}(W(t),X),
\end{equation}
where $\eta_{m}>0$ is the (continuous) learning rate, and $H_{m}(W(t),X)$ is a $k\times k$ matrix whose $(j,j^{\prime})$ entry is $\langle\partial \tilde{f}_{m}(W(t),x_{j})/\partial W,\partial \tilde{f}_{m}(W(t),x_{j^{\prime}})/\partial W\rangle$. \citet{Du(19)} showed that if $\eta_m=1$, then: i) the kernel $ H_{m}(W(0),X)$ converges in probability, as $m\rightarrow+\infty$, to a deterministic kernel $H^{\ast}(X,X)$, which is referred to as the neural tangent kernel (NTK); ii) the least eigenvalue of $H^{\ast}(X,X)$ is bounded from below by a positive constant $\lambda_0$; iii) for $m$ sufficiently large, the gradient descent achieves zero training error at a linear rate, i.e.
\begin{displaymath}
\Vert Y-\tilde f_m(W(t),X)\Vert_2^2\leq \exp(-\lambda_0t)\Vert Y-\tilde f_m(W(0),X)\Vert_2^2,
\end{displaymath}
with high probability. See  \citet{Aro(19)}, \citet{Yan(19)} and \citet{Yan(21)} for some extensions of these results to deep NNs and general architectures.

\subsection{Our contributions}

We study large-width asymptotic properties of $\alpha$-Stable ReLU-NNs, namely NNs with a ReLU activation function and weights initialized according to $\alpha$-Stable distributions \citep{Sam(94)}. For $\alpha\in(0,2]$, $\alpha$-Stable distributions form a class of heavy tails distributions, with $\alpha=2$ being the Gaussian distribution. \citet{Nea(96)} first considered $\alpha$-Stable distributions to initialize NNs, showing that while all Gaussian weights vanish in the infinitely wide limit, some $\alpha$-Stable weights retain a non-negligible contribution, allowing to represent ``hidden features" \citep{Der(06),For(20),Lee(22)}. Such a behaviour is attributed to the diversity of the NN's path properties as $\alpha\in(0,2]$ varies, as shown in Figure \ref{fig:shapes}, which makes $\alpha$-Stable NNs more flexible than Gaussian NNs. Motivated by these works, \citet{Fav(20),Fav(21)} provided the following result for an $\alpha$-Stable $\phi$-NN $f_{m}(X;\alpha)$: for $\alpha\in(0,2)$ and a sub-linear $\phi$, if $m\rightarrow+\infty$ then the rescaled NN $m^{-1/\alpha}f_{m}(X;\alpha)$ converges weakly to an $\alpha$-Stable stochastic process, that is a process with $\alpha$-Stable finite-dimensional distributions. Here, we extend this result to the ReLU activation, which is one of the most popular linear activation function. In particular, we show that if $m\rightarrow+\infty$, then the $\alpha$-Stable ReLU-NN $(m\log m)^{-1/\alpha}f_{m}(X;\alpha)$ converges weakly to an $\alpha$-Stable process. While for NNs with a single input, i.e. $k=1$, such a result follows by an application of the   generalized CLT for heavy tails distributions \citep{Uch(11),Bor(22)}, for $k>1$ the generalized CLT does not apply, leading us to develop an alternative proof that may be of independent interest for multidimensional $\alpha$-Stable distributions. It turns out that in the $\alpha$-Stable setting, differently from the Gaussian setting, the choice of $\phi$ affects the scaling of the NN, that is: to achieve the infinitely wide $\alpha$-Stable process, the use of the ReLU activation in place of a sub-linear activation results in a change of the scaling $m^{-1/\alpha}$ of the NN through the additional $(\log m)^{-1/\alpha}$ term.

\begin{figure}[htp!]
\centering
\includegraphics[width=0.28\textwidth, trim = 40 40 40 40]{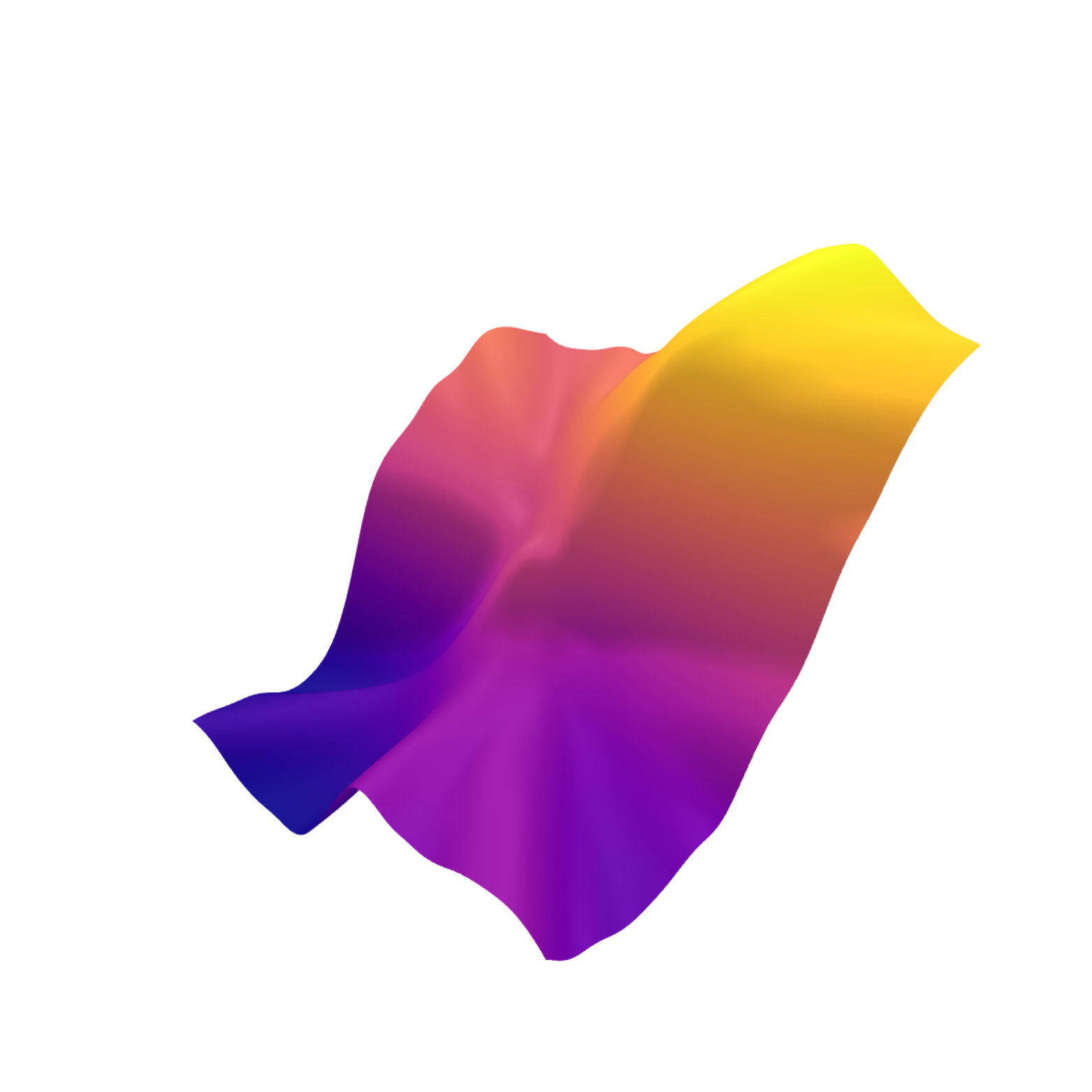}
\includegraphics[width=0.28\textwidth, trim = 40 40 40 40]{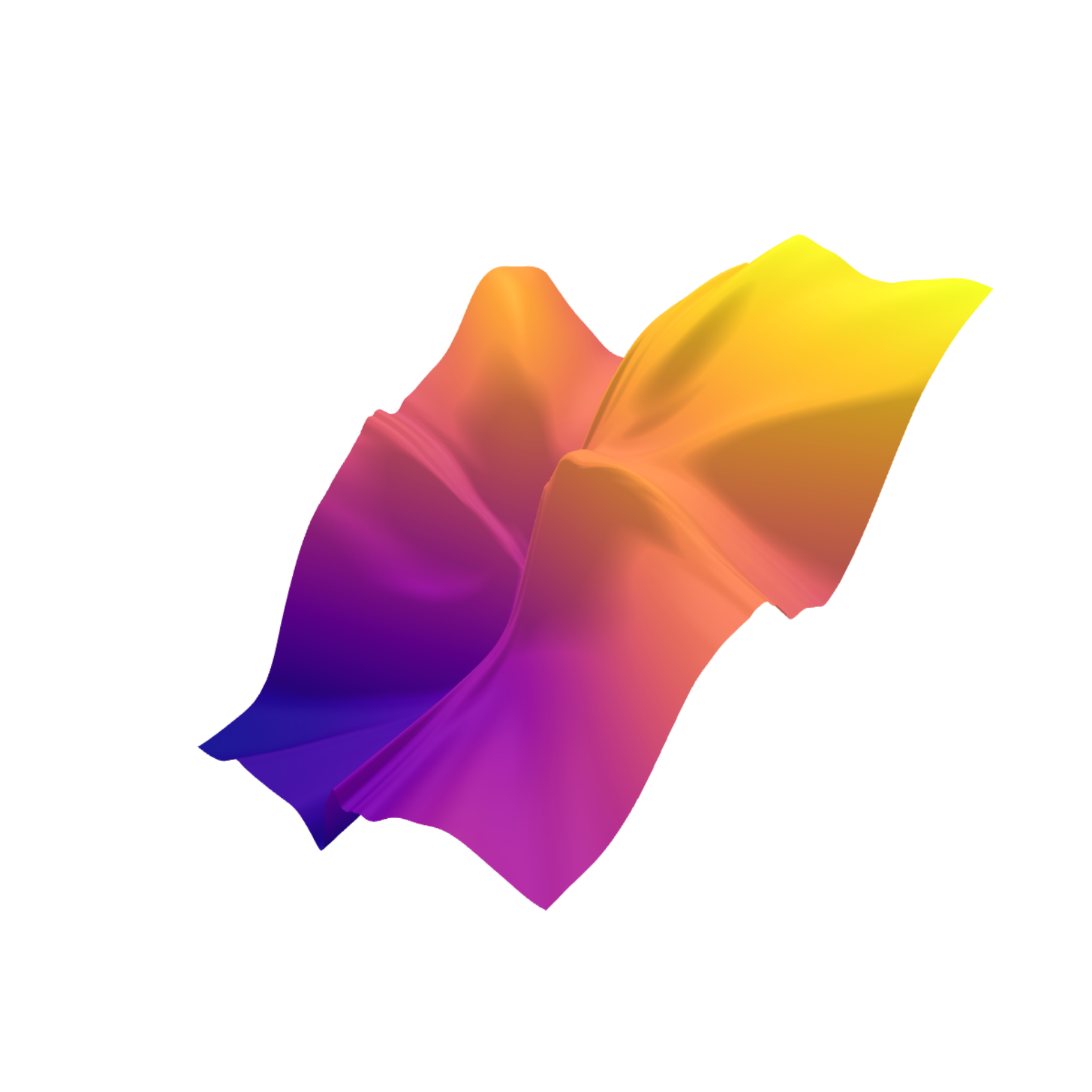}\\
\includegraphics[width=0.28\textwidth, trim = 40 40 40 40]{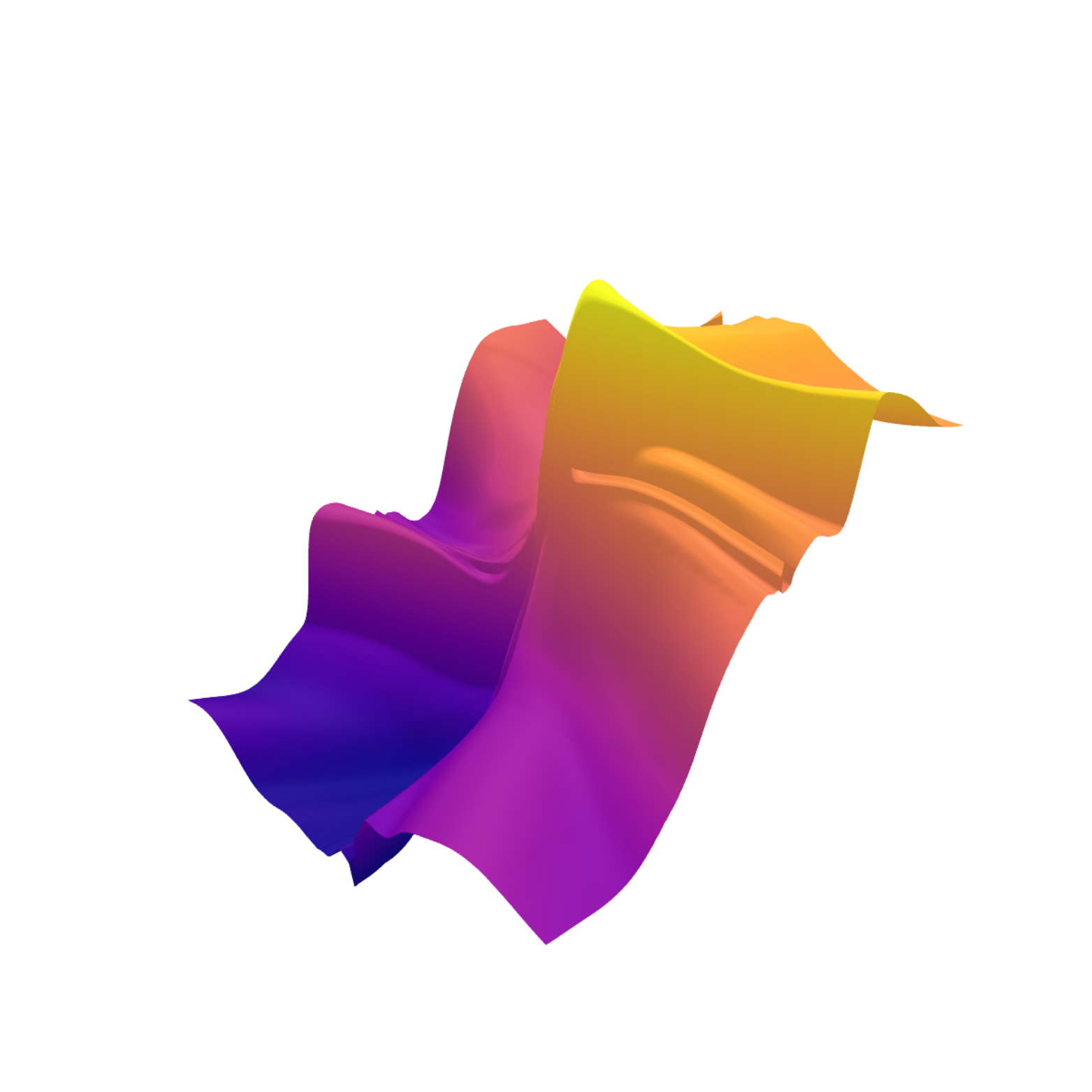}
\includegraphics[width=0.28\textwidth, trim = 40 40 40 40]{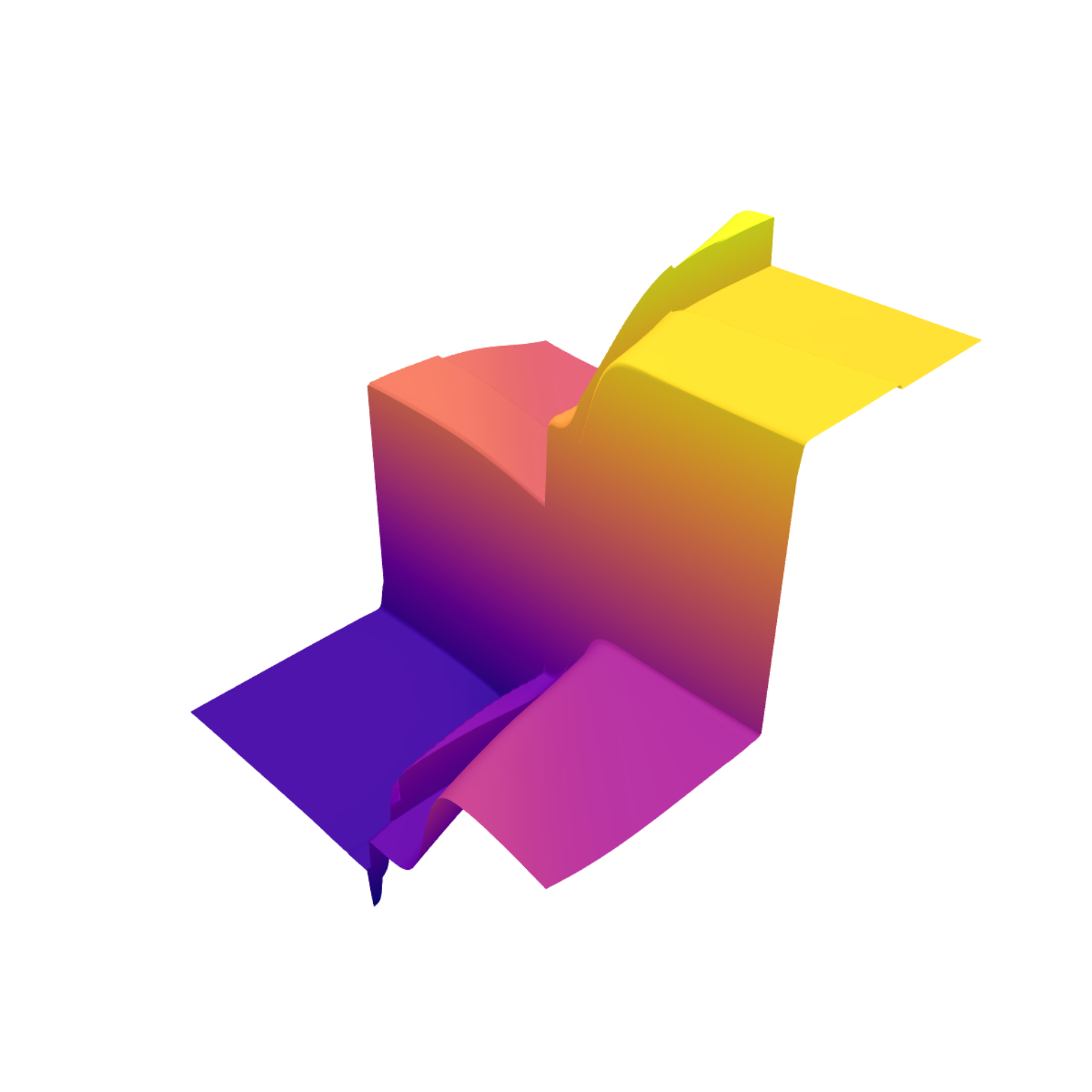}
\caption{Sample paths of shallow $\alpha$-Stable NNs mapping $[0,1]^2$ to $\mathbb{R}$, with a $\tanh$ activation function and witdth $m=1024$: i) $\alpha=2.0$ (Gaussian distribution) top-left; ii) $\alpha=1.5$ top-right; iii) $\alpha=1.0$ (Cauchy distribution) bottom-left; iv) $\alpha=0.5$ (L\'evy distribution) bottom-right.}\label{fig:shapes}
\end{figure}

Then, our main contribution is the study of the large-width training dynamics of $\alpha$-Stable ReLU-NNs, thus generalizing to the $\alpha$-Stable setting the main result of \citet{Du(19)}, as well as some results of \citet{Jac(18)} and \citet{Aro(19)}. For $\alpha\in(0,2)$ and a training set $(X,Y)$, we denote by 
\begin{displaymath}
\tilde{f}_{m}(W,X;\alpha)=\frac{1}{(m\log m)^{1/\alpha}}f_{m}(X;\alpha)
\end{displaymath}
the rescaled (model) output, and we consider the training of the NN performed through gradient descent under the squared-error loss function. By writing the training dynamics of $\tilde{f}_{m}(W(t),X;\alpha)$ as in \eqref{ntk_gauss}, with $\eta_{m}$ being the (continuous) learning rate and $H_{m}(W(t),X)$ being the kernel in the $\alpha$-Stable setting, we show that if $\eta_{m}=(\log m)^{2/\alpha}$ then: i) the rescaled kernel $(\log m)^{2/\alpha} H_{m}(W(0),X)$ converges in distribution, as $m\rightarrow+\infty$, to an $(\alpha/2)$-Stable (almost surely) positive definite random kernel $\tilde{H}^{\ast}(X,X;\alpha)$, which is referred to as the $\alpha$-Stable NTK; ii) during training $t>0$, for every $\delta>0$ the least eigenvalue of $(\log m)^{2/\alpha}\tilde H_m(W(t),X;\alpha)$ remains bounded away from zero, for $m$ sufficiently large, with probability $1-\delta$; iii) for every $\delta>0$ the gradient descent achieves zero training loss at a linear rate, for $m$ sufficiently large, with probability $1-\delta$. The randomness of the $\alpha$-Stable NTK is a further difference with respect to the Gaussian setting, and it makes the convergence analysis of the gradient descent more challenging than in the Gaussian setting. Our work is the first to investigate the the large-width training dynamics of NNs with weights initialized through heavy tails distributions, and it shows that, within the $\alpha$-Stable setting, the randomness of the NN at initialization does not vanish in the large-width regime of the training. Such a peculiar behaviour may be viewed as the counterpart, at the training level, of the large-width behaviour described in the work of \citet{Nea(96)}.

\subsection{Organization of the paper}

The paper is organized as follows. In Section \ref{sec2} we the study of the large-width distributions of $\alpha$-Stable ReLU-NNs, characterizing the infinitely wide limit of a rescaled NN in terms of an $\alpha$-Stable process. In Section \ref{sec3} we study the large-width training dynamics of $\alpha$-Stable ReLU-NNs, characterizing the infinitely wide dynamics in terms of the $\alpha$-Stable NTK, and showing that, for a sufficiently large width, the gradient descent achieves zero training error at a linear rate, with high probability. Section \ref{sec4} contains a discussion of our results, their extension to deep $\alpha$-Stable NNs, and some directions for future work. Appendices contain the proofs of our results and a brief review of multidimensional $\alpha$-Stable distributions.


\section{$\alpha$-Stable ReLU-NNs and their large-width distribution}\label{sec2}

It is useful to recall the definition of the multidimensional $\alpha$-Stable distribution. See \citet[Chapter 1 and Chapter 2]{Sam(94)}. For $\alpha\in(0,2]$, a random variable $S\in\mathbb{R}$ is distributed as a symmetric and centered $1$-dimensional $\alpha$-Stable distribution with scale $\sigma>0$ if its characteristic function is
\begin{displaymath}
\E(\exp\{\text{i}zS\})=\exp\left\{-\sigma^{\alpha}|z|^{\alpha}\right\},
\end{displaymath}
and we write $S\sim\text{St}(\alpha,\sigma)$. The parameter $\alpha$ is referred to as the stability. In particular, if $\alpha=2$ then $S$ is distributed according to a Gaussian distribution with mean $0$ and variance $\sigma^{2}$. Let $\mathbb{S}^{k-1}$ be the unit sphere in $\mathbb{R}^{k}$, with $k\geq1$, and let $\Gamma$ be a symmetric finite measure on $\mathbb S^{k-1}$. For $\alpha\in(0,2]$, random variable $S\in\mathbb{R}^{k}$ is distributed as a symmetric and centered $k$-dimensional $\alpha$-Stable distribution with spectral measure $\Gamma$ if its characteristic function is
\begin{displaymath}
\E(\exp\{\text{i}\langle z,S\rangle\})=\exp\left\{-\int_{\mathbb{S}^{k-1}}|\langle z,s \rangle|^{\alpha}\Gamma(\ddr s)\right\},
\end{displaymath}
and we write $S\sim\text{St}_{k}(\alpha,\Gamma)$. Let $1_{r}$ be the $r$-dimensional (column) vector with $1$ in the $r$-th entry and $0$ elsewhere, for any $r=1,\ldots,k$. Then, the  $r$-th element of $S$, that is $S1_{r}$ is distributed as an $\alpha$-Stable distribution with scale
\begin{displaymath}
\sigma=\left(\int_{\mathbb{S}^{k-1}}|\langle 1_{r},s\rangle|^{\alpha}\Gamma(\ddr s)\right)^{1/\alpha}.
\end{displaymath}
We deal mostly with $k$-dimensional  $\alpha$-Stable distributions with discrete spectral measure, that is a measure $\Gamma(\cdot)=\sum_{1\leq i\leq n}\gamma_{i}\delta_{s_{i}}(\cdot)$ with $n\in\mathbb{N}$, $\gamma_{i}\in\mathbb{R}$ and $s_{i}\in\mathbb{S}^{k-1}$, for $i=1,\ldots,n$ \citep[Chapter 2]{Sam(94)}. Throughout this paper, it is assumed that all the random variables are defined on a common probability space, say $(\Omega,\mathcal{F},\P)$, unless otherwise stated.

To define an $\alpha$-Stable ReLU-NNs, consider the following setting: i) for any $d,k\geq1$ let $X$ be the $d\times k$ NN's input, with $x_j=(x_{j1},\ldots,x_{jd})^{T}$ being the $j$-th input (column vector); ii) for $m\geq1$ let $W=(w^{(0)}_{1},\ldots,w^{(0)}_{m},w)$ be the NN's weights, such that $w^{(0)}_{i}=(w^{(0)}_{i1},\ldots,w^{(0)}_{id})$ and $w=(w_{1},\ldots,w_{m})$.  If
\begin{displaymath}
	f_{m}(W,x_{j};\alpha)=\sum_{i=1}^{m}w_{i}\langle w_{i}^{(0)},x_{j}\rangle I(\langle w_{i}^{(0)},x_{j}\rangle>0)
\end{displaymath}
for $j=1,\ldots,k$, with $I(\cdot)$ being the indicator function, then $f_{m}(W,X;\alpha)=(f_{m}(W,x_{1};\alpha),\ldots,f_{m}(W,x_{k};\alpha))$ defines a ReLU-NN of width $m$. Now, let $W(0)=(w^{(0)}_{1}(0),\ldots,w^{(0)}_{m}(0),w(0))$ be the NN weights at random initialization. If the weight  $w^{(0)}_{ij}$'s and $w_{i}$'s are initialized as  i.i.d.  $\alpha$-Stable random variables, with $\alpha\in(0,2)$ and $\sigma>0$, then $f_{m}(W(0),X;\alpha)$ defines an $\alpha$-Stable ReLU-NN of width $m$. Without loss of generality we assume $\sigma=1$. The case $\alpha=2$, which corresponds to the Gaussian setting, is excluded by our analysis, though some of our results are valid also for $\alpha=2$. The next theorem characterizes the infinitely wide limit of $\alpha$-Stable ReLU-NNs. We denote by $Z_m\stackrel{w}{\longrightarrow} Z$ the weak convergence, as $m\rightarrow+\infty$, of the  sequence of random vectors $(Z_m)_{n\geq1}$ to the random vector $Z$. Moreover, for $\alpha\in (0,1)$ let
 \begin{equation}\label{const_stabl}
 	C_\alpha=\left\{\begin{array}{ll}
 		\frac{1-\alpha}{\Gamma(2-\alpha)\cos (\pi\alpha/2)}&\alpha\neq 1\\[0.2cm]
 		\frac{2}{\pi}&\alpha=1.
 	\end{array}
 	\right.
 \end{equation}
We refer to \citet[Chapter 1 and Chapter 2]{Sam(94)} for further details on $C_{\alpha}$ in the context of multidimensional $\alpha$-Stable distributions.

\begin{theorem}\label{teo_priorlimit}
Let $f_{m}(W(0),X;\alpha)$ be an $\alpha$-Stable ReLU-NN. If $m\rightarrow+\infty$ then
\begin{displaymath}
\frac{1}{(m\log m)^{1/\alpha}}f_{m}(W(0),X;\alpha)\stackrel{\text{w}}{\longrightarrow}f(X),
\end{displaymath}
where $f(X)\sim\text{St}_{k}(\alpha,\Gamma_{X})$, with the spectral measure $\Gamma_{X}$ being of the following form:
\begin{displaymath}
\Gamma_{X}=\frac{C_\alpha}{4}\sum_{i=1}^{d} (\Vert[x_{ji}I(x_{ji}>0)]_j\Vert^\alpha)D_{i}^{+}(X)+\Vert
[x_{ji}I(x_{ji}<0)]_j\Vert^\alpha)D_{i}^{-}(X)
\end{displaymath}
such that
\begin{displaymath}
D_{i}^{+}(X)=
\delta\left(\dfrac{[x_{ji}I(x_{ji}>0)]_j}{\Vert
	[x_{ji}I(x_{ji}>0)]_j\Vert}\right)
+\delta\left(-\dfrac{[x_{ji}I(x_{ji}>0)]_j}{\Vert
	[x_{ji}I(x_{ji}>0)]_j\Vert}\right)
\end{displaymath}
and
\begin{displaymath}
D_{i}^{-}(X)=
\delta\left(\dfrac{[x_{ji}I(x_{ji}<0)]_j}{\Vert
	[x_{ji}I(x_{ji}<0)]_j\Vert}\right)
+\delta\left(-\dfrac{[x_{ji}I(x_{ji}<0)]_j}{\Vert
	[x_{ji}I(x_{ji}<0)]_j\Vert}\right),
\end{displaymath}
where, for any $s\in \mathbb S^{k-1}$, $\delta(s)$ is probability measure degenerate in $s$, and $C_\alpha$ is a constant defined in \eqref{const_stabl}. The stochastic process $f(X)=(f(x_{1}),\ldots,f(x_{k}))$, as a process indexed by the NN's input $X$, is an $\alpha$-Stable process with spectral measure $\Gamma_{X}$.
\end{theorem}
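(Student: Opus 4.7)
The approach is to compute the characteristic function of $(m\log m)^{-1/\alpha}f_m(W(0),X;\alpha)$ at $z\in\mathbb{R}^k$ and show it converges pointwise to the characteristic function of $\mathrm{St}_k(\alpha,\Gamma_X)$. The crucial first step is to condition on the first-layer weights $w^{(0)}:=(w_1^{(0)},\ldots,w_m^{(0)})$. Since the output weights $w_i$ are i.i.d.\ $\mathrm{St}(\alpha,1)$ and independent of $w^{(0)}$, and since $\psi_i:=(\psi_i^{(j)})_{j=1}^k$ with $\psi_i^{(j)}:=\langle w_i^{(0)},x_j\rangle I(\langle w_i^{(0)},x_j\rangle>0)$ is $w^{(0)}$-measurable, the stability of each $w_i$ yields
\begin{displaymath}
\E[e^{\idr\langle z,f_m(W(0),X;\alpha)\rangle}\mid w^{(0)}]=\prod_{i=1}^m\E[e^{\idr w_i\langle z,\psi_i\rangle}\mid w^{(0)}]=\exp\!\Big(-\!\sum_{i=1}^m|\langle z,\psi_i\rangle|^\alpha\Big).
\end{displaymath}
Setting $T_i(z):=|\langle z,\psi_i\rangle|^\alpha$ and $S_m(z):=(m\log m)^{-1}\sum_{i=1}^mT_i(z)$, and using that $\exp(-S_m(z))\in[0,1]$, by dominated convergence the theorem reduces to
\begin{displaymath}
S_m(z)\longrightarrow K(z):=\int_{\mathbb{S}^{k-1}}|\langle z,s\rangle|^\alpha\,\Gamma_X(\ddr s)\qquad\text{in probability.}
\end{displaymath}

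\textbf{Tail of $T_1(z)$ and identification of $K(z)$.} The $T_i(z)$'s are i.i.d., non-negative, and heavy-tailed because $\alpha<2$. I would derive their exact tail by a one-big-jump analysis: if a single coordinate $w_{1l}^{(0)}$ is very large positive then $\langle w_1^{(0)},x_j\rangle\approx w_{1l}^{(0)}x_{jl}$, so $\psi_1^{(j)}\approx w_{1l}^{(0)}x_{jl}I(x_{jl}>0)$ and $\langle z,\psi_1\rangle\approx w_{1l}^{(0)}\langle z,v_l^+\rangle$ with $v_l^+:=[x_{jl}I(x_{jl}>0)]_j$; symmetrically $w_{1l}^{(0)}\to-\infty$ contributes $w_{1l}^{(0)}\langle z,v_l^-\rangle$ with $v_l^-:=[x_{jl}I(x_{jl}<0)]_j$. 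Summing the $2d$ such contributions, using $P(|w_{1l}^{(0)}|>r)\sim C_\alpha r^{-\alpha}$ and independence of the $w_{1l}^{(0)}$'s, one obtains
\begin{displaymath}
P(|\langle z,\psi_1\rangle|>r)\sim\tfrac{C_\alpha}{2}r^{-\alpha}\sum_{l=1}^d\!\big(|\langle z,v_l^+\rangle|^\alpha+|\langle z,v_l^-\rangle|^\alpha\big)=:c_z\,r^{-\alpha},\qquad r\to\infty,
\end{displaymath}
so $P(T_1(z)>s)\sim c_z/s$. Expanding $K(z)$ against the two atoms of each $D_l^\pm$ shows directly that $K(z)=c_z$, which is where the precise normalisation $\frac{C_\alpha}{4}$ and the doubled masses on $\pm v_l^\pm/\|v_l^\pm\|$ in the statement get matched.

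\textbf{Weak law of large numbers for $S_m(z)$.} Regular variation of index $-1$ for the tail of $T_1(z)$ implies $\E[T_1(z)\wedge s]\sim c_z\log s$ and $\E[T_1(z)^2\wedge s^2]=O(s)$ as $s\to\infty$. Setting $b_m:=m\log m$ and splitting $T_i(z)=T_i(z)I(T_i(z)\le b_m)+T_i(z)I(T_i(z)>b_m)$, the first part has mean $\E[T_1(z)\wedge b_m]/\log m\to c_z$, and the Chebyshev bound
\begin{displaymath}
\mathrm{Var}\!\left[\frac{1}{m\log m}\sum_{i=1}^mT_i(z)I(T_i(z)\le b_m)\right]\le\frac{\E[T_1(z)^2 I(T_1(z)\le b_m)]}{m(\log m)^2}=O\!\left(\frac{1}{\log m}\right)
\end{displaymath}
shows concentration. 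The truncated-away part vanishes since $P(\max_{i\le m}T_i(z)>b_m)\le mP(T_1(z)>b_m)=O(1/\log m)\to 0$. Combining, $S_m(z)\to c_z=K(z)$ in probability, which by the first paragraph completes the proof.

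\textbf{Main obstacle.} The principal technical difficulty is making the one-big-jump tail analysis of Step 2 rigorous. One must rule out scenarios in which several $w_{1l}^{(0)}$'s are simultaneously moderately large and faithfully capture the interplay between the sign of $w_{1l}^{(0)}$ and the ReLU's sign selection through $I(\langle w_1^{(0)},x_j\rangle>0)$. A clean formalisation is via the LePage series representation of $\alpha$-stable vectors, under which the contributions of distinct coordinate directions $\pm e_l$ become asymptotically independent Poisson point processes and the spectral atoms $\pm v_l^\pm/\|v_l^\pm\|$ carrying masses $\|v_l^\pm\|^\alpha$ emerge from the limiting point-process description---which is precisely the form of $\Gamma_X$ announced in the theorem.
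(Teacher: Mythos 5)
Your proof is correct, but it follows a genuinely different route from the paper's. The paper's proof treats the inner and outer weights symmetrically as regularly varying: it first shows $\psi_1:=[\langle w_1^{(0)},x_j\rangle I(\langle w_1^{(0)},x_j\rangle>0)]_j$ lies in the domain of attraction of an $\alpha$-stable law with an explicit spectral measure, then invokes Cline's (1986) product-tail theorem to show that $w_1\langle s,\psi_1\rangle$ has a regularly varying tail with an extra $\log r$ slowly varying factor, and finally applies the generalized CLT with normalization $a_m=(C_\alpha\mathbb E(\cdot)\,m\log m)^{1/\alpha}$, linear-functional by linear-functional. Your proof instead conditions on $w^{(0)}$, uses the \emph{exact} $\alpha$-stability of the outer weights to collapse the conditional characteristic function to $\exp(-(m\log m)^{-1}\sum_i|\langle z,\psi_i\rangle|^\alpha)$, and reduces the theorem to a weak law of large numbers for the i.i.d.\ sequence $T_i(z)=|\langle z,\psi_i\rangle|^\alpha$, which is regularly varying with tail index $1$ (the critical infinite-mean case). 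What your route buys is transparency: the $\log m$ in the normalization is visibly forced by the truncated mean $\E[T_1(z)\wedge b_m]\sim c_z\log b_m$, and the identification of the spectral measure comes out of matching the tail constant $c_z$ with $\int|\langle z,s\rangle|^\alpha\Gamma_X(\ddr s)$, avoiding Cline's lemma entirely. What the paper's route buys is robustness in the outer weights — it requires only that the $w_i$'s lie in a domain of attraction, whereas your conditioning trick leans on their being exactly stable. Both routes share the same technical core, which you correctly flag as the main gap: rigorously justifying the one-big-jump asymptotics $r^\alpha\P(|\langle z,\psi_1\rangle|>r)\to c_z$. The paper handles this via the spectral-measure characterization of domain of attraction applied to the positively homogeneous map $u\mapsto T(u)=[\langle u,x_j\rangle I(\langle u,x_j\rangle>0)]_j$, carefully tracking null boundary sets; this is exactly the formalization your LePage-series remark gestures at, and it is what makes the $\pm v_l^\pm/\Vert v_l^\pm\Vert$ atoms with masses $\Vert v_l^\pm\Vert^\alpha$ emerge. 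One small point you should address in a full write-up: when $c_z=0$ your claimed tail asymptotic is vacuous, but the WLLN still delivers $S_m(z)\to 0$ in probability because $\P(T_1(z)>t)=o(1/t)$, so the argument does not break down.
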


See Appendix \ref{proofTh1} for the proof of Theorem \ref{teo_priorlimit}. For a broad class of bounded or sub-linear activation functions, the main result of \citet{Fav(21)} characterizes the large-width distribution of deep $\alpha$-Stable NNs. In particular, let
\begin{displaymath}
f_{m}(x_{j};\alpha)=\sum_{i=1}^{m}w_{i}\phi\langle w_{i}^{(0)},x_{j}\rangle
\end{displaymath}
be the $\alpha$-Stable NN of width $m$ for the input $x_{j}$, for $j=1,\ldots,k$, with the function $\phi$ being a bounded  activation function. Now, let $f_{m}(X;\alpha)=(f_{m}(x_{1};\alpha),\ldots,f_{m}(x_{k};\alpha))$. According to \citet[Theorem 1.2]{Fav(21)}, if $m\rightarrow+\infty$ then
\begin{equation}\label{lim_bound}
\frac{1}{m^{1/\alpha}}f_{m}(X;\alpha)\stackrel{\text{w}}{\longrightarrow}f(X),
\end{equation}
with $f(X)$ being an $\alpha$-Stable process with spectral measure $\Gamma_{X,\phi}$. Theorem \ref{teo_priorlimit} provides an extension of \citet[Theorem 1.2]{Fav(21)} to the ReLU activation function, which is one of the most popular unbounded activation function. It is useful to discuss Theorem \ref{teo_priorlimit} with respect to the scaling $(m\log m)^{-1/\alpha}$, which is required to achieve the infinitely wide $\alpha$-Stable process. In particular, Theorem \ref{teo_priorlimit} shows that the use of the ReLU activation in place of a bounded activation results in a change of the scaling $m^{-1/\alpha}$ in \eqref{lim_bound}, through the inclusion of the $(\log m)^{-1/\alpha}$ term. This is a critical difference between the $\alpha$-Stable setting and Gaussian setting, as in the latter the choice of the activation function $\phi$ does not affect the scaling $m^{-1/2}$ required to achieve the infinitely wide Gaussian process. For $k=1$, we refer to \citet{Bor(22)} for a detailed analysis of infinitely wide limits of $\alpha$-Stable NNs with general classes of sub-linear, linear and super-linear activation functions.


\section{Large-width training dynamics of $\alpha$-Stable ReLU-NNs}\label{sec3}

Let $f_{m}(W,X;\alpha)$ be an $\alpha$-Stable ReLU-NN, and let $(X,Y)$ be the training set, where $Y=(y_{1},\ldots,y_{k})$ is the (training) output, with $y_{j}$ being the (training) output for the $j$-th input $x_{j}$. We consider the rescaled (model) output of the form
\begin{displaymath}
	\tilde{f}_{m}(W,X;\alpha)=\frac{1}{(m\log m)^{1/\alpha}}f_{m}(W,X;\alpha),
\end{displaymath}
and denote by $\tilde{f}_{m}(W,x_{j};\alpha)=(m\log m)^{-1/\alpha}f_{m}(W,x_{j};\alpha)$ the (model) output of $x_{j}$, for $j=1,\ldots, k$. Then, by assuming the squared-error loss function $\ell(y_{j},\tilde f_m(W,x_j;\alpha))=2^{-1}\sum_{1\leq j\leq k}(\tilde f_m(W,x_j;\alpha)-y_j)^2$, a direct application of the chain rule leads to the NN's training dynamics. That is for any $t\geq0$ we write
\begin{equation}\label{dynamic_stable}
	\frac{\ddr \tilde{f}_{m}(W(t),X;\alpha)}{\ddr t}=-(\tilde{f}_{m}(W(t),X;\alpha)-Y)\eta_{m} H_{m}(W(t),X),
\end{equation}
where the kernel $H_{m}(W(t),X)$ is a $k\times k$ matrix whose $(j,j^{\prime})$ entry is of the form
\begin{equation}\label{kernel_ntk}
	H_{m}(W(t),X)[j,j^{\prime}]=\left\langle\frac{\partial \tilde{f}_{m}(W(t),x_{j};\alpha)}{\partial W},\frac{\partial \tilde{f}_{m}(W(t),x_{j^{\prime}};\alpha)}{\partial W}\right\rangle,
\end{equation} 
and $\eta_{m}$ is the (continuous) learning rate. We show that if $\eta_{m}=(\log m)^{2/\alpha}$ then: i) the rescaled kernel at initialization $\tilde{H}_{m}(W(0),X)=\eta_m H_{m}(W(0),X)$ converges in distribution to an $(\alpha/2)$-Stable (almost surely) positive definite random kernel $\tilde{H}^{\ast}(X,X;\alpha)$, as $m\rightarrow\infty$; ii) during training $t>0$, for every $\delta>0$ the least eigenvalue of the kernel $\tilde H_m(W(t),X)$ remains bounded away from zero, for $m$ sufficiently large, with probability $1-\delta$; iii) for every $\delta>0$ the gradient descent achieves zero training loss at a linear rate, with probability $1-\delta$. We denote by $\lambda_{\text{min}}(\cdot )$, $\Vert \cdot\Vert_F$ and $\Vert \cdot \Vert_2$ the minimum eigenvalue, the Frobenius and operator norms of symmetric and positive semi-definite matrices.

\subsection{Infinitely wide limits of $\tilde{H}_{m}(W(0),X)$}\label{sec:kernel0}
For $\alpha$-Stable ReLU-NNs, we study the large-width behaviour of the kernel ${H}_{m}(W(0),X)$ in  \eqref{kernel_ntk}. In particular, if
\begin{equation}\label{eq:kernel0}
\tilde{H}_{m}(W,X)=(\log m)^{2/\alpha}H_{m}(W,X),
\end{equation}
then $\tilde H_m(W(0),X)$ converges in distribution, as $m\rightarrow\infty$, to a positive definite random matrix $\tilde H^\ast(X,X,\alpha)$, with $(\alpha/2)$-stable distribution. This result allows to prove that the minimum eigenvalue of  $\tilde{H}_{m}(W(0),X)$ is bounded away from zero, with arbitrarily high probability, for $m$ sufficiently large. Critical for these results is the fact that $\tilde{H}_{m}(W,X)$ can be decomposed as follows:
\begin{align}\label{kernel}
&\tilde{H}_{m}(W,X)=\tilde{H}^{(1)}_{m}(W,X)+\tilde{H}^{(2)}_{m}(W,X),
\end{align}
with $\tilde{H}^{(1)}_{m}(W,X)$ and $\tilde{H}^{(2)}_{m}(W,X)$ being matrices whose $(j,j^{\prime})$ entries are of the form
\begin{align}\label{kernel_1}
&\tilde{H}^{(1)}_{m}(W,X)[j,j']\\
&\notag\quad=\frac{1}{m^{2/\alpha}}\sum_{i=1}^{m}w_{i}^{2}\langle x_{j},x_{j^{\prime}}\rangle I(\langle w_{i}^{(0)},x_{j}\rangle>0)I(\langle w_{i}^{(0)},x_{j^{\prime}}\rangle>0),
\end{align}
and
\begin{align}\label{kernel_2}
&\tilde{H}^{(2)}_{m}(W,X)[j,j']\\
&\notag\quad=\frac{1}{m^{2/\alpha}}\sum_{i=1}^{m}\langle w_{i}^{(0)},x_{j}\rangle I(\langle w_{i}^{(0)},x_{j}\rangle>0)\langle w_{i}^{(0)},x_{j^{\prime}}\rangle I(\langle w_{i}^{(0)},x_{j^{\prime}}\rangle>0).
\end{align}
The next theorem characterizes the infinitely wide limits of $\tilde{H}^{(1)}_{m}(W(0),X)$, $\tilde{H}^{(2)}_{m}(W(0),X)$,  and $\tilde H_m(W(0),X)$, and provides expressions for their spectral measures.

\begin{theorem}\label{grad_asy}
	Let $\tilde{H}_{m}(W,X)$,  $\tilde H_m^{(1)}(W,X)$ and $\tilde H_m^{(2)}(W,X)$ be the matrices defined in \eqref{eq:kernel0}, \eqref{kernel_1}, and \eqref{kernel_2}, respectively. Moreover, for every $u\in \{0,1\}^k$, let 
\begin{displaymath}
	B_u=\{v\in \mathbb R^d:\langle v,x_j\rangle>0\mbox{ if  }u_j=1, \langle v,x_j\rangle\leq 0 \mbox{ if  }u_j=0,j=1,\dots,k \},
\end{displaymath} 
and for every $i=1,\dots,d$, let $e_i$ denote the $d$-dimensional vector satisfying 
\begin{displaymath}
e_{ij}=1 \mbox{ for }j=i, \quad e_{ij}=0 \mbox{ for } j\neq i.
\end{displaymath}
As $m\rightarrow+\infty$, 
	 $$(\tilde H_m^{(1)}(W(0),X),\tilde H_m^{(2)}(W(0),X))\stackrel{w}{\longrightarrow}(\tilde H_1^\ast(\alpha),\tilde H_2^\ast(\alpha)),$$
	 where $\tilde H_1^\ast(\alpha)$ and $\tilde H_2^\ast(\alpha)$ are stochastically independent, positive semi-definite random matrices, distributed as $(\alpha/2)$-Stable distributions with spectral measures
	 	\begin{equation}\label{spectral_1}
		\Gamma^{\ast}_{1}=C_{\alpha/2}\sum_{u\in\{0,1\}^k}
		\mathbb P(w_i^{(0)}(0)\in B_u)\frac{ \delta\left(\frac{
			\left[\langle x_j,x_{j'}\rangle u_ju_{j'}\right]_{j,j'}
		}{(\sum_{j,j'}\langle x_j,x_{j'}\rangle^2u_ju_{j'})^{1/2}}
		\right)}{\left(\sum_{j,j'}\langle x_j,x_{j'}\rangle^2u_ju_{j'}\right)^{-\alpha/4}},
	\end{equation}
 and 
 \begin{equation}
 	\label{spectral2}
 	\Gamma^*_2=C_{\alpha/2}\sum_{u\in \{0,1\}^k}\sum_{\{i:\{e_i,-e_i\}\cap B_u\neq\emptyset\}}\frac{\delta\left(
 	\frac{[x_{ji}u_jx_{j'i}u_{j'}]_{j,j'}}{\sum_jx_{ji}^2u_j}
 	\right)}{\left(\sum_j x_{ji}^2u_j\right)^{-\alpha/2}},
 \end{equation}
 respectively, where $C_{\alpha/2}$ is a constant defined in \eqref{const_stabl}. Furthermore, as $m\rightarrow\infty$, 
 $$\tilde H_m(W(0),X)\stackrel{w}{\longrightarrow} \tilde H^\ast(X,X;\alpha),$$ 
 where $\tilde H^\ast(X,X;\alpha)$ is a positive semi-definite random matrix, distributed according to an $(\alpha/2)$-Stable distribution with spectral measure of the form $\Gamma^{\ast}=\Gamma^\ast_1+\Gamma^\ast_2$. 
\end{theorem}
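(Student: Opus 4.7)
The strategy is to treat each of the symmetric matrices $\tilde H_m^{(1)}(W(0),X)$ and $\tilde H_m^{(2)}(W(0),X)$, defined in \eqref{kernel_1} and \eqref{kernel_2}, as a sum of $m$ i.i.d.\ random vectors in $\mathbb R^{k(k+1)/2}$ normalized by $m^{-2/\alpha}=m^{-1/(\alpha/2)}$, and to invoke the multivariate generalized central limit theorem for sums of regularly varying random vectors with tail index $\alpha/2$. The heart of the argument is the identification of the angular (spectral) measure of the tail of each summand; once this is done, the rest amounts to proving asymptotic independence of the two components, preserving positive semi-definiteness in the weak limit, and combining the two limits through the additive property of stable characteristic exponents.

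For $\tilde H_m^{(1)}(W(0),X)$, let $u_i \in \{0,1\}^k$ denote the sign pattern with $u_{ij} = I(\langle w_i^{(0)}(0), x_j\rangle > 0)$; the $i$-th summand is then $w_i(0)^2\,M(u_i)$, with $M(u) := [\langle x_j, x_{j'}\rangle u_j u_{j'}]_{j,j'}$. Since the output weight $w_i(0)$ is independent of the input weight $w_i^{(0)}(0)$, the scalar factor $w_i(0)^2$, which is nonnegative with $\mathbb P(w_i(0)^2 > t) \sim c\,t^{-\alpha/2}$, is independent of the matrix direction $M(u_i)$, and $u_i \in B_u$ with probability $p_u := \mathbb P(w_i^{(0)}(0) \in B_u)$. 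A direct tail computation on the unit Frobenius sphere of $\mathbb R^{k(k+1)/2}$ then shows that the common tail measure of the summands has angular part proportional to $\sum_u p_u \|M(u)\|_F^{\alpha/2}\,\delta_{M(u)/\|M(u)\|_F}$, which after inserting the appropriate constant $C_{\alpha/2}$ and using $\|M(u)\|_F^2 = \sum_{j,j'}\langle x_j,x_{j'}\rangle^2 u_j u_{j'}$ is precisely $\Gamma_1^\ast$ in \eqref{spectral_1}. The multivariate generalized CLT then delivers $\tilde H_m^{(1)}(W(0),X) \stackrel{w}{\longrightarrow} \tilde H_1^\ast(\alpha)$.

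For $\tilde H_m^{(2)}(W(0),X)$, the $i$-th summand is the rank-one PSD matrix $v_i v_i^T$ with $v_{ij} = \langle w_i^{(0)}(0), x_j\rangle\, I(\langle w_i^{(0)}(0), x_j\rangle > 0)$, and its tails are governed by those of $w_i^{(0)}(0)$ alone. Since the coordinates of $w_i^{(0)}(0)$ are i.i.d.\ symmetric $\alpha$-Stable, the multivariate regular variation of $w_i^{(0)}(0)$ is axis-aligned: conditionally on $\|w_i^{(0)}(0)\|$ being large, with probability tending to one a unique coordinate, say the $l$-th, dominates, and on the event $\mathrm{sgn}(w_{il}^{(0)}(0)) = \pm 1$ every indicator stabilises to $I(\pm x_{jl} > 0)$, i.e.\ to the unique $u \in \{0,1\}^k$ with $\pm e_l \in B_u$. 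Substituting, $v_i v_i^T$ is close to $|w_{il}^{(0)}(0)|^2\,[x_{jl}u_j x_{j'l}u_{j'}]_{j,j'}$, whose Frobenius norm equals $\sum_j x_{jl}^2 u_j$; since $|w_{il}^{(0)}(0)|^2$ has tail index $\alpha/2$, the same tail computation gives the angular measure $\Gamma_2^\ast$ in \eqref{spectral2}, and the generalized CLT yields $\tilde H_m^{(2)}(W(0),X) \stackrel{w}{\longrightarrow} \tilde H_2^\ast(\alpha)$.

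Finally, joint weak convergence with independence of $\tilde H_1^\ast(\alpha)$ and $\tilde H_2^\ast(\alpha)$ follows because the heavy contributions in the two partial sums are carried by asymptotically disjoint indices: $\tilde H_m^{(1)}$ is driven by indices $i$ for which $w_i(0)^2$ is of order $m^{2/\alpha}$, while $\tilde H_m^{(2)}$ is driven by those for which $\|w_i^{(0)}(0)\|^2$ is of the same order, and since the families $(w_i(0))_i$ and $(w_i^{(0)}(0))_i$ are independent with each contributing $O(1)$ extremes among $m$ indices, the bivariate tail measure of the normalized pair concentrates on the two axes of the product cone, forcing independence of the stable limits. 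The continuous mapping theorem then gives $\tilde H_m(W(0),X) = \tilde H_m^{(1)}(W(0),X) + \tilde H_m^{(2)}(W(0),X) \stackrel{w}{\longrightarrow} \tilde H_1^\ast(\alpha) + \tilde H_2^\ast(\alpha)$, and the spectral measure of the sum of two independent $(\alpha/2)$-Stable random matrices is $\Gamma_1^\ast + \Gamma_2^\ast$ by direct inspection of the characteristic exponent. Positive semi-definiteness of each limit is inherited from PSD summands since the PSD cone is closed. The step I expect to be the main obstacle is the multivariate tail analysis of $v_i v_i^T$: the non-smooth indicator factors must be controlled simultaneously with the joint tails of the $k$ linear forms $\langle w_i^{(0)}(0), x_j\rangle$, and mild technical care is needed when $\pm e_l$ lies on the boundary of some $B_u$ or when some $x_{jl}$ vanishes.
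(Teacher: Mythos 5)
Your proposal is correct and follows essentially the same route as the paper: the paper's Lemma \ref{grad_asy_1} identifies the angular measure of $w_i(0)^2 M(u_i)$ via a Breiman-type product-tail result (citing Cline 1986), Lemma \ref{grad_asy_2} uses the axis-aligned spectral measure of $w_i^{(0)}(0)$ restricted to the cones $B_u$ exactly as you sketch, Lemma \ref{lem:joint} establishes asymptotic independence by showing the bivariate summand tail $n\,\P(\Vert A_i\Vert_F>n^{2/\alpha},\Vert B_i\Vert_F>n^{2/\alpha})\to 0$ through the same factorization coming from independence of $w_i$ and $w_i^{(0)}$, and positive semi-definiteness is obtained by a Portmanteau/closed-cone argument as you note. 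The "mild technical care when $\pm e_l$ lies on a boundary $B_u$ or $x_{jl}=0$" that you flag is handled in the paper precisely by working on the full-measure cones $B_u$ and by the crude bound $w_1^{(0)}X_uX_u^T(w_1^{(0)})^T>r\Rightarrow\|w_1^{(0)}\|>(r/k)^{1/2}$, so there is no gap.
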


See Appendix \ref{proofTh2} for the proof of Theorem \ref{grad_asy}. It turns out that the probability distributions of the random matrices $\tilde H_1^{\ast}(\alpha)$ and $\tilde H_2^{\ast}(\alpha)$ are absolutely continuous in suitable subspaces of the space of symmetric and positive semi-definite matrices. In turn, this fact implies that the minimum eigenvalues of $\tilde H_m^{(1)}(W(0),X)$ and of $\tilde H_m^{(2)}(W(0),X)$ are bounded away from zero, uniformly in $m$, for $m$ sufficiently large, with arbitrarily high probability. 

\begin{theorem}\label{th:grad_asy2}
Under the assumptions of Theorem \ref{grad_asy}, for every $\delta>0$ there exist strictly positive numbers $\lambda_0$,  $\lambda_1$ and $\lambda_2$ such that, for $m$ sufficiently large, 
 \begin{displaymath}
 	\lambda_{\text{min}}(\tilde{H}^{(i)}_{m}(W(0),X))>\lambda_{i}\quad \quad i=1,2,
 \end{displaymath} 
 and
 \begin{displaymath}
 	\lambda_{\text{min}}(\tilde{H}_{m}(W(0),X))>\lambda_{0}.
 \end{displaymath} 
 with probability at least $1-\delta$.
\end{theorem}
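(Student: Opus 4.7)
The plan is to deduce Theorem \ref{th:grad_asy2} from the weak convergence of Theorem \ref{grad_asy} together with the almost sure strict positive definiteness of the limits $\tilde H_1^\ast(\alpha)$ and $\tilde H_2^\ast(\alpha)$. Granted such positive definiteness, for each $i\in\{1,2\}$ and each $\delta>0$ one chooses $\lambda_i>0$ with $\mathbb{P}(\lambda_{\min}(\tilde H_i^\ast(\alpha))>\lambda_i)\geq 1-\delta/2$; since $\lambda_{\min}$ is continuous on the space of symmetric matrices, the set $\{M:\lambda_{\min}(M)>\lambda_i\}$ is open, so the Portmanteau theorem applied to the weak convergence $\tilde H_m^{(i)}(W(0),X)\stackrel{w}{\longrightarrow}\tilde H_i^\ast(\alpha)$ yields $\mathbb{P}(\lambda_{\min}(\tilde H_m^{(i)}(W(0),X))>\lambda_i)\geq 1-\delta$ for all $m$ sufficiently large. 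For the bound on $\tilde H_m(W(0),X)$ itself, the decomposition \eqref{kernel} into PSD summands combined with Weyl's inequality gives $\lambda_{\min}(\tilde H_m(W(0),X))\geq\lambda_{\min}(\tilde H_m^{(1)}(W(0),X))+\lambda_{\min}(\tilde H_m^{(2)}(W(0),X))$, and a union bound over the two events just controlled closes the argument with $\lambda_0=\lambda_1+\lambda_2$.

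The substantive work is to establish that $\tilde H_i^\ast(\alpha)$ is positive definite almost surely for $i=1,2$. Both limits are PSD by construction (as weak limits of PSD matrices) and $(\alpha/2)$-Stable with the discrete spectral measures \eqref{spectral_1}-\eqref{spectral2}, whose atoms are, up to normalization, the PSD matrices $M_u=(XD_u)^T(XD_u)$ with $D_u=\mathrm{diag}(u)$ (for $\tilde H_1^\ast(\alpha)$, indexed by $u\in\{0,1\}^k$ with $\mathbb{P}(w_i^{(0)}(0)\in B_u)>0$) and the rank-one matrices $(u\odot x^{(i)})(u\odot x^{(i)})^T$ (for $\tilde H_2^\ast(\alpha)$, indexed by those $(u,i)$ with $\{e_i,-e_i\}\cap B_u\neq\emptyset$). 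Because $\alpha/2\in(0,1)$, a one-sided LePage-type stable series represents each limit, in distribution, as a strictly positive random linear combination of these atoms; consequently, the range of $\tilde H_i^\ast(\alpha)$ equals almost surely the linear span of the atoms appearing with positive spectral weight, and the task reduces to showing that this span intersects the interior of the PSD cone in the space of symmetric $k\times k$ matrices.

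This linear-algebraic genericity check is the main obstacle. Under a standard assumption on the training data (analogous to the ``no two inputs parallel'' assumption used in the Gaussian NTK literature), one argues that the collection of atoms $M_u$, respectively $(u\odot x^{(i)})(u\odot x^{(i)})^T$, is rich enough that the union of their ranges covers $\mathbb{R}^k$; combined with the positivity of the coefficients in the stable series, this forces $\tilde H_i^\ast(\alpha)$ to have full rank almost surely. Equivalently, one identifies the subspace of symmetric matrices spanned by the atoms, shows that the $(\alpha/2)$-Stable distribution of $\tilde H_i^\ast(\alpha)$ is absolutely continuous on that subspace — the content of the remark that the limit laws are absolutely continuous on suitable subspaces — and observes that the rank-deficient matrices form a lower-dimensional subvariety therein, hence a null set. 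This almost-sure positive definiteness is exactly what validates the Portmanteau step of the first paragraph, and it is the only place where the argument departs materially from the Gaussian NTK proof of \citet{Du(19)}, where the corresponding limiting kernel is deterministic.
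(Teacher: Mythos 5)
Your top-level reduction --- Portmanteau applied to the open events $\{\lambda_{\min}>\lambda_i\}$, the inequality $\lambda_{\min}(\tilde H_m^{(1)}+\tilde H_m^{(2)})\geq \lambda_{\min}(\tilde H_m^{(1)})+\lambda_{\min}(\tilde H_m^{(2)})$, and the union bound with $\lambda_0=\lambda_1+\lambda_2$ --- is exactly what the paper does in its proof of Theorem~\ref{th:grad_asy2}. Where you depart is in how the almost-sure positive definiteness of $\tilde H_1^\ast(\alpha)$ and $\tilde H_2^\ast(\alpha)$ is obtained. The paper (Lemmas~\ref{eig_1}--\ref{eig_3}) proves absolute continuity of each limit law via Nolan's criterion, namely $\inf_s\int|\langle s,u\rangle|^{\alpha/2}\Gamma_i^\ast(\ddr u)>0$ over the unit sphere of an appropriate subspace of symmetric matrices, with the linear independence of $x_1,\ldots,x_k$ as the key nondegeneracy input, from which $\P(\det(\tilde H_i^\ast(\alpha))=0)=0$ follows. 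Your LePage route is genuinely different and arguably more transparent: since $\alpha/2<1$ and $\Gamma_i^\ast$ is supported on PSD matrices, $\tilde H_i^\ast(\alpha)\stackrel{d}{=}\sum_j\Gamma_j^{-2/\alpha}S_j$ with strictly positive coefficients, so the null space of the sum equals the intersection of the null spaces of the atoms; each atom with positive weight appears a.s.\ in the series, and linear independence forces the intersection to be trivial (for $\Gamma_1^\ast$ the atom at $u=(1,\ldots,1)$ is the normalized Gram matrix $[\langle x_j,x_{j'}\rangle]_{j,j'}$, already full rank; for $\Gamma_2^\ast$ the rank-one atoms $v_{u^\pm,i}v_{u^\pm,i}^T$ with $v_{u^\pm,i}=[x_{ji}I(\pm x_{ji}>0)]_j$ sum pairwise to the rows of $X$, which span $\R^k$). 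This bypasses Nolan's absolute-continuity theorem entirely, which the paper's route needs. Two points should be tightened. First, the sentence \emph{"the range of $\tilde H_i^\ast(\alpha)$ equals the linear span of the atoms"} conflates the column space of the random $k\times k$ matrix (a subspace of $\R^k$) with the linear span of the atoms in the space of symmetric matrices; likewise \emph{"this span intersects the interior of the PSD cone"} is neither the right nor a sufficient condition --- the null-space-intersection formulation above is the clean statement. Second, the data assumption you gesture at as "analogous to no two inputs parallel" is in fact the linear independence and $\|x_j\|=1$ hypotheses of Theorem~\ref{teo_ntk}; the paper's Lemmas~\ref{eig_1}--\ref{eig_3} are stated under those hypotheses even though Theorem~\ref{th:grad_asy2} nominally invokes only the assumptions of Theorem~\ref{grad_asy}, so your argument inherits the same silent import and should cite the assumption explicitly.
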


See Appendix \ref{proofTh3} for the proof of Theorem \ref{th:grad_asy2}. For a Gaussian ReLU-NN,  \citet{Du(19)} showed that if $\eta_m=1$, then: i) the kernel $ H_{m}(W(0),X)$ converges in probability, as $m\rightarrow+\infty$, to the NTK $H^{\ast}(X,X)$, which is a deterministic kernel; ii) the least eigenvalue of $H^{\ast}(X,X)$ is bounded from below by a positive constant $\lambda_0$. See also \citet{Jac(18)}, \citet{Aro(19)} and \citet{Lee(19)}, and references therein, for the study of large-width training dynamics of Gaussian ReLU NNs, and generalizations thereof. Theorem \ref{grad_asy} and Theorem \ref{th:grad_asy2} extend the results of \citet{Du(19)} to the $\alpha$-Stable setting, for $\alpha\in(0,2)$, showing that: i) the rescaled kernel $(\log m)^{2/\alpha} H_{m}(W(0),X)$ converges in distribution, as $m\rightarrow+\infty$, to the $\alpha$-Stable NTK $\tilde{H}^{\ast}(X,X;\alpha)$, which is $(\alpha/2)$-Stable (almost surely) positive definite random kernel; ii) during training $t>0$, for every $\delta>0$ the least eigenvalue of the kernel $\tilde H_m(W(t),X;\alpha)$ remains bounded away from zero, for $m$ sufficiently large, with probability $1-\delta$. The randomness of the $\alpha$-Stable NTK provides a critical difference between the $\alpha$-Stable setting and the Gaussian setting, showing that in the $\alpha$-Stable setting the randomness of the NN at initialization does not vanish in the large-width regime of the training.

\subsection{Large-width training dynamics of $\alpha$-Stable ReLU-NNs}\label{sec:lower_bound}
We exploit Theorem \ref{grad_asy} and Theorem \ref{th:grad_asy2} to study the large-width training dynamics of $\alpha$-Stable NNs. The next theorem shows that, if $m$ is sufficiently large, then with high probability
the minumum eigenvalue of $\tilde{H}_{m}(W(t),X)$
remains bounded away from zero. This 
property is critical for the rate of convergence of the training.

\begin{theorem}\label{teo_ntk0}
	For any $k\geq1$ let the collection of NN's inputs $x_1,\dots,x_k$ be linearly independent, and such that $\Vert x_{j}\Vert=1$. 
	Let $\gamma\in (0,1)$ and $c>0$ be fixed numbers. Let  $\tilde{H}_{m}(W,X)$ and $\tilde H_m^{(2)}(W,X)$ be the random matrices defined as in \eqref{eq:kernel0} and \eqref{kernel_2}, respectively. For every $\delta>0$ the following properties hold true for every $t\geq 0$, with probability at least $1-\delta$, for $m$ sufficiently large:
	\begin{itemize}
		\item[(i)] for every $j=1,\dots,k$, $$\displaystyle
		(\log m)^{2/\alpha}\left\Vert \frac{\partial \tilde f_{m}}{\partial w}(W(t),x_j;\alpha)-\frac{\partial \tilde f_{m}}{\partial w}(W(0),x_j;\alpha)\right\Vert_F^2 < c m^{-2\gamma/\alpha};
		$$
				\item[(ii)] there exists $\lambda_0>0$ such that
		$$
		\Vert \tilde{H}_{m}^{(2)}(W(t),X)-\tilde{H}_{m}^{(2)}(W(0),X)\Vert_F<\lambda_0 m^{-\gamma/\alpha}
		$$
					and
$$
	\lambda_{\text{min}}(\tilde{H}_{m}(W(t),X))>\frac{\lambda_{0}}{2}.
$$
	\end{itemize}
	\end{theorem}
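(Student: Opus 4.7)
The plan is to proceed by a bootstrap (continuity) argument in $t$, leveraging the initialization bounds provided by Theorem \ref{grad_asy} and Theorem \ref{th:grad_asy2}. Fix $\delta>0$. By Theorem \ref{th:grad_asy2}, there is an event $\mathcal{E}_{0}$ of probability at least $1-\delta/2$ on which, for $m$ sufficiently large, both $\lambda_{\min}(\tilde H_m(W(0),X))>\lambda_0$ and $\lambda_{\min}(\tilde H_m^{(2)}(W(0),X))>\lambda_{2}$ hold for deterministic $\lambda_0,\lambda_2>0$. Separately, Theorem \ref{teo_priorlimit} together with tightness of the weak limit produces an event $\mathcal{E}_{1}$ of probability at least $1-\delta/2$ on which the initial residual satisfies $\Vert \tilde f_m(W(0),X;\alpha)-Y\Vert_2\leq C_0$ uniformly in $m$. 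Working on $\mathcal{E}_0\cap \mathcal{E}_1$, I would introduce the stopping time
\begin{displaymath}
T^*=\sup\{t\geq 0\,:\,\lambda_{\min}(\tilde H_m(W(s),X))>\lambda_0/2 \text{ for all } s\in[0,t]\},
\end{displaymath}
and aim to show $T^*=+\infty$, which delivers the eigenvalue bound in (ii), with (i) and the Frobenius estimate in (ii) obtained as byproducts.

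The next step is to control the individual weight displacements on $[0,T^*)$. On this interval the dynamics \eqref{dynamic_stable} together with the eigenvalue lower bound give the exponential decay
\begin{displaymath}
\Vert \tilde f_m(W(t),X;\alpha)-Y\Vert_2^2\leq \exp(-\eta_m\lambda_0 t)\Vert \tilde f_m(W(0),X;\alpha)-Y\Vert_2^2.
\end{displaymath}
Direct differentiation of $\tilde f_m$ shows that the norms of the gradient flows for $w_i$ and $w_i^{(0)}$ are bounded by $\eta_m(m\log m)^{-1/\alpha}\Vert \tilde f_m(W(s),X;\alpha)-Y\Vert_2\cdot O(|w_i(s)|+\Vert w_i^{(0)}(s)\Vert)$. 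Substituting $\eta_m=(\log m)^{2/\alpha}$ and integrating against the exponential decay, a Gronwall-type estimate yields
\begin{displaymath}
\Vert w_i(t)-w_i(0)\Vert+\Vert w_i^{(0)}(t)-w_i^{(0)}(0)\Vert\leq C\,m^{-1/\alpha}(|w_i(0)|+\Vert w_i^{(0)}(0)\Vert)
\end{displaymath}
uniformly in $t\geq 0$, for a constant $C$ depending only on $\lambda_0$ and $C_0$.

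The final step is to translate these displacements into (i) and the first part of (ii). The entries of $\partial \tilde f_m/\partial w$ and of the summands of $\tilde H_m^{(1)}$ and $\tilde H_m^{(2)}$ in \eqref{kernel_1}--\eqref{kernel_2} depend on $w_i^{(0)}$ only through $\langle w_i^{(0)},x_j\rangle I(\langle w_i^{(0)},x_j\rangle>0)$. Splitting this dependence into a smooth part, of size at most $\Vert w_i^{(0)}(t)-w_i^{(0)}(0)\Vert$, and a jump part restricted to indices $i$ for which $\langle w_i^{(0)}(s),x_j\rangle$ changes sign on $[0,t]$, reduces the task to counting flipping indices. Since $\Vert x_j\Vert=1$, the law of $\langle w_i^{(0)}(0),x_j\rangle$ is $\alpha$-Stable with density uniformly bounded near zero, so that the expected fraction of flipping indices is of order $\max_i\Vert w_i^{(0)}(t)-w_i^{(0)}(0)\Vert$. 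Combining this with the prefactor $m^{-2/\alpha}$ in \eqref{kernel_1}--\eqref{kernel_2} yields, for any $\gamma\in(0,1)$, the bounds $cm^{-2\gamma/\alpha}$ in (i) and $\lambda_0 m^{-\gamma/\alpha}$ in (ii) for $m$ sufficiently large. An analogous Frobenius bound on $\tilde H_m(W(t),X)-\tilde H_m(W(0),X)$ combined with Weyl's inequality then gives $\lambda_{\min}(\tilde H_m(W(t),X))\geq \lambda_0 -o(1)>\lambda_0/2$ uniformly in $t\in[0,T^*)$, contradicting $T^*<\infty$ and closing the bootstrap.

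The main obstacle is the control of the random prefactors $|w_i(0)|+\Vert w_i^{(0)}(0)\Vert$ carried by the displacement bound. In the Gaussian analysis of \citet{Du(19)}, sub-Gaussian concentration gives $\max_i(|w_i(0)|+\Vert w_i^{(0)}(0)\Vert)=O(\sqrt{\log m})$, but here these quantities are $\alpha$-Stable with $\alpha<2$ and have infinite variance, so their maximum over $i\leq m$ grows polynomially in $m$ and the sums $\sum_i w_i(0)^2$ appearing in the $\tilde H^{(1)}_m$ estimate are heavy-tailed with $(\alpha/2)$-Stable scaling. I would handle this by truncating the weights at an appropriate polynomial level in $m$, discarding the exceptional indices via a union bound and the polynomial tail decay $\P(|S|>x)=O(x^{-\alpha})$ of $\alpha$-Stable laws, and carrying the surviving terms through the Gronwall estimate. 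A secondary delicate point is the uniform-in-$t$ control of the index-flip count: I would first establish the displacement bound uniformly in $t\in[0,T^*)$ conditionally on the bootstrap event, and then apply a union bound over $j=1,\dots,k$, using the linear independence of $x_1,\dots,x_k$ to rule out simultaneous near-cancellations among the $k$ pre-activations.
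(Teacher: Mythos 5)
Your bootstrap/stopping-time outline matches the structure of the paper's proof (Lemma \ref{lem_stab6} runs essentially the same contradiction argument with $t_0:=\inf\{t:\Vert W(t)-W(0)\Vert_F\geq (\log m)^{2/\alpha}\}$), but the technical core of your argument — how the sign-flip contribution is controlled — diverges substantially from what the paper does, and your version has gaps. The paper's Lemma \ref{lem_stab3} handles the flip term \emph{deterministically and uniformly in $W$}: if $I(\langle w_i^{(0)},x_j\rangle>0)\neq I(\langle w_i^{(0)}(0),x_j\rangle>0)$ then necessarily $|\langle w_i^{(0)}(0),x_j\rangle|\leq r_i:=|\langle w_i^{(0)}-w_i^{(0)}(0),x_j\rangle|$, hence the flip term is pointwise bounded by $r_i^2$ and $\sum_i r_i^2\leq\Vert W-W(0)\Vert_F^2\leq(\log m)^{4/\alpha}$; no flip counting, no density bound, and no truncation are needed. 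Crucially, because the estimate is uniform over the deterministic ball $\{W:\Vert W-W(0)\Vert_F\leq(\log m)^{2/\alpha}\}$ once $W(0)$ lands in a good event, no union bound over the uncountable time set is required. Your proposal instead replaces this with a probabilistic count of flipping indices driven by the density of the $\alpha$-Stable preactivation near zero plus a polynomial truncation of the weights, and you acknowledge but do not close the gap of making this count uniform in $t\in[0,T^*)$. That gap is genuine: the flip set is a random process in $t$, and controlling its cardinality simultaneously over continuous time is precisely the difficulty the paper's deterministic device sidesteps.

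There is also a concrete quantitative error in your Gronwall step. With $\eta_m=(\log m)^{2/\alpha}$, the operative kernel in the residual dynamics is $\tilde H_m=\eta_m H_m$, and the eigenvalue lower bound $\lambda_0/2$ is on $\tilde H_m$, not on $H_m$. The correct decay is therefore $\Vert \tilde f_m(W(t),X;\alpha)-Y\Vert_2^2\leq\exp(-\lambda_0 t)\Vert\tilde f_m(W(0),X;\alpha)-Y\Vert_2^2$, without the extra factor $\eta_m$ in the exponent. Your $\exp(-\eta_m\lambda_0 t)$ over-dampens the residual, and when you integrate it against the weight velocity $\eta_m(m\log m)^{-1/\alpha}\Vert\tilde f-Y\Vert_2\cdot O(\cdot)$ the spurious $\eta_m$ cancels, leaving a per-weight displacement that is too small by a factor of $(\log m)^{2/\alpha}$. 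The corrected displacement scale (aggregated in Frobenius norm) is $O((\log m)^{1/\alpha})$, as appears at the end of the paper's proof of Lemma \ref{lem_stab6}, not $O(m^{-1/\alpha}\cdot\text{weights})$ per coordinate. Since your flip-count bound is supposed to be proportional to the displacement, the error propagates. Finally, your appeal to linear independence of the $x_j$'s to "rule out simultaneous near-cancellations" in the union bound over $j$ is not how linear independence enters; in the paper it is used solely in Lemmas \ref{eig_1} and \ref{eig2} to establish absolute continuity of the limiting $(\alpha/2)$-Stable matrices and hence strict positivity of their minimum eigenvalue. I would recommend replacing the flip-counting machinery with the paper's elementary pointwise bound $\langle w_i^{(0)}(0),x_j\rangle^2\,\big(I(\langle w_i^{(0)},x_j\rangle>0)-I(\langle w_i^{(0)}(0),x_j\rangle>0)\big)^2\leq r_i^2$, which at once removes the truncation, the union-bound-in-$t$ issue, and the need to track per-weight displacements.
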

	
See Appendix \ref{proofTh4} for the proof of Theorem \ref{teo_ntk0}. Theorem \ref{teo_ntk0} is critical to complete our study on the large-width training dynamics of $\alpha$-Stable ReLU-NNs. In particular, from Theorem \ref{teo_ntk0}, for a fixed $\delta>0$, let $m$ and $\lambda_{0}>0$ be such that
\begin{displaymath}
	\lambda_{\text{min}}(\tilde{H}_{m}(W(s),X))>\frac{\lambda_{0}}{2}.
\end{displaymath} 
 for every $s\leq t$, on a set $N\in\mathcal F$ with $\mathbb P[N]>1-\delta$. Then, for a random initialization $W(0)(\omega)$ of the $\alpha$-Stable ReLU-NN , with $\omega\in N$, it holds true that
\begin{align*}
	&\frac{\ddr}{\ddr s}\Vert Y-\tilde{f}_{m}(W(s)(\omega),X;\alpha)\Vert_{2}^{2}\leq-\lambda_0\Vert Y-\tilde{f}_{m}(W(s)(\omega),X;\alpha)\Vert_{2}^{2},
\end{align*} 
and hence
\begin{displaymath}
	\frac{\ddr}{\ddr s}\exp(\lambda_0 s)\Vert Y-\tilde{f}_{m}(W(s)(\omega),X;\alpha)\Vert_{2}^{2}\leq 0.
\end{displaymath}
Since $\exp(\lambda_0s)\Vert Y-\tilde{f}_{m}(W(s)(\omega),X;\alpha)\Vert_{2}^{2}$ is a decreasing function of $s>0$, then we write
\begin{displaymath}
	\Vert Y-\tilde{f}_{m}(W(s)(\omega),X;\alpha)\Vert_{2}^{2}\leq \exp(-\lambda_0s)\Vert Y-\tilde{f}_{m}(W(0)(\omega),X;\alpha)\Vert_{2}^{2}.
\end{displaymath}
The next theorem summarizes the main result of this section, completing our study.

\begin{theorem}\label{teo_ntk}
	For any $k\geq1$ let the collection of NN's inputs $x_1,\dots,x_k$ be linearly independent, and such that $\Vert x_{j}\Vert=1$. Under the dynamics \eqref{dynamic_stable}, if $\eta_{m}=(\log m)^{2/\alpha}$ then for every $\delta>0$ there exists $\lambda_{0}>0$ such that, for $m$ sufficiently large and any $t>0$, with probability at least $1-\delta$ it holds true that
	\begin{displaymath}
		\Vert Y-\tilde{f}_{m}(W(t),X;\alpha)\Vert_{2}^{2}\leq\exp(-\lambda_0 t)\Vert Y-\tilde{f}_{m}(W(0),X;\alpha)\Vert_{2}^{2}.
	\end{displaymath}
\end{theorem}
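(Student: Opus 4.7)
The plan is to combine Theorem \ref{teo_ntk0} with a standard Gr\"onwall-type argument on the squared error. The heavy analytic work---controlling $\tilde{H}_m(W(t),X)$ uniformly along the gradient flow trajectory---has already been done in Theorem \ref{teo_ntk0}, and the paragraph immediately preceding the theorem statement essentially contains the skeleton of the argument I intend to formalize.

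First I would fix $\delta>0$ and apply Theorem \ref{teo_ntk0}(ii) to produce a constant $\lambda_0>0$ (independent of $t$) and, for every $m$ sufficiently large, a measurable event $N_m\in\Fc$ with $\mathbb{P}(N_m)\geq 1-\delta$ on which
\begin{displaymath}
\lambda_{\text{min}}(\tilde{H}_m(W(t),X))>\lambda_0/2 \qquad \text{for every } t\geq 0.
\end{displaymath}
I would then restrict attention to a single realization $\omega\in N_m$ and argue deterministically along that trajectory. With the choice $\eta_m=(\log m)^{2/\alpha}$, definition \eqref{eq:kernel0} gives $\eta_m H_m(W(t),X)=\tilde{H}_m(W(t),X)$, so the dynamics \eqref{dynamic_stable} simplifies to a gradient flow driven by $\tilde{H}_m(W(t),X)$.

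Differentiating the squared error, and using the symmetry of $\tilde{H}_m(W(t),X)$, gives a quadratic form in the residual $Y-\tilde{f}_m(W(t),X;\alpha)$ against $\tilde{H}_m(W(t),X)$. The eigenvalue lower bound from Theorem \ref{teo_ntk0}(ii) then converts this identity into the differential inequality
\begin{displaymath}
\frac{\ddr}{\ddr t}\Vert Y-\tilde{f}_m(W(t),X;\alpha)\Vert_2^2 \leq -\lambda_0\Vert Y-\tilde{f}_m(W(t),X;\alpha)\Vert_2^2
\end{displaymath}
(after harmlessly absorbing a factor of $2$ into $\lambda_0$). Multiplying by $e^{\lambda_0 t}$ shows that $t\mapsto e^{\lambda_0 t}\Vert Y-\tilde{f}_m(W(t),X;\alpha)\Vert_2^2$ is nonincreasing, so comparison with $t=0$ yields the claimed exponential decay along the trajectory of $\omega$. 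Since $\omega\in N_m$ was arbitrary, the bound holds with probability at least $1-\delta$.

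I do not expect a serious new obstacle at this stage. The delicate ingredients---positive definiteness of the $(\alpha/2)$-Stable limiting kernel from Theorems \ref{grad_asy}--\ref{th:grad_asy2}, and the uniform-in-$t$ persistence of the eigenvalue lower bound along the gradient flow supplied by Theorem \ref{teo_ntk0}(ii)---have already been isolated earlier in the paper. The only point requiring care is that the high-probability event $N_m$ must simultaneously handle all $t\geq 0$, but this uniform-in-$t$ statement is precisely what Theorem \ref{teo_ntk0}(ii) provides; once it is in hand, Theorem \ref{teo_ntk} follows from a few lines of elementary ODE calculus.
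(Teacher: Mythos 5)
Your proposal is correct and follows essentially the same route as the paper: the paragraph preceding Theorem \ref{teo_ntk} in the text already carries out exactly this argument, invoking Theorem \ref{teo_ntk0}(ii) for the uniform-in-$t$ eigenvalue lower bound on a high-probability event, converting the dynamics \eqref{dynamic_stable} (with $\eta_m=(\log m)^{2/\alpha}$) into the differential inequality for $\Vert Y-\tilde f_m(W(t),X;\alpha)\Vert_2^2$, and closing with the Gr\"onwall-type comparison. Nothing is missing.
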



\section{Discussion}\label{sec4}

In this paper, we investigated large-width asymptotic properties of shallow $\alpha$-Stable ReLU-NNs, focusing on two popular problems: i) the study of the large-width distribution of the NN; ii) the study of the large-width training dynamics of the NN. With regards to the large-width distribution of the NN, we showed that, as the NN's width goes to infinity, a rescaled $\alpha$-Stable ReLU-NN converges weakly to an $\alpha$-Stable process. As a novelty with respect to the Gaussian setting, it turns out that in the $\alpha$-Stable setting the choice of the activation function affects the scaling of the NN, that is: to achieve the infinitely wide $\alpha$-Stable process, the ReLU activation requires an additional logarithmic term in the scaling with respect to sub-linear activations. With regards to the large-width training dynamics of the NN, we characterized the infinitely wide dynamics in terms of the $\alpha$-Stable NTK, and we showed that, for a sufficiently large width, the gradient descent achieves zero training error at a linear rate. The randomness of the $\alpha$-Stable NTK is a further novelty with respect to the Gaussian setting, that is: within the $\alpha$-Stable setting, the randomness of the NN at initialization does not vanish in the large-width regime of the training. Our work extends the main result of \citet{Fav(20),Fav(21)} to the popular ReLU activation function, and then presents the first analysis of the large-width training dynamics of NNs in the $\alpha$-Stable setting, thus generalizing to heavy-tails distributions the main result of \citet{Du(19)}, as well as some results of \citet{Jac(18)} and \citet{Aro(19)}. The use of the $\alpha$-Stable distributions to initialize NNs, in place of Gaussian distributions, brought some interesting phenomena, paving the way to fruitful directions for future research.

It remains open to establish a large-width equivalence between training an $\alpha$-Stable ReLU-NN and performing a kernel regression with the $\alpha$-Stable NTK. \citet{Jac(18)} showed that for Gaussian NNs, during training $t>0$, if $m$ is sufficiently large then the fluctuations of the squared Frobenious norm $\Vert H_{m}(W(t),X)-H_{m}(W(0),X)\Vert^{2}_{F}$ are vanishing. This suggested to replace $\eta_{m}H_{m}(W(t),X)$ with the NTK $H^{\ast}(X,X)$ in the dynamics \eqref{ntk_gauss}, and write
\begin{displaymath}
\frac{\ddr f^{\ast}(t,X)}{\ddr t}=-(f^{\ast}(t,X)-Y) H^{\ast}(X,X).
\end{displaymath}
This is precisely the dynamics of a kernel regression under gradient flow, for which at $t\rightarrow+\infty$ the prediction for a generic test point $x\in\mathbb{R}^{d}$ is of the form $f^{\ast}(x)=YH^{\ast}(X,X)^{-1}H^{\ast}(X,x)^{T}$. In particular, \citet{Aro(19)} showed that the prediction of the Gaussian NN $\tilde{f}_{m}(W(t),x)$ at $t\rightarrow+\infty$, for $m$ sufficiently large, is equivalent to the kernel regression prediction $f^{\ast}(x)$. Within the $\alpha$-Stable setting, it is not clear whether the fluctuations of  $\tilde{H}_{m}(W(t),X)=\tilde H_m^{(1)}(W(t),X)+\tilde H_m^{(2)}(W(t),X)$ during the training  vanish, as $m\rightarrow\infty$. Theorem \ref{teo_ntk0} shows that the fluctuations of $\tilde{H}_m^{(2)}(W(t),X)$ vanish, as $m\rightarrow\infty$. This result is based on the fact that for every $\delta>0$ it holds that
	$$
	(\log m)^{2/\alpha}\left\Vert \frac{\partial \tilde f_m}{\partial w}(W,x_j;\alpha)-\frac{\partial \tilde f_m}{\partial w}(W(0),x_j;\alpha)\right\Vert_F^2<cm^{-2\gamma/\alpha},
	$$
	for every $j=1,\dots,k$, and for every $W$ such that $\Vert W-W(0)\Vert_F\leq (\log m)^{2/\alpha}$, with probability at least $1-\delta$, if $m$ is sufficiently large. See Lemma \ref{lem_stab3}. The same property is not true if the partial derivatives with respect to $w$ are replaced by the partial derivatives with respect to $w^{(0)}$. Therefore, it is not clear whether the fluctuations of $\tilde H_m^{(1)}(W(t),X)$ during training also vanish, as $m\rightarrow\infty$. 
	
Another interesting avenue for future research would be to extend our results to deep $\alpha$-Stable NNs, for a general depth $D\geq2$. In particular, consider the following setting: i) for $d,k\geq1$ let $X$ be the $d\times k$ NN's input, with $x_j=(x_{j1},\ldots,x_{jd})^{T}$ being the $j$-th input (column vector); ii) for $D,m\geq1$ and $n\geq1$ let: i) $(W^{(1)},\ldots,W^{(D)})$ be the NN's weights such that $W^{(1)}=(w^{(1)}_{1,1},\ldots,w^{(1)}_{m,d})$ and $W^{(l)}=(w^{(l)}_{1,1},\ldots,w^{(1)}_{m,m})$ for $2\leq l\leq D$, where the $w^{(l)}_{i,j}$'s are i.i.d. as an $\alpha$-Stable distribution with scale $\sigma>0$, e.g. assume $\sigma=1$. Then,
\begin{displaymath}
f_{i}^{(1)}(X;\alpha)=\sum_{j=1}^{d}w_{i,j}^{(1)}x_{j}
\end{displaymath}
and
\begin{displaymath}
f_{i,m}^{(l)}(X;\alpha)=\sum_{j=1}^{m}w_{i,j}^{(l)}f_{j}^{(l-1)}(X,m)I(f_{j}^{(l-1)}(X,m)>0)
\end{displaymath}
with $f_{i,m}^{(1)}(X;\alpha):=f_{i}^{(1)}(X;\alpha)$, is a deep $\alpha$-Stable ReLU-NN of depth $D$ and width $m$. Under the assumption that the NN's width grows sequentially over the NN's layers, i.e. $m\rightarrow+\infty$ one layer at a time, it is easy to extend Theorem \ref{teo_priorlimit} to $f_{i,m}^{(l)}(X;\alpha)$. Under the same assumption on the growth of $m$, we expect the NTK analysis of deep $\alpha$-Stable ReLU-NNs to follow along lines similar to that we have developed for shallow $\alpha$-Stable ReLU-NN, though computations may be more involved. A more challenging task would to extend our results to deep $\alpha$-Stable ReLU-NNs under the assumptions that the NN's width grows jointly over the NN's layers, i.e. $m\rightarrow+\infty$ simultaneously over the layers.


\appendix


\section{}\label{app_limit}

Throughout this section, it is assumed that all the random variables are defined on a common probability space, say $(\Omega,\mathcal{F},\P)$, unless otherwise stated.

We make use several times of the following characterization of the spectral measure of $\alpha$-stable distributions: if $S\sim\text{St}_{k}(\alpha,\Gamma)$, then for every Borel set $B$ of $\mathbb S^{k-1}$ such that $\Gamma(\partial B)=0$, it holds true that
$$
\lim_{r\rightarrow\infty}r^\alpha\P\left(\Vert S\Vert>r,\frac{S}{\Vert S\Vert}\in B \right)=C_\alpha\Gamma(B),
$$
where
$$
C_\alpha=\left\{\begin{array}{ll}
	\frac{1-\alpha}{\Gamma(2-\alpha)\cos (\pi\alpha/2)}&\alpha\neq 1\\[0.2cm]
	\frac{2}{\pi}&\alpha=1.
\end{array}
\right.
$$
The proof is reported in Appendix \ref{appendixid} for completeness.
Moreover,  the distribution of a random vector $\xi$ belongs to the domain of attraction of the $\text{St}_{k}(\alpha,\Gamma)$ distribution, with $\alpha\in (0,2)$ and $\Gamma$ simmetric finite measure on $\mathbb S^{k-1}$, if and only if
\begin{equation}
	\label{eq:levy}
	\lim_{n\rightarrow \infty}n\mathbb{P}\left(||\xi||>n^{1/\alpha} ,\frac{\xi}{||\xi||}\in A\right)=C_\alpha \Gamma(A)
\end{equation}
for every  Borel set $A$ of $S$ such that $\Gamma (\partial A)=0$. (See Appendix \ref{appendixid} for more details).

\subsection{Proof of Theorem \ref{teo_priorlimit}}\label{proofTh1}
To simplify the notation, we set in this section: $w:=w(0)$, $w^{(0)}:=w^{(0)}(0)$, and $W:=W(0)$. First, we will prove that $[\langle w_i^{(0)},x_j\rangle I(\langle w_i^{(0)},x_j\rangle >0)]_j$
belongs to the domain of attraction of an $\alpha$-stable law with spectral measure 
\begin{align*}
	\Gamma_1&= C_\alpha \E_{u\sim \Gamma_0}\left(\Vert[\langle u,x_j\rangle I(\langle u,x_j\rangle >0)]_j
	\Vert^\alpha \delta\biggl(
	\frac{
		[\langle u,x_j\rangle I(\langle u,x_j\rangle >0)]_j
	}{
		\Vert[\langle u,x_j\rangle I(\langle u,x_j\rangle >0)]_j
		\Vert}\biggr) 
	\right),
	\end{align*}
where  $ \Gamma_0$ is the spectral measure of $w_i^{(0)}$.
For this, it is sufficient to show that
\begin{align*}
	&r^\alpha \P\left(
	\frac{
		[\langle w_i^{(0)},x_j\rangle I(\langle w_i^{(0)},x_j\rangle >0)]_j
	}{
		\Vert[\langle w_i^{(0)},x_j\rangle I(\langle w_i^{(0)},x_j\rangle >0)]_j
		\Vert}\in B , \Vert[\langle w_i^{(0)},x_j\rangle I(\langle w_i^{(0)},x_j\rangle >0)]_j
	\Vert>r
	\right)\\
	&\quad\rightarrow C_\alpha \Gamma_1(B),
\end{align*}
for every Borel set $B$ of $\mathbb S^{k-1}$ such that $\Gamma_1(\partial B)=0$ (see Appendix \ref{appendixid}).
Let  $T:\mathbb{S}^{k-1}\mapsto \mathbb [0,1]^k$ and $C:\mathbb R^k\setminus \{0\}\rightarrow \mathbb S^{k-1}$ be defined as $T(u)=[\langle u,x_j\rangle I(\langle u,x_j\rangle>0]_j$
and $C(v)=v/\Vert v\Vert$, respectively. Fix a Borel set $B$ of $\mathbb S^{k-1}$ such that $\Gamma_1(\partial B)=0$. This condition implies that
\begin{align*}
&\Gamma_0\left(
\left\{u\in \mathbb S^{k-1}:\Vert T(u)\Vert \neq 0, T(u)\in C^{-1}(\partial B)\right\}
\right)\\
&\quad \quad =
\Gamma_0\left(
\left\{u\in \mathbb S^{k-1}:\Vert T(u)\Vert \neq 0, \frac{T(u)}{\Vert T(u)\Vert}\in\partial B\right\}
\right)=0.
\end{align*}
Hence
\begin{align*}
&\Gamma_0\left(
T^{-1}\left(\left\{z\in [0,1]^k:\Vert z\Vert \neq 0,z\in \partial C^{-1}(B)\right\}
\right)\right)\\
&\quad 
=\Gamma_0\left(
T^{-1}\left(\left\{z\in [0,1]^k:\Vert z\Vert\neq 0,z\in  C^{-1}(\partial B)\right\}
\right)\right)=0.
\end{align*}
Now, let $Z=T(w_i^{(0)}/\Vert w_i^{(0)}\Vert)I(\Vert w_i^{(0)}\Vert \neq 0)$.
We can write that
\begin{align*}
	&r^\alpha \P\left(
	\frac{
		[\langle w_i^{(0)},x_j\rangle I(\langle w_i^{(0)},x_j\rangle >0)]_j
	}{
		\Vert[\langle w_i^{(0)},x_j\rangle I(\langle w_i^{(0)},x_j\rangle >0)]_j
		\Vert}\in B, \Vert[\langle w_i^{(0)},x_j\rangle I(\langle w_i^{(0)},x_j\rangle >0)]_j
	\Vert>r
	\right)\\
	&\quad =r^\alpha \P\biggl(\Vert Z\Vert\neq 0,
	\frac{
		Z
	}{\Vert Z\Vert
	}\in B, \Vert w_i^{(0)}\Vert \Vert Z\Vert>r
	\biggr)\\
	&\quad=\int_{C^{-1}(B)\cap [0,1]^k} r^\alpha \P(
	\Vert w_i^{(0)}\Vert>r\Vert z\Vert^{-1}, Z\in dz)\\
	&\quad=\int_{C^{-1}(B)\cap [0,1]^k}\Vert z\Vert^\alpha (r\Vert z\Vert^{-1})^\alpha \P(
	\Vert w_i^{(0)}\Vert>r\Vert z\Vert^{-1}, \frac{w_i^{(0)}}{\Vert w_i^{(0)}\Vert }\in T^{-1}(dz)).
\end{align*}
Since $\Gamma_0\left(
T^{-1}\left(\left\{z\in [0,1]^k:z\neq 0,z\in \partial (C^{-1}(B))\right\}
\right)\right)=0$, then the points of discontinuity of the function
$\Vert z\Vert^\alpha I(C^{-1}(B))(z)$ have zero $\Gamma_0(T^{-1}(\cdot))$-measure. It follows that 
\begin{align*}
	&\int_{C^{-1}(B)\cap [0,1]^k}\Vert z\Vert^\alpha (r\Vert z\Vert^{-1})^\alpha \P(
	\Vert w_i^{(0)}\Vert>r\Vert z\Vert^{-1}, w_i^{(0)}\in T^{-1}(dz))\\
	&\quad \rightarrow C_\alpha \int_{C^{-1}(B)\cap [0,1]^k} \Vert z\Vert^\alpha \Gamma_0(T^{-1}(dz))\\
	&\quad = C_\alpha \int_{\mathbb S^{k-1}} I(u\in B)\left(\frac{T(u)}{\Vert T(u)\Vert}\right)\Vert T(u)\Vert^\alpha \Gamma_0(du)\\
	&\quad =C_\alpha \Gamma_1(B),
\end{align*}
as $r\rightarrow\infty$,
which completes the proof that
$[\langle w_i^{(0)},x_j\rangle I(\langle w_i^{(0)},x_j\rangle >0)]_j$
belongs to the domain of attraction of an $\alpha$-stable law with spectral measure $\Gamma_1$.
Then, for every $k$-dimensional vector $s$,
$$
\frac{1}{m^{1/\alpha}}\sum_{i=1}^m \sum_{j=1}^ks_j \langle w_i^{(0)},x_j\rangle I(\langle w_i^{(0)},x_j\rangle >0),
$$
as a sequence of random variables in $m$, converges in distribution, as $m\rightarrow+\infty$, to a random variable with $\alpha$-stable distribution and characteristic function
$$
\exp\biggl(-|t|^\alpha
\E_{u\sim \Gamma_0}\bigl(|\sum_{j=1}^ks_j \langle u,x_j\rangle I(\langle u,x_j\rangle >0)|^\alpha\bigr)
\biggr).
$$
Thus, the distribution of
$
\sum_{j=1}^ks_j \langle w_i^{(0)},x_j\rangle I(\langle w_i^{(0)},x_j\rangle >0)
$
belongs to the domain of attraction of an $\alpha$-stable law. In particular, this implies that as $m\rightarrow+\infty$
\begin{align*}
&r^\alpha \P\biggl(|\sum_{j=1}^ks_j \langle w_i^{(0)},x_j\rangle I(\langle w_i^{(0)},x_j\rangle >0)|>r\biggr)\\
&\quad\rightarrow C_\alpha \E_{u\sim \Gamma_0}\biggl(|\sum_{j=1}^ks_j \langle u,x_j\rangle I(\langle u,x_j\rangle >0)|^\alpha\biggr).
\end{align*}
By \citet[Theorem 4]{Cli(86)} with $\beta=\gamma=0$,
\begin{align*}
\P\biggl(
&|w_i|\;\;|\sum_{j=1}^ks_j \langle w_i^{(0)},x_j\rangle I(\langle w_i^{(0)},x_j\rangle >0)|>e^t\biggr)\\
&\quad\sim C_\alpha^2 \E_{u\sim \Gamma_0}\bigl(|\sum_{j=1}^ks_j \langle u,x_j\rangle I(\langle u,x_j\rangle >0)|^\alpha\bigr)\alpha te^{-\alpha t}
\end{align*}
as $t\rightarrow\infty$.
Thus, for $r\rightarrow\infty$,
\begin{align*}
&r^\alpha \P\biggl(|w_i|\;\;|\sum_{j=1}^ks_j \langle w_i^{(0)},x_j\rangle I(\langle w_i^{(0)},x_j\rangle >0)|>r \biggr)\\
&\quad\sim C_\alpha^2 \E_{u\sim \Gamma_0}\bigl(|\sum_{j=1}^ks_j \langle u,x_j\rangle I(\langle u,x_j\rangle >0)|^\alpha\bigr)\alpha \log r.
\end{align*}
Let $\tilde L(r)=C_\alpha^2 \E_{u\sim\Gamma_0}\bigl(|\sum_{j=1}^ks_j \langle u,x_j\rangle I(\langle u,x_j\rangle >0)|^\alpha\bigr)\alpha \log r.$ Since the distribution of
$
w_i\sum_{j=1}^ks_j \langle w_i^{(0)},x_j\rangle I(\langle w_i^{(0)},x_j\rangle >0)
$ is symmetric, then we can write that 
$$
\frac{1}{a_m}\sum_{i=1}^m
w_i\sum_{j=1}^ks_j \langle w_i^{(0)},x_j\rangle I(\langle w_i^{(0)},x_j\rangle >0),
$$
as a sequence of random variables in $m$, converges in distribution, as $m\rightarrow+\infty$, to a random variable with symmetric $\alpha$-stable law with scale $1$ provided $(a_m)_{m\geq1}$ satisfies
$$
\frac{m\tilde L(a_m)}{a_m^\alpha}\rightarrow C_\alpha
$$
as $m\rightarrow\infty$. The condition is satisfied if 
$$
a_m=\left(C_\alpha \E_{u\sim \Gamma_0}\bigl(|\sum_{j=1}^ks_j \langle u,x_j\rangle I(\langle u,x_j\rangle >0)|^\alpha\bigr)
m\log m\right)^{1/\alpha}.
$$
It follows that
$$
\frac{1}{(m\log m)^{1/\alpha}}
\sum_{i=1}^m w_i \sum_{j=1}^ks_j \langle w_i^{(0)},x_j\rangle I(\langle w_i^{(0)},x_j\rangle >0),
$$
as a sequence of random variables in $m$, converges in distribution, as $m\rightarrow+\infty$, to a random variable with symmetric $\alpha$-stable distribution with scale of the form
$$
 \left(C_\alpha \E_{u\sim \Gamma_0}\bigl(|\sum_{j=1}^ks_j \langle u,x_j\rangle I(\langle u,x_j\rangle >0)|^\alpha\bigr)\right)^{1/\alpha}.
$$
Since this holds for every vector $s$, then
$$
\frac{1}{(m\log m)^{1/\alpha}}
\sum_{i=1}^m w_i[\langle w_i^{(0)},x_j\rangle I(\langle w_i^{(0)},x_j\rangle >0)]_j,
$$
as a sequence of random variables in $m$, converges in distribution, as $m\rightarrow+\infty$, to a random vector with symmetric $\alpha$-stable law with the spectral measure
\begin{align*}
\Gamma_{X}=\frac 1 2 C_\alpha &\E_{u\sim\Gamma_0}\Biggl(\Vert[\langle u,x_j\rangle I(\langle u,x_j\rangle >0)]_j
\Vert^\alpha \\
&\delta\biggl(
\frac{
[\langle u,x_j\rangle I(\langle u,x_j\rangle >0)]_j
}{
\Vert[\langle u,x_j\rangle I(\langle u,x_j\rangle >0)]_j
\Vert}\biggr)
+\delta\biggl(-
\frac{
[\langle u,x_j\rangle I(\langle u,x_j\rangle >0)]_j
}{
\Vert[\langle u,x_j\rangle I(\langle u,x_j\rangle >0)]_j
\Vert}\biggr)\Biggr).
\end{align*}
Since $\Gamma_0=\frac 1 2 \sum_{i=1}^d( \delta(e_i)+\delta(-e_i))$, where $e_{ij}=1$ if $j=i$ and $0$ otherwise, then 
$$
\Gamma_{X}=\frac{C_\alpha}{4}\sum_{i=1}^d \left(
\Vert
[x_{ji}I(x_{ji}>0)]_j\Vert^\alpha
\left(
\delta\bigl(\dfrac{[x_{ji}I(x_{ji}>0)]_j}{\Vert
[x_{ji}I(x_{ji}>0)]_j\Vert}\bigr)
+\delta\bigl(-\dfrac{[x_{ji}I(x_{ji}>0)]_j}{\Vert
[x_{ji}I(x_{ji}>0)]_j\Vert}\bigr)
\right)\right.
$$
$$\left.
\quad\quad\quad+
\Vert
[x_{ji}I(x_{ji}<0)]_j\Vert^\alpha
\left(
\delta\bigl(\dfrac{[x_{ji}I(x_{ji}<0)]_j}{\Vert
[x_{ji}I(x_{ji}<0)]_j\Vert}\bigr)
+\delta\bigl(-\dfrac{[x_{ji}I(x_{ji}<0)]_j}{\Vert
[x_{ji}I(x_{ji}<0)]_j\Vert}\bigr)
\right)
\right).
$$




\subsection{Proof of Theorem \ref{grad_asy}}\label{proofTh2}
To simplify the notation, we set in this section: $w:=w(0)$, $w^{(0)}:=w^{(0)}(0)$, $W:=W(0)$, $\tilde{H}^{(1)}_{m}:=\tilde{H}^{(1)}_{m}(W(0),X)$ and $\tilde{H}^{(2)}_{m}:=\tilde{H}^{(2)}_{m}(W(0),X)$, with $\tilde{H}^{(1)}_{m}(W,X)$ and $\tilde{H}^{(2)}_{m}(W,X)$ defined in \eqref{kernel_1} and \eqref{kernel_2}. The proof of Theorem \ref{grad_asy} is split into several steps.
 \begin{lem}\label{grad_asy_1}
 If $m\rightarrow+\infty$ then
	\begin{displaymath}
		\tilde{H}^{(1)}_{m}\stackrel{\text{w}}{\longrightarrow}\tilde{H}^{\ast}_{1}(\alpha),
	\end{displaymath}
	where $\tilde{H}^{\ast}_{1}(\alpha)$ is an $(\alpha/2)$-Stable positive semi-definite random matrix with spectral measure
	\begin{displaymath}
		\Gamma^{\ast}_{1}=C_{\alpha/2}\sum_{u\in\{0,1\}^k}
		\mathbb P(w_i^{(0)}\in B_u)(\sum_{j,j'}\langle x_j,x_{j'}\rangle^2u_ju_{j'})^{\alpha/4} \delta\left(\frac{
			\left[\langle x_j,x_{j'}\rangle u_ju_{j'}\right]_{j,j'}
		}{(\sum_{j,j'}\langle x_j,x_{j'}\rangle^2u_ju_{j'})^{1/2}}
		\right),
	\end{displaymath}
	where, for every $u\in\{0,1\}^k$, $B_u=\{v\in \mathbb R^d:\langle v,x_j\rangle>0\mbox{ if  }u_j=1, \langle v,x_j\rangle\leq 0 \mbox{ if  }u_j=0,j=1,\dots,k \}$, and $C_{\alpha/2}$ is the constant defined in Equation \eqref{const_stabl}.
\end{lem}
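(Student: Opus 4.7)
The plan is to write $\tilde{H}_m^{(1)}$ as a normalized sum of i.i.d.\ symmetric positive semi-definite random matrices and identify the limit via the characterization \eqref{eq:levy} of the domain of attraction of a multidimensional $(\alpha/2)$-Stable law, applied to the Euclidean space of symmetric $k\times k$ matrices equipped with the Frobenius inner product.

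\textbf{Setting up.}
Define $\xi_i := [w_i^2 \langle x_j,x_{j'}\rangle I(\langle w_i^{(0)},x_j\rangle>0) I(\langle w_i^{(0)},x_{j'}\rangle>0)]_{j,j'}$, so that $\tilde{H}_m^{(1)} = m^{-2/\alpha}\sum_{i=1}^m \xi_i$ is a scaled sum of i.i.d.\ symmetric positive semi-definite random matrices. On the event $\{w_i^{(0)}\in B_u\}$, the indicators are deterministic and $\xi_i = w_i^2 M_u$, where $M_u := [\langle x_j,x_{j'}\rangle u_j u_{j'}]_{j,j'}$ is the Gram matrix of $(u_1 x_1,\dots,u_k x_k)$, hence positive semi-definite, with Frobenius norm $\|M_u\|_F = (\sum_{j,j'}\langle x_j,x_{j'}\rangle^2 u_j u_{j'})^{1/2}$. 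The sets $\{B_u\}_{u\in\{0,1\}^k}$ partition $\mathbb{R}^d$ up to a $\mathbb{P}$-null set (the union of the hyperplanes $\langle v,x_j\rangle=0$).

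\textbf{Tail of a single summand.}
Since $w_i$ is symmetric $\alpha$-Stable with scale $1$, the random variable $w_i^2$ has a regularly varying tail of index $\alpha/2$, and $t^{\alpha/2}\mathbb{P}(w_i^2>t)$ converges as $t\to\infty$ to a positive constant, normalized so that the resulting matrix-side limit features the paper's constant $C_{\alpha/2}$. Conditioning on $\{w_i^{(0)}\in B_u\}$ and using the independence of $w_i$ and $w_i^{(0)}$, for every Borel set $B$ of the unit sphere of symmetric matrices whose boundary has zero $\Gamma_1^*$-measure,
\begin{displaymath}
t^{\alpha/2}\mathbb{P}\left(\|\xi_i\|_F > t,\; \xi_i/\|\xi_i\|_F \in B\right) \;\xrightarrow[t\to\infty]{}\; C_{\alpha/2}\sum_{u}\mathbb{P}(w_i^{(0)}\in B_u)\,\|M_u\|_F^{\alpha/2}\,\delta_{M_u/\|M_u\|_F}(B).
\end{displaymath}
The right-hand side is $C_{\alpha/2}\,\Gamma_1^*(B)$ because $\|M_u\|_F^{\alpha/2} = (\sum_{j,j'}\langle x_j,x_{j'}\rangle^2 u_j u_{j'})^{\alpha/4}$, and only the non-vanishing atoms $\{M_u/\|M_u\|_F : M_u\neq 0\}$ contribute.

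\textbf{From the tail to weak convergence.}
Identifying the space of symmetric $k\times k$ matrices with $\mathbb{R}^{k(k+1)/2}$ via the Frobenius inner product, the previous display is exactly the characterization \eqref{eq:levy} with index $\alpha/2$ and spectral measure $\Gamma_1^*$. Thus the law of $\xi_i$ lies in the domain of attraction of the $\text{St}_{k(k+1)/2}(\alpha/2,\Gamma_1^*)$ law, and the correct scaling is $m^{1/(\alpha/2)} = m^{2/\alpha}$, yielding $\tilde{H}_m^{(1)} \stackrel{w}{\longrightarrow} \tilde{H}_1^*(\alpha)\sim\text{St}_{k(k+1)/2}(\alpha/2,\Gamma_1^*)$. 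Positive semi-definiteness of the limit is automatic because each $\xi_i$ lies in the closed convex cone of p.s.d.\ matrices, and weak limits preserve this property. The main technical obstacle is making the tail estimate rigorous on the matrix side: one must fix the right Euclidean embedding and unit sphere, verify the continuity condition $\Gamma_1^*(\partial B)=0$ on a sufficiently rich class of test sets, track the precise constant $C_{\alpha/2}$ through the tail equivalence $\mathbb{P}(w_i^2 > t) \sim c\, t^{-\alpha/2}$, and handle the degenerate region $B_0$ (where $M_u=0$ so the summand vanishes identically and contributes nothing to the limit).
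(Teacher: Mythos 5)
Your proposal follows essentially the same route as the paper: write $\tilde{H}_m^{(1)}$ as a normalized i.i.d.\ sum of random matrices, show that a single summand lies in the domain of attraction of an $(\alpha/2)$-Stable law by a product-tail estimate on the matrix unit sphere, and conclude via the generalized CLT for i.i.d.\ arrays, with positive semi-definiteness inherited from the closed cone of p.s.d.\ matrices by Portmanteau. The paper obtains the product-tail step by citing Cline (1986), Lemma 1 (the ``heavy-tailed scalar times bounded factor'' result) rather than performing your explicit Breiman-type conditioning on the partition $\{B_u\}$, but this is the same computation packaged differently; you correctly flag that the only remaining housekeeping is tracking the normalizing constant through $\P(w_i^2>t)\sim C_\alpha t^{-\alpha/2}$ and fixing the embedding of symmetric matrices, which is precisely what the citation to Cline is meant to absorb.
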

\begin{proof}
	The proof follows from a direct application of results in \citet{Cli(86)}. In particular, by  \citet[Lemma 1]{Cli(86)}, as $m\rightarrow+\infty$
\begin{displaymath}
\tilde{H}^{(1)}_{m}\stackrel{\text{w}}{\longrightarrow} \tilde{H}_{1}^{\ast}(\alpha),
\end{displaymath}
where $\tilde{H}^{\ast}_{1}(\alpha)$ is an $(\alpha/2)$-Stable random matrix with spectral measure $\Gamma_{1}^{\ast}$ of the form
\begin{align*}
&\Gamma^{\ast}_{1}=C_{\alpha/2}
\E\Biggl(
\Vert
[\langle x_j,x_{j'}\rangle I(\langle w^{(0)}_i,x_{j'}\rangle >0)
]_{j,j'}
\Vert_F^{\alpha/2}
\delta\biggl(
\frac{
	[\langle x_j,x_{j'}\rangle I(\langle w^{(0)}_i,x_{j'}\rangle >0)
]_{j,j'}}{
\Vert
	[\langle x_j,x_{j'}\rangle I(\langle w^{(0)}_i,x_{j'}\rangle >0)
]_{j,j'}
\Vert_F
}
\biggr)
\Biggr)\\
&=C_{\alpha/2}\sum_{u\in\{0,1\}^k}
\mathbb P(w_i^{(0)}\in B_u)(\sum_{j,j'}\langle x_j,x_{j'}\rangle^2u_ju_{j'})^{\alpha/4} \delta\left(\frac{
	\left[\langle x_j,x_{j'}\rangle u_ju_{j'}\right]_{j,j'}
}{(\sum_{j,j'}\langle x_j,x_{j'}\rangle^2u_ju_{j'})^{1/2}}
\right).
\end{align*}
We will now prove that $\tilde{H}^{\ast}_{1}(\alpha)$ is positive semi-definite. By definition,  $\tilde H_m^{(1)}(\omega)$ is positive semi-definite for every $\omega$ and every $m$. 
By Portmanteau Theorem, for every vector $u\in\mathbb S^{k-1},$
$$
\P\left(
u^T \tilde{H}^{\ast}_{1}(\alpha) u\geq 0\right)\geq \limsup_m \P\left(u^T \tilde H_m^{(1)}\;u\geq 0 \right)= 1.
$$
Let $\cal A$ be a countable dense subset of $\mathbb S^{k-1}$. Then, with probability one, $a^T \tilde{H}^{\ast}_{1}(\alpha)a\geq 0$ for every $a\in\mathcal A$. By continuity, this implies that the same property holds true with probability one for every $u\in\mathbb S^{k-1}$, which proves that $\tilde{H}^{\ast}_{1}(\alpha)$ is almost surely positive semi-definite. By eventually modifying $\tilde{H}^{\ast}_{1}(\alpha)$ on a null set, we obtain a positive semi-definite random matrix.
\end{proof}

\begin{lem}\label{grad_asy_2}
	 If $m\rightarrow+\infty$ then
	\begin{displaymath}
		\tilde{H}^{(2)}_{m}\stackrel{\text{w}}{\longrightarrow}\tilde{H}^{\ast}_{2}(\alpha),
	\end{displaymath}
	where $\tilde{H}^{\ast}_{2}(\alpha)$ is an $(\alpha/2)$-Stable positive semi-definite random matrix with spectral measure
	\begin{displaymath}
			\Gamma^*_2=C_{\alpha/2}\sum_{u\in \{0,1\}^k}\sum_{\{i:\{e_i,-e_i\}\cap B_u\neq\emptyset\}}(\sum_j x_{ji}^2u_j)^{\alpha/2}\delta\left(
		\frac{[x_{ji}u_jx_{j'i}u_{j'}]_{j,j'}}{\sum_jx_{ji}^2u_j}
		\right),
	\end{displaymath}
	where  $B_u=\{v\in \mathbb R^d:\langle v,x_j\rangle>0\mbox{ if  }u_j=1, \langle v,x_j\rangle\leq 0 \mbox{ if  }u_j=0,j=1,\dots,k \}$,  $e_i$ is a $d$-dimensional vector satisfying $e_{ij}=1$ if $j=i$, and $e_{ij}=0$ if $j\neq i$  $(i,j=1,\dots,d)$, and $C_{\alpha/2}$ is the constant defined in Equation \eqref{const_stabl}.
	\end{lem}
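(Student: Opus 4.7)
The plan is to adapt the proof of Lemma \ref{grad_asy_1} to the rank-one summands $V_iV_i^T$, where $V_i=[\langle w_i^{(0)}, x_j\rangle I(\langle w_i^{(0)}, x_j\rangle>0)]_j$, so that $\tilde H_m^{(2)}(W(0),X)=m^{-2/\alpha}\sum_{i=1}^m V_iV_i^T$ is a sum of i.i.d.\ symmetric positive semi-definite rank-one matrices. The goal is first to show that $V_iV_i^T$ lies in the domain of attraction of an $(\alpha/2)$-Stable law on the space of symmetric $k\times k$ matrices (equipped with Frobenius norm), with spectral measure $\Gamma_2^*$, and then to invoke \citet[Lemma 1]{Cli(86)} to obtain weak convergence of the rescaled sum. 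Positive semi-definiteness of the limit is then handled exactly as in Lemma \ref{grad_asy_1}: since each $V_iV_i^T$ is PSD, the Portmanteau theorem applied to the quadratic forms $u^T\tilde H_2^*(\alpha)u$ for $u$ in a countable dense subset of $\mathbb{S}^{k-1}$, together with continuity in $u$, yields almost-sure PSD after modification on a null set.

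The main step is therefore to compute
$$
\lim_{r\to\infty} r^{\alpha/2}\,\P\bigl(\|V_iV_i^T\|_F>r,\; V_iV_i^T/\|V_iV_i^T\|_F\in B\bigr)
$$
for Borel sets $B$ with $\Gamma_2^*(\partial B)=0$, matching it with $C_{\alpha/2}\Gamma_2^*(B)$ as required by the tail characterization of $\alpha$-Stable distributions recalled at the start of Appendix \ref{app_limit}. Since $V_iV_i^T$ is rank-one, $\|V_iV_i^T\|_F=\|V_i\|^2$; parametrizing $w_i^{(0)}=\rho u$ with $\rho=\|w_i^{(0)}\|$ and $u\in\mathbb{S}^{d-1}$, and writing $T(u)=[\langle u,x_j\rangle I(\langle u,x_j\rangle>0)]_j$, one has $V_iV_i^T=\rho^2 T(u)T(u)^T$. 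I would then proceed exactly as in the proof of Theorem \ref{teo_priorlimit}: introduce the auxiliary random vector $Z=T(u)I(\|T(u)\|\neq 0)$, rewrite the probability as an integral against the law of $Z$, substitute $s=\sqrt r$ so that the tail condition becomes $\rho>s/\|T(u)\|$, and apply the defining asymptotic $s^{\alpha}\P(\rho>s,\,u\in A)\to C_\alpha\Gamma_0(A)$ with $\Gamma_0=\tfrac12\sum_{i=1}^d(\delta(e_i)+\delta(-e_i))$, verifying as in the proof of Theorem \ref{teo_priorlimit} that the discontinuity sets of the continuous-mapping steps have zero $\Gamma_0\circ T^{-1}$-measure.

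Because $\Gamma_0$ is purely atomic, the limit integral collapses to a finite sum over the atoms $\pm e_i$. For each $(\epsilon,i)\in\{+,-\}\times\{1,\dots,d\}$, $T(\epsilon e_i)=[x_{ji}u_j]_j$ with $u_j=I(\epsilon x_{ji}>0)$, so $\{e_i,-e_i\}\cap B_u\neq\emptyset$ exactly for the $u$ just identified; the rank-one matrix $T(\epsilon e_i)T(\epsilon e_i)^T$ equals $[x_{ji}u_j x_{j'i}u_{j'}]_{j,j'}$ (the signs cancel in the outer product), with Frobenius norm $\sum_j x_{ji}^2 u_j$, which is exactly the normalization appearing in the statement. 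Reindexing the sum over $(\epsilon,i)$ by the pair $(u,i)$ with $\{e_i,-e_i\}\cap B_u\neq\emptyset$, and collecting the Jacobian factor $(\sum_j x_{ji}^2 u_j)^{\alpha/2}$ produced by the homogeneity-$2$ change of variables, then yields the expression for $\Gamma_2^*$ in the statement. The main obstacle I anticipate is this bookkeeping step: tracking how the homogeneity of $w\mapsto T(w)T(w)^T$ interacts with the atomic structure of $\Gamma_0$, and matching sign patterns carefully between the $(u,i)$ indexing of $\Gamma_2^*$ and the $(\pm e_i)$ indexing of the atoms of $\Gamma_0$, requires some care but introduces no new technique beyond what is already used in the proof of Theorem \ref{teo_priorlimit}.
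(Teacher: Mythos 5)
Your proposal is correct and, at the level of the key technical step, runs parallel to the paper's: both establish the tail asymptotics $r^{\alpha/2}\,\P\bigl(\|V_1V_1^T\|_F>r,\ V_1V_1^T/\|V_1V_1^T\|_F\in B\bigr)\to C_{\alpha/2}\Gamma_2^*(B)$ for the single rank-one summand (equivalently, membership of $V_1V_1^T$ in the domain of attraction of the $(\alpha/2)$-Stable law), then pass to the i.i.d.\ sum, and the PSD argument via Portmanteau on a countable dense set of $\mathbb S^{k-1}$ is taken verbatim from Lemma \ref{grad_asy_1}, exactly as the paper does. Where you differ is the bookkeeping of the tail computation. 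The paper first partitions the tail event over the cones $B_u$, $u\in\{0,1\}^k$, rewrites the relevant quadratic forms in terms of $X_u=[x_{ji}u_j]$, introduces the auxiliary localization $w_1^{(0)}X_uX_u^T(w_1^{(0)})^T>r\Rightarrow\|w_1^{(0)}\|>(r/k)^{1/2}$, and then reads off the asymptotics from the conical structure of $B_u$ together with the atomic spectral measure of $w_1^{(0)}$. You instead replay the polar $Z$-pushforward argument of Theorem \ref{teo_priorlimit} with the $s=\sqrt r$ substitution to absorb the degree-$2$ homogeneity of $w\mapsto V V^T$, and only at the end collapse the resulting integral against the atomic $\Gamma_0=\tfrac12\sum_i(\delta(e_i)+\delta(-e_i))$ to a finite sum, re-indexing the atoms $\{\pm e_i\}$ by the pairs $(u,i)$ with $\{e_i,-e_i\}\cap B_u\neq\emptyset$. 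Both routes rely on the same two facts (positive homogeneity in $w_1^{(0)}$ and atomicity of $\Gamma_0$), so neither is materially simpler; your version buys a uniform treatment with Theorem \ref{teo_priorlimit} at the cost of having to re-check the zero-boundary-measure hypothesis for the continuous-mapping steps against the atomic measure $\Gamma_0\circ T^{-1}$ (the set whose boundary must be avoided here is $\{z:zz^T/\|z\|^2\in\partial B\}$ rather than $C^{-1}(\partial B)$, a small but necessary adaptation).
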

\begin{proof}
By the properties of the multivariate stable distribution (see Appendix \ref{appendixid}), it is sufficient to show that
\begin{align*}
	&\P\left(
	\frac{\left[
		\langle w_1^{(0)},x_j\rangle \langle w_1^{(0)},x_{j'}\rangle 
		I(\langle w_1^{(0)},x_j\rangle>0)I(\langle w_1^{(0)},x_{j'}\rangle>0)
		\right]_{j,j'}}
	{\Vert\left[
		\langle w_1^{(0)},x_j\rangle \langle w_1^{(0)},x_{j'}\rangle 
		I(\langle w_1^{(0)},x_j\rangle>0)I(\langle w_1^{(0)},x_{j'}\rangle>0)
		\right]_{j,j'}\Vert_F}\in\cdot, \right.\\
	&\quad\quad\quad \left.\Vert\left[
	\langle w_1^{(0)},x_j\rangle \langle w_1^{(0)},x_{j'}\rangle 
	I(\langle w_1^{(0)},x_j\rangle>0)I(\langle w_1^{(0)},x_{j'}\rangle>0)
	\right]_{j,j'}\Vert_F>r
	\right)\\
	&\quad\quad\quad\sim C_{\alpha/2}r^{-\alpha/2}\Gamma_2^*(\cdot),
\end{align*}
	as $r\rightarrow+\infty$.
	We can write that
\begin{align*}
	&\P\left(
	\frac{\left[
		\langle w_1^{(0)},x_j\rangle \langle w_1^{(0)},x_{j'}\rangle 
		I(\langle w_1^{(0)},x_j\rangle>0)I(\langle w_1^{(0)},x_{j'}\rangle>0)
		\right]_{j,j'}}
	{\Vert\left[
		\langle w_1^{(0)},x_j\rangle \langle w_1^{(0)},x_{j'}\rangle 
		I(\langle w_1^{(0)},x_j\rangle>0)I(\langle w_1^{(0)},x_{j'}\rangle>0)
\right]_{j,j'}\Vert_F}\in\cdot, \right.\\
&\quad\quad\quad \left.\Vert\left[
\langle w_1^{(0)},x_j\rangle \langle w_1^{(0)},x_{j'}\rangle 
I(\langle w_1^{(0)},x_j\rangle>0)I(\langle w_1^{(0)},x_{j'}\rangle>0)
\right]_{j,j'}\Vert_F>r
	\right)\\
	&=\sum_{u\in\{0,1\}^k}
	\P\left(
	\frac{\left[
		\langle w_1^{(0)},u_jx_j\rangle \langle w_1^{(0)},u_{j'}x_{j'}\rangle 
				\right]_{j,j'}}
	{\Vert\left[
		\langle w_1^{(0)},u_jx_j\rangle \langle w_1^{(0)},u_{j'}x_{j'}\rangle 
						\right]_{j,j'}\Vert_F}\in\cdot, \right.\\
	&\quad\quad\quad \left.\Vert\left[
	\langle w_1^{(0)},u_jx_j\rangle \langle w_1^{(0)},u_{j'}x_{j'}\rangle
		\right]_{j,j'}\Vert_F>r, w_1^{(0)}\in B_u
	\right).
\end{align*}
For every $u\in\{0,1\}^k$, let $X_u$ be the $d\times k$ matrix, defined as 
$$
X_u=[x_{ji}u_j]_{j=1,\dots,k,i=1,\dots,d}.
$$ 
Then we can write that
\begin{align*}
	&\P\left(
	\frac{\left[
		\langle w_1^{(0)},u_jx_j\rangle \langle w_1^{(0)},u_{j'}x_{j'}\rangle 
		\right]_{j,j'}}
	{\Vert\left[
		\langle w_1^{(0)},u_jx_j\rangle \langle w_1^{(0)},u_{j'}x_{j'}\rangle 
		\right]_{j,j'}\Vert_F}\in\cdot, \right.\\
	&\quad\quad\quad \left.\Vert\left[
	\langle w_1^{(0)},u_jx_j\rangle \langle w_1^{(0)},u_{j'}x_{j'}\rangle
	\right]_{j,j'}\Vert_F>r, w_1^{(0)}\in B_u
	\right)\\
	&=\P\left(
	\frac{X_u^Tw_1^{(0)}(w_1^{(0)})^TX_u}
	{(\mathrm{tr}(X_u^T(w_1^{(0)})^Tw_1^{(0)}X_uX_u^T
	(w_1^{(0)})^T	w_1^{(0)}X_u))^{1/2}}\in\cdot, \right.\\
	&\quad\quad\quad\quad\quad\quad \left.\mathrm{tr}(X_u^T(w_1^{(0)})^Tw_1^{(0)}X_uX_u^T(w_1^{(0)})^Tw_1^{(0)}X_u)>r^2, w_1^{(0)}\in B_u
	\right)\\
	&=\P\left(
	\frac{X_u^T(w_1^{(0)})^Tw_1^{(0)}X_u}
	{w_1^{(0)}X_uX_u^T(w_1^{(0)})^T}\in\cdot, w_1^{(0)}X_uX_u^T(w_1^{(0)})^T>r, w_1^{(0)}\in B_u
	\right).
	\end{align*}
Notice that the maximum eigenvalue of the matrix $X_uX_u^T$ is smaller than or equal to $k$, since the norm of each column of $X_u$ is smaller than or equal to one. Then $w_1^{(0)}X_uX_u^T(w_1^{(0)})^T>r$ implies that $\Vert w_1^{(0)}\Vert >(r/k)^{1/2}$. We can therefore write that
\begin{align*}
	&\P\left(
	\frac{X_u^T(w_1^{(0)})^Tw_1^{(0)}X_u}
	{w_1^{(0)}X_uX_u^T(w_1^{(0)})^T}\in\cdot, w_1^{(0)}X_uX_u^T(w_1^{(0)})^T>r, w_1^{(0)}\in B_u
	\right)\\
	&=\P\left(
	\frac{X_u^T(w_1^{(0)})^Tw_1^{(0)}X_u}
	{w_1^{(0)}X_uX_u^T(w_1^{(0)})^T}\in\cdot, w_1^{(0)}X_uX_u^T(w_1^{(0)})^T>r, \Vert w_1^{(0)}\Vert>(r/k)^{1/2}, w_1^{(0)}\in B_u
	\right).
\end{align*}
Since $B_u$ is a cone and the spectral measure of $w_1^{(0)}$ is given by $\sum_i(\delta(e_i)+\delta(-e_i))$, by the properties of the multivariate stable distribution, we can write that
\begin{align*}
	&\P\left(
	\frac{X_u^T(w_1^{(0)})^Tw_1^{(0)}X_u}
	{w_1^{(0)}X_uX_u^T(w_1^{(0)})^T}\in\cdot, w_1^{(0)}X_uX_u^T(w_1^{(0)})^T>r, \Vert w_1^{(0)}\Vert>(r/k)^{1/2}, w_1^{(0)}\in B_u
	\right)\\
	&\quad\quad \sim C_{\alpha/2}r^{-\alpha/2}\sum_{\{i:\{e_1,-e_i\}\cap B_u\neq\emptyset\}}(\sum_{j=1}^k x_{ji}^2u_j)^{\alpha/2}\delta\left(
	\frac{[x_{ji}x_{j'i}u_ju_{j'}]_{j,j'}}{\sum_jx_{ji}^2u_j}
	\right),
\end{align*}
as $r\rightarrow+\infty$. The proof that $\tilde H_2^\ast(\alpha)$ is positive semi-definite can be done by following the same line of reasoning as in the proof of Lemma \ref{grad_asy_1}.
\end{proof}


\begin{lem}
	\label{lem:joint}
		As $m\rightarrow+\infty$, the probability distribution of 
	$(\tilde{H}^{(1)}_{m},\tilde{H}^{(1)}_{m})$ converges weakly to the  law of independent stable random matrices, with spectral measures $\Gamma^\ast_1$ and $\Gamma^\ast_2$ as in \eqref{spectral_1} and \eqref{spectral2}, respectively.
\end{lem}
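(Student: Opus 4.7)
\textbf{Proof plan for Lemma \ref{lem:joint}.}
Since the marginal weak limits of $\tilde{H}^{(1)}_{m}$ and $\tilde{H}^{(2)}_{m}$ have already been identified in Lemmas \ref{grad_asy_1} and \ref{grad_asy_2}, what remains is joint convergence together with independence of the two components of the limit. The plan is to view the pair as a single random element of the finite-dimensional product space $V=\mathrm{Sym}_{k}(\mathbb{R})\times \mathrm{Sym}_{k}(\mathbb{R})$ and write $(\tilde{H}^{(1)}_{m},\tilde{H}^{(2)}_{m})=m^{-2/\alpha}\sum_{i=1}^{m}\eta_{i}$ with $\eta_{i}=(\xi_{i}^{(1)},\xi_{i}^{(2)})$ i.i.d.\ in $V$, and then to apply the multivariate generalized CLT through the regular-variation characterization \eqref{eq:levy}. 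The target is to show that $\eta_{1}$ is in the domain of attraction of an $(\alpha/2)$-stable law on $V$ whose spectral measure $\Gamma^\ast$ is the \emph{axial sum} of $\Gamma_{1}^\ast$ and $\Gamma_{2}^\ast$, that is, the symmetric finite measure on the unit sphere of $V$ obtained by transporting $\Gamma_{j}^\ast$ onto the $j$-th coordinate axis. Such a support on the two coordinate axes of $V$ is precisely the condition that forces the two components of the limit to be independent.

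The core step is a joint tail estimate expressing asymptotic independence at the level of a single summand. Using the structural decomposition already exploited in Lemmas \ref{grad_asy_1}-\ref{grad_asy_2}, I write $\xi_{1}^{(1)}=w_{1}^{2}\,A(w_{1}^{(0)})$ with $A(\cdot)$ uniformly bounded in the Frobenius norm by a constant depending only on $X$, and I bound $\Vert\xi_{1}^{(2)}\Vert_{F}\le C\Vert w_{1}^{(0)}\Vert^{2}$. Independence of $w_{1}$ and $w_{1}^{(0)}$, together with the fact that both $|w_{1}|$ and $\Vert w_{1}^{(0)}\Vert$ have $\alpha$-regularly varying tails, then yields, for every $\epsilon>0$,
\begin{equation*}
\P\bigl(\Vert\xi_{1}^{(1)}\Vert_{F}>\epsilon r,\ \Vert\xi_{1}^{(2)}\Vert_{F}>\epsilon r\bigr)\le \P(|w_{1}|>c\sqrt{r})\,\P(\Vert w_{1}^{(0)}\Vert>c'\sqrt{r})=O(r^{-\alpha}),
\end{equation*}
which is $o(r^{-\alpha/2})$. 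Each marginal tail $\P(\Vert\xi_{1}^{(j)}\Vert_{F}>\epsilon r)$ is of order $r^{-\alpha/2}$, by the regular variation already underpinning Lemmas \ref{grad_asy_1}-\ref{grad_asy_2}. Hence, conditionally on $\Vert\eta_{1}\Vert_{V}>r$, the polar part $\eta_{1}/\Vert\eta_{1}\Vert_{V}$ concentrates, as $r\to\infty$, on the union of the two coordinate axes of $V\setminus\{0\}$.

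Combining this axial concentration with the marginal spectral measures of Lemmas \ref{grad_asy_1}-\ref{grad_asy_2}, the characterization \eqref{eq:levy} applied in $V$ gives
\begin{equation*}
\lim_{r\to\infty} r^{\alpha/2}\,\P\bigl(\Vert\eta_{1}\Vert_{V}>r,\ \eta_{1}/\Vert\eta_{1}\Vert_{V}\in B\bigr)=C_{\alpha/2}\,\Gamma^\ast(B)
\end{equation*}
for every Borel $B$ with $\Gamma^\ast(\partial B)=0$, so the multivariate generalized CLT yields weak convergence of $m^{-2/\alpha}\sum_{i=1}^{m}\eta_{i}$ to an $(\alpha/2)$-stable law on $V$ with spectral measure $\Gamma^\ast$. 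Because $\Gamma^\ast$ is supported on the two disjoint coordinate axes, the log-characteristic exponent $\int_{\mathbb S_{V}}|\langle (t_{1},t_{2}),s\rangle|^{\alpha/2}\Gamma^\ast(ds)$ splits additively into a function of $t_{1}$ alone plus a function of $t_{2}$ alone, each equal to the log-characteristic exponent of the corresponding marginal limit, which is the independence claim of the lemma. The main technical obstacle is not the support statement but the verification that the axial spectral measure carries the \emph{correct mass} on each axis: the tail estimate above localizes it to the union of the two axes, while its restriction to each axis is then forced to coincide with $\Gamma_{j}^\ast$ by the already established marginal convergence.
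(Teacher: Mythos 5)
Your proof is correct and follows essentially the same route as the paper: both reduce independence of the limit to the statement that the joint tail $\P(\Vert\xi^{(1)}\Vert_F>\epsilon r,\Vert\xi^{(2)}\Vert_F>\epsilon r)$ is $o(r^{-\alpha/2})$, and both derive that bound from the independence of $w_1$ and $w_1^{(0)}$. The only cosmetic difference is that the paper obtains the factorization of the joint tail by first conditioning on the cone events $\{w_1^{(0)}\in B_u\}$ and invoking the criterion of Theorem~\ref{th:clt}, whereas you bound the two components directly by $|w_1|^2$ and $\Vert w_1^{(0)}\Vert^2$ respectively and apply independence immediately; both yield the same $O(r^{-\alpha})$ estimate and the same axial spectral measure.
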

\begin{proof}
	Since $\tilde H^{(1)}_m$ and $H^{(2)}_m$ converge marginally to $\alpha/2$-stable random matrices, by the properties of the multivariate stable distributions it is sufficient to show that they converge to stochastically independent random matrices. By Theorem \ref{th:clt}, we know that
\begin{align*}n\P\Biggl(&
	\Vert 
	[w_i^2 \langle x_j,x_{j'}\rangle I(\langle w_i^{(0)},x_j\rangle >0)I(\langle w_i^{(0)},x_{j'}\rangle >0 )]_{j,j'}
	\Vert_F>n^{2/\alpha},\\
	&\Vert 
	[\langle x_j,w^{(0)}_i\rangle\langle x_{j'},w^{(0)}_i\rangle I(\langle w_i^{(0)},x_j\rangle >0)I(\langle w_i^{(0)},x_{j'}\rangle >0 )]_{j,j'}
	\Vert_F>n^{2/\alpha}
	\Biggr)
\end{align*}
and
$$
n\P\Biggl(
	\Vert 
	[w_i^2 \langle x_j,x_{j'}\rangle I(\langle w_i^{(0)},x_j\rangle >0)I(\langle w_i^{(0)},x_{j'}\rangle >0 )]_{j,j'}
	\Vert_F>n^{2/\alpha}
	\Biggr)
	$$
converge to finite limits, as $n\rightarrow\infty$. 	Hence, again by Theorem \ref{th:clt}, it is sufficient to show that
\begin{align*}
&\lim_{n\rightarrow\infty}n\P\Biggl(
\Vert 
[w_i^2 \langle x_j,x_{j'}\rangle I(\langle w_i^{(0)},x_j\rangle >0)I(\langle w_i^{(0)},x_{j'}\rangle >0 )]_{j,j'}
\Vert_F>n^{2/\alpha},\\
&\Vert 
[\langle x_j,w^{(0)}_i\rangle\langle x_{j'},w^{(0)}_i\rangle I(\langle w_i^{(0)},x_j\rangle >0)I(\langle w_i^{(0)},x_{j'}\rangle >0 )]_{j,j'}
\Vert_F>n^{2/\alpha}
\Biggr)=0,
\end{align*}
which ensures that the L\'evy measure of the limit infinitely divisible distribution of $(\tilde H_m^{(1)},\tilde H_m^{(2)})$ is the sum of a measure $\nu_1$  concentrated on the space spanned by the first $k^2$ coordinates and a measure $\nu_2$ on the space spanned by the last $k^2$ coordinates.
We can write that 
\begin{align*}
	&n\P\Biggl(
	\Vert 
	[w_i^2 \langle x_j,x_{j'}\rangle I(\langle w_i^{(0)},x_j\rangle >0)I(\langle w_i^{(0)},x_{j'}\rangle >0 )]_{j,j'}
	\Vert_F>n^{2/\alpha},\\
	&\quad\quad\quad\Vert 
	[\langle x_j,w^{(0)}_i\rangle\langle x_{j'},w^{(0)}_i\rangle I(\langle w_i^{(0)},x_j\rangle >0)I(\langle w_i^{(0)},x_{j'}\rangle >0 )]_{j,j'}
	\Vert_F>n^{2/\alpha}
	\Biggr)\\
	&=n\sum_{u\in\{0,1\}^k}\P(w_i^{(0)}\in B_u)\\
	&\P\biggl(
	\Vert 
	[w_i^2 \langle x_j,x_{j'} \rangle u_ju_{j'}]_{j,j'}
	\Vert_F>n^{2/\alpha},
	\Vert 
	[\langle x_j,w^{(0)}_i\rangle\langle x_{j'},w^{(0)}_i \rangle u_j u_{j'} ]_{j,j'}
	\Vert_F>n^{2/\alpha}	
	\mid w_i^{(0)}\in B_u\biggr)\\
		&=n\sum_{u\in\{0,1\}^k}\P(w_i^{(0)}\in B_u)\P\biggl(
		\Vert 
		[\langle x_j,w^{(0)}_i\rangle\langle x_{j'},w^{(0)}_i \rangle u_j u_{j'} ]_{j,j'}
		\Vert_F>n^{2/\alpha}	
		\mid w_i^{(0)}\in B_u\biggr)
		\\
	&\quad\quad\quad\P\biggl(
	\Vert 
	[w_i^2 \langle x_j,x_{j'} \rangle u_ju_{j'}]_{j,j'}
	\Vert_F>n^{2/\alpha}\biggr)\\
	&=\sum_{u\in\{0,1\}^k}n\P\biggl(
	\Vert 
	[\langle x_j,w^{(0)}_i\rangle\langle x_{j'},w^{(0)}_i \rangle u_j u_{j'} ]_{j,j'}
	\Vert_F>n^{2/\alpha}	
	, w_i^{(0)}\in B_u\biggr)
	\\
	&\quad\quad\quad\P\biggl(
	\Vert 
	[w_i^2 \langle x_j,x_{j'} \rangle u_ju_{j'}]_{j,j'}
	\Vert_F>n^{2/\alpha}\biggr)\rightarrow 0,
\end{align*}
as $n\rightarrow\infty$.
\end{proof}

\begin{proof}[\bf Proof of Theorem \ref{grad_asy}]
	By Lemma \ref{grad_asy_1}, Lemma \ref{grad_asy_1}, Lemma \ref{lem:joint}, and the properties of stable distributions, $\tilde H_m(W(0),X)$ converges in distribution to a positive semi-definite random matrix, with $(\alpha/2)$-stable distribution, and spectral measure $\Gamma_1^\ast+\Gamma_2^\ast$. 
	\end{proof}


\subsection{Proof of Theorem \ref{th:grad_asy2}}\label{proofTh3}
To simplify the notation, we set in this section: $w:=w(0)$, $w^{(0)}:=w^{(0)}(0)$, $W:=W(0)$, $\tilde{H}^{(1)}_{m}:=\tilde{H}^{(1)}_{m}(W(0),X)$ and $\tilde{H}^{(2)}_{m}:=\tilde{H}^{(2)}_{m}(W(0),X)$, with $\tilde{H}^{(1)}_{m}(W,X)$ and $\tilde{H}^{(2)}_{m}(W,X)$ defined in \eqref{kernel_1} and \eqref{kernel_2}.

From \eqref{kernel}, $\tilde{H}_{m}(W(0),X))$ is the sum of two positive semi-definite random matrices, $\tilde H_m^{(1)}$ and $\tilde H_m^{(2)}$. The following results show that for every $\delta>0$, there exist $\lambda_1>0$ and $\lambda_{2}>0$ such that, for $m$ sufficiently large, with probability at least $1-\delta$
$$
\lambda_{\text{min}}(\tilde{H}_{m}^{(i)})>\lambda_{i}.
$$
with the large-width behaviour of $\tilde{H}^{(i)}_{m}$ being characterized in Lemma \ref{grad_asy_1} and Lemma \ref{grad_asy_2}, through an $(\alpha/2)$-Stable limiting random matrix $\tilde{H}^{\ast}_{i}(\alpha)$ with spectral measure $\Gamma_{i}^{\ast}$ of the form \eqref{spectral_1} and \eqref{spectral2}. To prove that the minumum eigenvales of $\tilde{H}^{(1)}_{m}$ and $\tilde{H}^{(2)}_{m}$ are bounded away from zero, we first need to inspect the characteristics of the distributions of $\tilde{H}^{\ast}_{1}(\alpha)$ and of  $\tilde{H}^{\ast}_{2}(\alpha)$.  This is the content of Lemma \ref{eig_1} and of Lemma \ref{eig2}. Then, the results concerning the minumum eigenvalues of $\tilde{H}^{(1)}_{m}$ and $\tilde{H}^{(2)}_{m}$ are given in Lemma \ref{eig_2} and Lemma \ref{eig_3}.

\begin{lem}\label{eig_1}
	Under the assumptions of Theorem \ref{teo_ntk}, the distribution of the random matrix $\tilde{H}^{\ast}_{1}(\alpha)$ is absolutely continuous in the subspace of the symmetric positive semi-definite matrices with zero entries in the positions $(j,j^{\prime})$ such that $\langle x_j,x_{j'}\rangle =0$, with $j,j^{\prime}\in\{1,\ldots,k\}$, with the topology of Frobenius norm.
\end{lem}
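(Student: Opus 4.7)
The plan is to exploit the discrete structure of the spectral measure $\Gamma_1^\ast$ obtained in Lemma \ref{grad_asy_1}. Setting $m_u := [\langle x_j,x_{j'}\rangle u_j u_{j'}]_{j,j'}$, each $m_u$ is positive semi-definite (indeed $m_u = \tilde X_u^T \tilde X_u$ where $\tilde X_u$ is the $d\times k$ matrix with columns $u_j x_j$), and $\Gamma_1^\ast$ is supported on the finite set $\{m_u/\Vert m_u\Vert_F : u \in \{0,1\}^k\}$. Since $\alpha/2 \in (0,1)$ and the spectral measure is totally one-sided (supported in the positive cone), the $(\alpha/2)$-stable matrix $\tilde H_1^\ast(\alpha)$ admits the series representation
\begin{equation*}
\tilde H_1^\ast(\alpha) \stackrel{d}{=} \sum_{u \in \{0,1\}^k} \left( C_{\alpha/2}\,\P(w_i^{(0)} \in B_u) \right)^{2/\alpha} Z_u\, m_u,
\end{equation*}
where the $Z_u$'s are i.i.d.\ positive $(\alpha/2)$-stable random variables, each admitting a strictly positive density on $(0,\infty)$.

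The crux of the argument is to show that the matrices $\{m_u\}$ with $\P(w_i^{(0)}\in B_u)>0$ span the linear subspace $V := \{A \in \mathrm{Sym}(\R^{k\times k}) : A_{jj'} = 0 \text{ whenever } \langle x_j, x_{j'}\rangle = 0\}$. For the positivity of the masses, the coordinates of $w_i^{(0)}$ are i.i.d.\ symmetric $\alpha$-stable and therefore admit a strictly positive joint density on $\R^d$; by the assumed linear independence of $x_1,\dots,x_k$ (which forces $k\le d$), for every $u\in\{0,1\}^k$ one can find $v\in\R^d$ with $\langle v, x_j\rangle>0$ when $u_j=1$ and $\langle v, x_j\rangle<0$ when $u_j=0$, giving an open neighbourhood inside $B_u$, whence $\P(w_i^{(0)}\in B_u)>0$. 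For the spanning, the one-coordinate choice $u=e_j$ gives $m_{e_j}=\Vert x_j\Vert^2 E_{jj}$, placing the diagonal matrix units in the span; the two-coordinate choice $u=e_j+e_{j'}$, with $j\ne j'$ and $\langle x_j, x_{j'}\rangle\ne 0$, gives
\begin{equation*}
m_{e_j+e_{j'}} - m_{e_j} - m_{e_{j'}} = \langle x_j, x_{j'}\rangle (E_{jj'}+E_{j'j}),
\end{equation*}
so the admissible off-diagonal symmetric units lie in the span as well. These elements form a basis of $V$.

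The conclusion then follows by a change-of-variables argument. The linear map $\Phi: \R^{|\{0,1\}^k|}\to V$, $(z_u)_u \mapsto \sum_u c_u z_u m_u$, with $c_u := (C_{\alpha/2}\,\P(w_i^{(0)} \in B_u))^{2/\alpha}>0$, is surjective onto $V$ by the spanning claim. Select a subset $U_0\subset\{0,1\}^k$ of size $\dim V$ such that $(m_u)_{u\in U_0}$ is a basis of $V$; conditionally on $(Z_u)_{u\notin U_0}$, the random matrix $\tilde H_1^\ast(\alpha)$ is an affine invertible image of the absolutely continuous random vector $(Z_u)_{u\in U_0}$ into $V$, and hence has a density on $V$. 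Integration against the marginal of $(Z_u)_{u\notin U_0}$ preserves absolute continuity. The main obstacle lies in rigorously justifying the series representation with i.i.d.\ positive stables: one must verify that $\tilde H_1^\ast(\alpha)$ is infinitely divisible with L\'evy measure supported on the finitely many half-rays $\{r\, m_u : r>0\}$ (a direct consequence of the form of $\Gamma_1^\ast$), and then obtain the decomposition along these rays from the standard L\'evy--Itô-type representation of one-sided stable vectors.
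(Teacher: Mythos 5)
Your proof is structurally correct but takes a genuinely different route from the paper's. The paper appeals to Nolan's non-degeneracy criterion (\citet{Nol(10)}): it shows $\inf_{s\in \mathbb S^{k^2-1}_0}\int |\langle s,u\rangle |^{\alpha/2}\Gamma_1^\ast(\ddr u)>0$ by exploiting continuity on a compact set, positivity of $\mathbb P(w_i^{(0)}\in B_u)$ for every $u$, and the fact that the system $\sum_{j,j'}u_ju_{j'}\langle x_j,x_{j'}\rangle s_{j,j'}=0$ ($u\in\{0,1\}^k$) has only the trivial solution in the subspace $V$. You instead decompose the limiting matrix along the finitely many rays of the discrete spectral measure, identify $\mathrm{span}\{m_u\}=V$ via the $u=e_j$ and $u=e_j+e_{j'}$ probes, and conclude by conditioning on a subset of the $Z_u$'s together with an affine change of variables. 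Note that the paper's "trivial common kernel" argument and your "spanning" argument are the same linear-algebra fact, merely dualized; it is reassuring that both sides of this duality appear in the two proofs. What your approach buys is self-containment (no black-box density criterion for stable vectors) and an explicit description of the support; what Nolan's criterion buys is that one never has to produce the series representation or select a basis from the $m_u$'s.

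The one step you rightly flag as incomplete --- the series representation $\tilde H_1^\ast(\alpha)\stackrel{d}{=}\sum_u c_u Z_u m_u$ with independent positive $(\alpha/2)$-stable $Z_u$ --- is not a genuine gap, but you should close it. Since $\alpha/2\in(0,1)$, the spectral measure is a finite sum of point masses in the positive cone, and the associated L\'evy measure is a finite sum of measures each supported on a single ray $\{r\,m_u/\Vert m_u\Vert_F:r>0\}$ with density proportional to $r^{-1-\alpha/2}$. The L\'evy--It\^o decomposition (or \citet[Chapter 3]{Sam(94)}, series representations of stable laws) gives independent one-sided $(\alpha/2)$-stable contributions along each ray, and no drift/compensation term is needed because $\alpha/2<1$; this yields the claimed factorization, with scale parameters determined (up to universal constants in $\alpha$) by the weights $\mathbb P(w_i^{(0)}\in B_u)\Vert m_u\Vert_F^{\alpha/2}$. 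Once that is in place, your spanning argument and the change-of-variables step correctly establish absolute continuity in $V$, and the shift implicit in any centering convention does not affect the conclusion.
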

\begin{proof}
	 From \citet{Nol(10)}, it is sufficient to show that
$$
\inf_{s\in \mathbb S^{k^2-1}_0}\int |\langle s,u\rangle |^{\alpha/2}\Gamma_1^\ast(\ddr u)\neq 0,
$$
where $\Gamma_1^\ast$ is the spectral measure \eqref{spectral_1}, $\mathbb S^{k^2-1}_0$ is the unit sphere in the space of the $k\times k$ symmetric matrices  such that $s_{j,j'}=0$ if $\langle x_j,x_{j'}\rangle=0$, with the  Frobenius metric. Now, since
\begin{align*}
    &\int |\langle s,u\rangle |^{\alpha/2}\Gamma_1^\ast(\ddr u)\\
    &=C_{\alpha/2}\E\left(|\sum_{j,j'}s_{j,j'}\langle x_j,x_{j'}\rangle 
    I(\langle \wz_i,x_j\rangle >0)I(\langle \wz_i,x_{j'}\rangle >0)|^{\alpha/2}\right)
\end{align*}
is a continuous function of $s$ that takes value in a compact set, then the minimum is attained. Thus it is sufficient to show that for every $s\in \mathbb S^{k^2-1}_0$,
$$
\E\left(|\sum_{j,j'}s_{j,j'}\langle x_j,x_{j'}\rangle 
    I\langle \wz_i,x_j\rangle >0)I(\langle \wz_i,x_{j'}\rangle >0)|^{\alpha/2}\right)\neq 0.
$$
For every $j$ and every  $u_j\in\{0,1\}$, let $A_j^{u_j}$ be the event $(\langle \wz_i,x_j\rangle >0)$ if $u_j=1$ and its complement if $u_j=0$.
Then
\begin{align*}
   &\E\left(|\sum_{j,j'}s_{j,j'}\langle x_j,x_{j'}\rangle 
    I(\langle \wz_i,x_j\rangle >0)I(\langle \wz_i,x_{j'}\rangle >0)|^{\alpha/2}\right)\\
    &\quad=\sum_{u_1,\dots,u_k}\P(A_1^{u_1}\cap\dots\cap A_k^{u_k}) |\sum_{j,j'}u_ju_{j'}s_{j,j'}\langle x_j,x_{j'}\rangle |^{\alpha/2} .
\end{align*}
Since $x_1,\dots,x_k$ are linearly independent, then for every $u_1,\dots, u_k$, $\P(A_1^{u_1}\cap\dots,A_k^{u_k})>0$.  To prove it, assume, without loss of generality, that $u_i=1$ for every $i$. Since $x_1,\dots,x_k$ are linearly independent, then we can complete the matrix
$
X=[x_1\;\dots\,x_k]
$ by adding $k-d$ columns in such a way that the completed matrix $\tilde X$ is non-singular. 
For every  $d$-dimensional vector $v$ such that $v_1>0,\dots,v_k>0$ there exists a vector $u$ such that $u=(\tilde X^T)^{-1}v$. Thus, 
\begin{align*}
&\{u\in\mathbb R^d:\langle u,x_1\rangle>0 ,\dots,\langle u,x_k\rangle>0\}=\{(\tilde X^T)^{-1}v:v_1>0,\dots,v_k>0\}
\end{align*}
is an open non-empty set. Since $\wz_i$ has independent and identically distributed components, with stable distribution, then 
$$
\P\left(\wz_i\in\{(\tilde X)^{-1}v:v_1>0,\dots,v_k>0\}\right)>0.
$$
This concludes the proof that $\P(A_1^{u_1}\cap\dots,A_k^{u_k})>0$ for every $(u_1,\dots, u_k)\in\{0,1\}^k\}$.
It follows that $\int |\langle s,u\rangle |^{\alpha/2}\Gamma_1^\ast(du)$ is zero if and only if, for every $(u_1,\dots,u_k)\in\{0,1\}^k$, it holds
$$
\sum_{j,j'}u_j,u_{j'}\langle x_j,x_{j'}\rangle s_{j,j'}=0.
    $$
The only solution of the above system of  equations in the space of symmetric matrices $s$ such that $s_{j,j'}=0$ if $\langle x_j,x_{j'}\rangle =0$ is $s=0$, which is not consistent with $\Vert s\Vert_F=1$.
\end{proof}

We observe that the space of the symmetric positive semi-definite matrices with zeros in the entries $(j,j^{\prime})$ such that $\langle x_j,x_{j'}\rangle =0$ contains all the matrices with non-zero diagonal element since $\langle x_j,x_j\rangle=1\neq 0$ for every index $j$. 

\begin{lem}\label{eig_2}
	Under the assumptions of Theorem \ref{teo_ntk}, for every $\delta>0$ there exists $\lambda_{1}>0$ such that with probability at least $1-\delta$
	\begin{displaymath}
		\lambda_{\text{min}}(\tilde{H}^{\ast}_{1}(\alpha))>\lambda_1.
	\end{displaymath}
\end{lem}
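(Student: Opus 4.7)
The plan is to leverage Lemma \ref{eig_1} to show that $\lambda_{\min}(\tilde{H}^{\ast}_{1}(\alpha))>0$ almost surely, and then deduce the uniform lower bound by continuity of the probability measure.

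First, by Lemma \ref{grad_asy_1}, $\tilde{H}^{\ast}_{1}(\alpha)$ is positive semi-definite almost surely, so $\lambda_{\min}(\tilde{H}^{\ast}_{1}(\alpha))\geq 0$ almost surely, and the event $\{\lambda_{\min}(\tilde{H}^{\ast}_{1}(\alpha))=0\}$ coincides (up to a null set) with $\{\det \tilde{H}^{\ast}_{1}(\alpha)=0\}$. Let $V$ denote the linear subspace of symmetric $k\times k$ matrices $s$ with $s_{j,j'}=0$ whenever $\langle x_j,x_{j'}\rangle=0$. By Lemma \ref{eig_1}, the law of $\tilde{H}^{\ast}_{1}(\alpha)$ is absolutely continuous with respect to Lebesgue measure $\mu_V$ on $V$. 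It therefore suffices to show that the set $Z:=\{s\in V:\det s=0\}$ has $\mu_V$-measure zero.

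The key observation is that $Z$ is the intersection of $V$ with the algebraic hypersurface $\{\det s=0\}$, so $Z$ has $\mu_V$-measure zero unless $Z=V$. To rule this out, I would exhibit a single non-singular element of $V$. Since $\|x_j\|=1$ for every $j$, the diagonal entries $\langle x_j,x_j\rangle=1$ are all non-zero, hence the diagonal coordinates of $V$ are unconstrained; in particular the identity matrix $I_k$, whose off-diagonal entries are all zero, automatically lies in $V$ and satisfies $\det I_k=1\neq 0$. Thus $Z\subsetneq V$, $\mu_V(Z)=0$, and consequently
\begin{displaymath}
\P(\lambda_{\min}(\tilde{H}^{\ast}_{1}(\alpha))=0)=\P(\det \tilde{H}^{\ast}_{1}(\alpha)=0)=0.
\end{displaymath}

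Finally, to extract a quantitative lower bound, I would use continuity of probability from below: the events $\{\lambda_{\min}(\tilde{H}^{\ast}_{1}(\alpha))>1/n\}$ increase to $\{\lambda_{\min}(\tilde{H}^{\ast}_{1}(\alpha))>0\}$, which has probability one. Hence, for every $\delta>0$ there exists $n\in\mathbb{N}$ such that $\P(\lambda_{\min}(\tilde{H}^{\ast}_{1}(\alpha))>1/n)>1-\delta$, and one takes $\lambda_1=1/n$.

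The main subtlety is verifying that $V$ contains a non-singular matrix; this is where the normalization $\|x_j\|=1$ (which makes the diagonal slots of $V$ unconstrained) does the work. The rest of the argument reduces to standard facts about algebraic hypersurfaces having measure zero in their ambient linear space and continuity of $\P$ along increasing sequences.
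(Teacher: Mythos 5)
Your proposal is correct and follows essentially the same route as the paper: invoke the absolute continuity from Lemma \ref{eig_1} on the subspace $V$, observe (as the paper does in the remark following Lemma \ref{eig_1}) that $\|x_j\|=1$ makes the diagonal slots unconstrained so $V$ contains a non-singular matrix, conclude $\P(\det\tilde{H}^{\ast}_{1}(\alpha)=0)=0$, combine with positive semi-definiteness from Lemma \ref{grad_asy_1} to get $\lambda_{\min}>0$ a.s., and then apply continuity of probability to extract a fixed $\lambda_1$. The only difference is that you spell out the algebraic-hypersurface / Lebesgue-null argument explicitly where the paper leaves it implicit; that is a clarification, not a change of method.
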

\begin{proof}
	Since the distribution of $\tilde{H}^{\ast}_{1}(\alpha)$ is absolutely continuous in the space of symmetric positive semi-definite matrices with zero entries in the positions $j,j'$ such that $\langle x_,x_{j'}\rangle=0$, and since this space contains all the symmetric positive semi-definite matrices with non-zero diagonal entries,
then we can write that $\P(\det(\tilde{H}^{\ast}_{1}(\alpha))=0)=0$.  Moreover, since $\tilde{H}^{\ast}_{1}(\alpha)$ is positive semi-definite, then $\P(\lambda_{\text{min}}(\tilde{H}^{\ast}_{1}(\alpha))>0)=1$.  Thus, for every $\delta>0$, the exists $\lambda_1>0$ such that 
$\P(\lambda_{\text{min}}(\tilde{H}^{\ast}_{1}(\alpha))>\lambda_1)>1-\delta$.  
\end{proof}


\begin{lem}\label{eig2}
	Under the assumptions of Theorem \ref{teo_ntk}, the distribution of the random matrix $\tilde{H}^{\ast}_{2}(\alpha)$ is absolutely continuous in the subspace of the symmetric positive semi-definite matrices, with the topology of Frobenius norm.
\end{lem}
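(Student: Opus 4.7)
The plan is to mirror the argument of Lemma \ref{eig_1}, replacing $\Gamma_1^\ast$ with the spectral measure $\Gamma_2^\ast$ in \eqref{spectral2}. View $\tilde{H}_2^\ast(\alpha)$ as a random element of the vector space $V_k$ of symmetric $k\times k$ matrices endowed with the Frobenius inner product. By the absolute continuity criterion of \citet{Nol(10)} for multivariate $(\alpha/2)$-stable laws, the distribution of $\tilde{H}_2^\ast(\alpha)$ is absolutely continuous (in the linear span of the support of $\Gamma_2^\ast$) if and only if
$$
\inf_{s \in \mathbb{S}}\int \bigl|\langle s, M\rangle\bigr|^{\alpha/2}\, \Gamma_2^\ast(\ddr M) \;>\; 0,
$$
where $\mathbb{S}$ is the Frobenius unit sphere in the relevant subspace of $V_k$. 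A continuity-and-compactness argument as in the proof of Lemma \ref{eig_1} shows that this infimum is attained: the integrand is continuous in $s$ and $\Gamma_2^\ast$ is a finite sum of point masses, so the integral is a continuous function of $s$ on the compact unit sphere.

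It therefore suffices to prove that no nonzero $s$ in the relevant subspace annihilates every atom of $\Gamma_2^\ast$. By inspection of \eqref{spectral2}, these atoms are proportional to the rank-one PSD matrices $M_{i,u}=y_i(u)y_i(u)^T$ with $y_i(u)_j := x_{ji}u_j$, indexed by pairs $(i,u)$ such that $\{e_i,-e_i\}\cap B_u\neq\emptyset$ and $\sum_j x_{ji}^2 u_j\neq 0$. Thus the non-degeneracy question reduces to showing that the collection
$$
\bigl\{\, y_i(u)\,y_i(u)^T \;:\; (i,u)\in\mathrm{supp}(\Gamma_2^\ast)\,\bigr\}
$$
spans the target subspace of $V_k$. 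For each coordinate index $i=1,\ldots,d$, exactly one $u^+(i)\in\{0,1\}^k$ satisfies $e_i\in B_{u^+(i)}$, namely $u^+(i)_j=I(x_{ji}>0)$, and exactly one $u^-(i)$ satisfies $-e_i\in B_{u^-(i)}$, namely $u^-(i)_j=I(x_{ji}<0)$; the corresponding vectors $y_i(u^{\pm}(i))$ are the componentwise positive and negative parts of the $i$-th coordinate column $x^i:=[x_{ji}]_j$.

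The main obstacle is precisely the spanning step: one must show, using the linear independence and unit-norm assumptions on $x_1,\ldots,x_k$, that the $2d$ rank-one PSD matrices produced above span the target subspace of $V_k$. My plan is a combinatorial case analysis exploiting the sign pattern of the coordinate columns $x^1,\ldots,x^d$: linear combinations of $y_i(u^+(i))y_i(u^+(i))^T$ and $y_i(u^-(i))y_i(u^-(i))^T$ recover all entries $x_{ji}x_{j'i}$ of $x^i(x^i)^T$ whose signs agree under the sign pattern, and by varying $i$ and using linear independence of the $x_j$'s — equivalently, positive definiteness of the Gram matrix $G=\sum_i x^i(x^i)^T$ — one recovers a complete spanning set, modulo a suitable identification of the ambient subspace analogous to that in Lemma \ref{eig_1}. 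Almost sure positive semi-definiteness of $\tilde{H}_2^\ast(\alpha)$ was already established in Lemma \ref{grad_asy_2}, so combining absolute continuity with this property yields the conclusion on the PSD cone.
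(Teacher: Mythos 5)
Your setup matches the paper's: both invoke the non-degeneracy criterion from \citet{Nol(10)}, reduce the infimum to a finite sum over the atoms of $\Gamma_2^\ast$ via continuity and compactness of the unit sphere, and correctly identify the atoms as multiples of $y_i(u)y_i(u)^T$ with $y_i(u)_j=x_{ji}u_j$, together with the observation that for each coordinate $i$ there are at most two relevant sign patterns $u^{+}(i)$ and $u^{-}(i)$ and that $y_i(u^{+}(i))+y_i(u^{-}(i))=x^i$. Up to this point the two arguments coincide.

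The gap is in the non-degeneracy step, which you describe as ``the main obstacle'' and leave as a plan rather than a proof. Two concrete problems. First, the spanning goal as you state it — that the at most $2d$ rank-one matrices $y_i(u^{\pm}(i))y_i(u^{\pm}(i))^T$ span ``the target subspace of $V_k$'' — cannot in general be achieved because of a dimension count: if $d=k$ and $k\geq 4$, then $2d<k(k+1)/2=\dim V_k$, so the atoms cannot span the symmetric matrices, and you never pin down what smaller ``target subspace'' would suffice or why non-degeneracy there delivers the lemma. Second, the ``combinatorial case analysis'' recovering all products $x_{ji}x_{j'i}$ with matching signs and then leveraging positive definiteness of $\sum_i x^i(x^i)^T$ is asserted but not carried out, and it is not clear it can be carried out along these lines for the reason just given. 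So the proposal does not constitute a proof.

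The paper's argument bypasses the spanning question entirely. Writing any nonzero PSD matrix as $s=caa^T+s'$ with $c>0$, $\Vert a\Vert=1$, $s'$ PSD, one gets for each atom $\langle s,y_i(u)y_i(u)^T\rangle_F\geq c\,\langle a,y_i(u)\rangle^2\geq 0$, so the integral of $|\langle s,\cdot\rangle|^{\alpha/2}$ against $\Gamma_2^\ast$ vanishes only if $\langle a,y_i(u^{+}(i))\rangle=\langle a,y_i(u^{-}(i))\rangle=0$ for every $i$. Summing those two equalities gives $\langle a,x^i\rangle=\sum_j a_jx_{ji}=0$ for all $i$, i.e.\ $\sum_j a_jx_j=0$, which contradicts linear independence of $x_1,\dots,x_k$ since $a\neq 0$. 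Compared with your route, this approach buys a non-degeneracy certificate on the full PSD cone from just linear independence of the inputs, without needing the atoms to span anything — it effectively exhibits a single positive definite matrix $\sum_{i,u}y_i(u)y_i(u)^T$ in the support's span, which is what the downstream argument in Lemma \ref{eig_3} actually uses. If you wish to pursue a spanning-style argument, you would have to (a) replace $V_k$ by the actual linear span of the atoms, (b) note that non-degeneracy on that span is then automatic, and (c) separately verify that the span contains a nonsingular matrix so that the determinant is not identically zero on it; but as written your proposal does not distinguish these steps, and the spanning claim in $V_k$ is false in general.
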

\begin{proof}
 From \citet{Nol(10)}, it is sufficient to show that
$$
\inf_{s\in \mathbb S^{k^2-1}}\int |\langle s,u\rangle |^{\alpha/2}\Gamma_2^\ast(\ddr u)\neq 0,
$$
where $\Gamma_2^\ast$ is the spectral measure \eqref{spectral2}, $\mathbb S^{k^2-1}$ is the unit sphere in the space of the $k\times k$ symmetric positive semi-definite matrices, with the  Frobenius norm.
For every $u\in\{0,1\}^k$, let $B_u=\{v\in \mathbb R^d:\langle v,x_j\rangle>0\mbox{ if  }u_j=1, \langle v,x_j\rangle\leq 0 \mbox{ if  }u_j=0 \}$. Moreover, for every $i=1,\dots, k$, let $e_i$ be a $d$-dimensional random vector satisfying $e_{ij}=1$ for $j=i$ and $e_{ij}=0$ for $j\neq i$.
Finally, let $C_{\alpha/2}$ be the constant defined in Equation \eqref{const_stabl}. Then
\begin{align*}
	&\int |\langle s,u\rangle |^{\alpha/2}\Gamma_2^\ast(\ddr u)=C_{\alpha/2}|\sum_{j,j'}
	s_{j,j'}\sum_{u\in \{0,1\}^k}\sum_{\{i:\{e_i,-e_i\}\cap B_u\neq \emptyset \}}x_{ji}u_jx_{j'i}u_{j'}|^{\alpha/2}.
\end{align*}
Since $\sum_{j,j'}
s_{j,j'}\sum_{u\in \mathcal U}\sum_{E}z_{u,i}x_{ji}u_jx_{j'i}u_{j'}$ 
is continuous as a function of $s$ and $s$ takes values in a compact set, then the minimum is attained. Thus it is sufficient to show that for every $s\in \mathbb S^{k^2-1}$,
$$
\sum_{u\in \{0,1\}^k}\sum_{\{i:\{e_i,-e_i\}\cap B_u\neq \emptyset \}}\sum_{j,j'}
s_{j,j'}x_{ji}u_jx_{j'i}u_{j'}\neq 0.
$$
Since $\Vert s\Vert_F=1$, then $s$ is not the null matrix. Hence there exist $c>0$, a vector $a$ with $\Vert a\Vert =1$ and a positive semi-definite, symmetric matrix $s'$ such that 
$$
s=caa^T+s'.
$$
Since $B_u\cap B_{u'}=\emptyset$, when $u\neq u'$, then, for every $i=1,\dots,d$ and $j=1,\dots,k$, there exists one and only one $u\in\{0,1\}^k$ such that $u_j=1$ and $\{e_i,-e_i\}\cap B_u\neq \emptyset$.  Then we can write that 
 \begin{align*}
 &\sum_{u\in \{0,1\}^k}\sum_{\{i:\{e_i,-e_i\}\cap B_u\neq \emptyset \}}\sum_{j,j'}
 s_{j,j'}x_{ji}u_jx_{j'i}u_{j'}\\
 &\geq c
 \sum_{u\in \{0,1\}^k}\sum_{\{i:\{e_i,-e_i\}\cap B_u\neq \emptyset \}}(\sum_{j}
 a_jx_{ji}u_j)^2\\
&=\sum_{i=1}^d \left((\sum_{j=1}^k a_jx_{ji})^2\sum_{\{u:\{e_i,-e_i\}\cap B_u\neq\emptyset\}}u_j\right)\\
&=\sum_{i=1}^d (\sum_{j=1}^k a_jx_{ji})^2,
 \end{align*}
which is strictly positive, since the $x_j$ are linearly independent, and $\Vert a\Vert=1$.
This concludes the proof.
\end{proof}

\begin{lem}\label{eig_3}
	Under the assumptions of Theorem \ref{teo_ntk}, for every $\delta>0$ there exists $\lambda_{2}>0$ such that with probability at least $1-\delta$
	\begin{displaymath}
		\lambda_{\text{min}}(\tilde{H}^{\ast}_{2}(\alpha))>\lambda_2.
	\end{displaymath}
\end{lem}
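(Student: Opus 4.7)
The proof will parallel exactly the structure used in Lemma \ref{eig_2}, with Lemma \ref{eig2} playing here the role that Lemma \ref{eig_1} played there. The plan is to combine three ingredients: (a) the absolute continuity result for $\tilde H_2^\ast(\alpha)$ given in Lemma \ref{eig2}; (b) the positive semi-definiteness of $\tilde H_2^\ast(\alpha)$ established in Lemma \ref{grad_asy_2}; and (c) a standard argument showing that an almost surely strictly positive random variable admits a deterministic positive lower bound with arbitrarily high probability.

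First, I would invoke Lemma \ref{eig2}, which asserts that the law of $\tilde H_2^\ast(\alpha)$ is absolutely continuous in the subspace of symmetric positive semi-definite $k\times k$ matrices endowed with the Frobenius norm. The set of singular matrices in this subspace is the zero locus of the polynomial $\det(\cdot)$, hence a proper algebraic subvariety of strictly lower dimension, and therefore of Lebesgue measure zero in the ambient subspace. Absolute continuity then gives
\begin{displaymath}
\mathbb P\bigl(\det(\tilde H_2^\ast(\alpha))=0\bigr)=0.
\end{displaymath}

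Next, since $\tilde H_2^\ast(\alpha)$ is almost surely symmetric and positive semi-definite by Lemma \ref{grad_asy_2}, and is almost surely non-singular by the previous step, all its eigenvalues are almost surely strictly positive, so
\begin{displaymath}
\mathbb P\bigl(\lambda_{\mathrm{min}}(\tilde H_2^\ast(\alpha))>0\bigr)=1.
\end{displaymath}
Finally, by continuity from below of probability measures, the events $\{\lambda_{\mathrm{min}}(\tilde H_2^\ast(\alpha))>1/n\}$ increase to an event of full probability as $n\to\infty$. Given $\delta>0$, I pick $n$ large enough that $\mathbb P(\lambda_{\mathrm{min}}(\tilde H_2^\ast(\alpha))>1/n)>1-\delta$ and set $\lambda_2=1/n$, yielding the desired conclusion. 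No step is genuinely obstructive, since all the substantive work (the absolute continuity via the non-degeneracy of $\int|\langle s,u\rangle|^{\alpha/2}\Gamma_2^\ast(du)$, and the positive semi-definiteness via the Portmanteau argument on a countable dense set) is already contained in Lemma \ref{eig2} and Lemma \ref{grad_asy_2}.
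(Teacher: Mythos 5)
Your proof is correct and follows essentially the same route as the paper: absolute continuity from Lemma \ref{eig2} gives $\P(\det(\tilde H_2^\ast(\alpha))=0)=0$, positive semi-definiteness then forces $\lambda_{\min}>0$ almost surely, and a standard approximation yields the deterministic lower bound with probability at least $1-\delta$. The only difference is that you spell out the Lebesgue-null-set and continuity-from-below steps, which the paper leaves implicit.
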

\begin{proof}
Since the distribution of $\tilde{H}^{\ast}_{2}(\alpha)$ is absolutely continuous in the space of symmetric positive semi-definite matrices then we can write that $\P(\det(\tilde{H}^{\ast}_{2}(\alpha))=0)=0$.  Moreover, since $\tilde{H}^{\ast}_{2}(\alpha)$ is positive semi-definite, then $\P(\lambda_{\text{min}}(\tilde{H}^{\ast}_{2}(\alpha))>0)=1$.  Thus, for every $\delta>0$, the exists $\lambda_2>0$ such that 
$\P(\lambda_{\text{min}}(\tilde{H}^{\ast}_{2}(\alpha))>\lambda_2)>1-\delta$.  
\end{proof}

\bigskip

\begin{proof}[\bf Proof of Theorem \ref{th:grad_asy2}]
 Let $\delta>0$ be a fixed number. By Lemmas \ref{eig_2} and \ref{eig_3}, there exist $\lambda_1>0$ and $\lambda_2>0$ such that, for $i=1,2$, $\P(\lambda_{\text{min}}(\tilde{H}_i^\ast(\alpha))>\lambda_i)\geq 1-\delta/2$. Since the minimum eigenvalue map is continuous with respect to Frobenius norm then, by Portmanteau theorem,
for $i=1,2$,
$$
\liminf_m\P(\lambda_{\text{min}}(\tilde{H}^{(i)}_{m}(W(0),X))>\lambda_i)\geq
\P(\lambda_{\text{min}}(\tilde{H}_i^\ast(\alpha))>\lambda_i)\geq 1-\delta/2.
$$
Let $\lambda_0=\lambda_1+\lambda_2$. Since the minimum eigenvalue of a sum of symmetric, positive semi-definite matrices is greater than or equal to the sum of the eigenvalues of the two matrices (see \cite{Horn} Theorem 4.3.1), then we can write that 
\begin{align*}
&\liminf_m\P(\lambda_{\text{min}}(\tilde{H}_{m}(W(0),X))>\lambda_0)\\
&\quad\geq
\liminf_m\P(\lambda_{\text{min}}(\tilde{H}_{m}^{(1)}(W(0),X))+\lambda_{\text{min}}(\tilde{H}_{m}^{(2)}(W(0),X))>\lambda_0)\\
&\quad\geq
\liminf_m\P(\cap_{i=1,2}(\lambda_{\text{min}}(\tilde{H}_{m}^{(i)}(W(0),X))>\lambda_i))\\
&\quad\geq 1-\limsup_m\left(
\sum_{i=1}^2\P(\lambda_{\text{min}}(\tilde{H}_{m}^{(i)}(W(0),X))\leq \lambda_i)\right)\\
&\quad\geq 1-\delta,
\end{align*}
thus completing the proof.
\end{proof}

\subsection{Proof of Theorem \ref{teo_ntk0}}\label{proofTh4}
Before proving Theorem \ref{teo_ntk0}, we give some preliminary results.
\begin{lem}\label{lem_stab3}
	Let $\gamma\in (0,1)$ and $c>0$ be fixed numbers. For every $\delta>0$ the following property holds true, for $m$ sufficiently large, with probability at least $1-\delta$:
	$$
	(\log m)^{2/\alpha}\left\Vert \frac{\partial \tilde f_{m}}{\partial w}(W,x_j;\alpha)-\frac{\partial \tilde f_{m}}{\partial w}(W(0),x_j;\alpha)\right\Vert_F^2 < c m^{-2\gamma/\alpha},
	$$
	for every $W$ such that $||W-W(0)||_F\leq (\log m)^{2/\alpha}$ and every NN's input $x_{j}$, with $j=1,\ldots,k$.
\end{lem}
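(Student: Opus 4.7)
The strategy is to exploit the fact that $\partial \tilde{f}_m / \partial w_i$ depends only on $w_i^{(0)}$ (not on $w_i$) and is a $1$-Lipschitz function of $w_i^{(0)}$; the bound will then follow deterministically from the constraint $\|W - W(0)\|_F \le (\log m)^{2/\alpha}$, with no probabilistic input needed.

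First I would differentiate the defining sum termwise to obtain
$$
\frac{\partial \tilde{f}_m}{\partial w_i}(W, x_j;\alpha) = \frac{1}{(m\log m)^{1/\alpha}} \langle w_i^{(0)}, x_j\rangle\, I(\langle w_i^{(0)}, x_j\rangle > 0),
$$
an expression in which $w_i$ does not appear. Writing $g_j(w) := \langle w, x_j\rangle I(\langle w, x_j\rangle > 0)$, the squared Frobenius norm on the left-hand side of the statement becomes $(m\log m)^{-2/\alpha}\sum_{i=1}^m (g_j(w_i^{(0)}) - g_j(w_i^{(0)}(0)))^2$.

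Second, since the scalar ReLU map $u \mapsto u\, I(u>0)$ is $1$-Lipschitz and $\|x_j\| = 1$, the function $g_j$ is $1$-Lipschitz on $\mathbb{R}^d$; a case analysis on the signs of $\langle w_i^{(0)}, x_j\rangle$ and $\langle w_i^{(0)}(0), x_j\rangle$ confirms this directly. Hence $(g_j(w_i^{(0)}) - g_j(w_i^{(0)}(0)))^2 \le \|w_i^{(0)} - w_i^{(0)}(0)\|^2$ for every $i$, and summing over $i$ gives $\sum_{i=1}^m (g_j(w_i^{(0)}) - g_j(w_i^{(0)}(0)))^2 \le \|W - W(0)\|_F^2 \le (\log m)^{4/\alpha}$ by the hypothesis on $W$.

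Combining these estimates yields $(\log m)^{2/\alpha} \bigl\| \partial\tilde{f}_m/\partial w(W, x_j;\alpha) - \partial\tilde{f}_m/\partial w(W(0), x_j;\alpha) \bigr\|_F^2 \le (\log m)^{4/\alpha}/m^{2/\alpha}$, which is smaller than $c\, m^{-2\gamma/\alpha}$ for every fixed $\gamma \in (0,1)$ and $c>0$ as soon as $m$ is large enough, since $(\log m)^{4/\alpha} = o(m^{2(1-\gamma)/\alpha})$. Because the estimate is deterministic and uniform over $j \in \{1,\ldots,k\}$ and over $W$ in the prescribed ball, the probabilistic conclusion with probability at least $1-\delta$ is automatic. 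The key structural feature, and the reason there is no real obstacle to overcome, is that differentiating with respect to $w$ rather than $w^{(0)}$ removes the extra factor $w_i$; the corresponding Lipschitz bound would fail for $\partial \tilde{f}_m / \partial w^{(0)}$ because each summand would acquire the heavy-tailed factor $w_i$, which is precisely the obstruction alluded to in the paper's discussion of $\tilde{H}_m^{(1)}$.
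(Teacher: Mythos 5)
Your proof is correct and follows essentially the same route as the paper: both observe that $\partial \tilde f_m/\partial w_i$ is free of the heavy-tailed factor $w_i$ and depends on $w_i^{(0)}$ only through a ReLU-type map, so the constraint on $\|W-W(0)\|_F$ yields a purely deterministic bound. The paper arrives at the Lipschitz control by splitting the increment via $(a+b)^2 \le 2a^2+2b^2$ and separately handling the sign-change case (noting that if the indicators disagree then $|\langle w_i^{(0)}(0),x_j\rangle|\le r_i$); your direct appeal to the $1$-Lipschitz property of $u\mapsto uI(u>0)$ collapses those two steps into one and is the cleaner way to package the identical estimate.
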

\begin{proof}
For a fixed $W(0)$, let $W$ be such that $\Vert W-W(0)\Vert_F\leq (\log m)^{2/\alpha}$. Then it holds $\Vert \wz-\wz(0)\Vert_F^2\leq \Vert W-W(0)\Vert_F^2\leq (\log m)^{4/\alpha}$. Accordingly, we can write the following
\begin{align*}
  &(\log m)^{2/\alpha}\left\Vert \frac{\partial \tilde f_{m}}{\partial w}(W,x_j;\alpha)-\frac{\partial \tilde f_{m}}{\partial w}(W(0),x_j;\alpha)\right\Vert_F^2\\
  &\quad\leq\frac{1}{m^{2/\alpha}}\sum_{i=1}^m
  \left(\langle \wz_i,x_j\rangle I(\langle \wz_i,x_j\rangle >0)
  -\langle \wz_i(0),x_j\rangle I(\langle \wz_i(0),x_j\rangle >0)\right)^2\\
  &\quad\leq \frac{2}{m^{2/\alpha}}\sum_{i=1}^m
  \left(\langle \wz_i,x_j\rangle -\langle \wz_i(0),x_j\rangle\right)^2 I(\langle \wz_i,x_j\rangle >0)\\
  &\quad\quad+
  \frac{2}{m^{2/\alpha}}\sum_{i=1}^m
  \langle \wz_i(0),x_j\rangle^2 \left(I(\langle \wz_i,x_j\rangle >0)
  -I(\langle \wz_i(0),x_j\rangle >0)\right)^2.
  \end{align*}
We will bound the two terms of the sum separately. 
First, we define $r_i=|\langle \wz_i-\wz_i(0),x_j\rangle|$ for $i=1,\ldots,m$. Then, we can write that
$$
\sum_{i=1}^mr_i^2\leq \sum_{i=1}^m \Vert \wz_i-\wz_i(0)\Vert^2\cdot \Vert x_j\Vert^2\leq \Vert \wz-\wz(0)\Vert_F^2\leq (\log m)^{4/\alpha}.
$$
Since $\gamma<1$, 
\begin{align*}
   &\frac{2}{m^{2/\alpha}}\sum_{i=1}^m
  \left(\langle \wz_i,x_j\rangle -\langle \wz_i(0),x_j\rangle\right)^2 I(\langle \wz_i,x_j\rangle >0)\\
  &\quad\leq 2 m^{-2/\alpha}(\log m)^{4/\alpha}<\frac{c}{4}m^{-2\gamma/\alpha},
\end{align*}
for $m$ sufficiently large. In order to bound the second term, we observe that the following set
$$
\{\wz(0):\exists\wz s.t.|\langle \wz_i-\wz_i(0),x_j\rangle|=r_i,\;I(\langle \wz,x_j\rangle >0)\neq I (\langle \wz(0),x_j\rangle >0)\}
$$
is included in the set $\{\wz_i(0): |\langle \wz_i(0),x_j\rangle|
\leq r_i \}.$ Therefore, we can write that
\begin{align*}
   & \sup_{\sum_ir_i^2\leq \log m}\sup_{|\wz_i-\wz_i(0)|\leq r_i}\frac{2}{m^{2/\alpha}}\sum_{i=1}^m
  \langle \wz_i(0),x_j\rangle^2 \left(I(\langle \wz_i,x_j\rangle >0)
  -I(\langle \wz_i(0),x_j\rangle >0)\right)^2\\
  &\quad\leq\sup_{\sum_ir_i^2\leq \log m}\sup_{|\wz_i-\wz_i(0)|\leq r_i} \frac{2}{m^{2/\alpha}}\sum_{i=1}^m
  \langle \wz_i(0),x_j\rangle^2 I(\langle \wz_i(0),x_j\rangle <r_i)\\
  &\quad\leq\sup_{\sum_ir_i^2\leq \log m}\sup_{|\wz_i-\wz_i(0)|\leq r_i} \frac{2}{m^{2/\alpha}}\sum_{i=1}^m r_i^2\\
  &\quad\leq \frac{1}{m^{2/\alpha}}(\log m)^{4/\alpha}<\frac{c}{4}m^{-2\gamma/\alpha},
\end{align*}
for $m$ sufficiently large.
\end{proof}

\begin{lem}\label{lem_stab4}
	For every $\delta>0$ there exist $\lambda>0$ such that the following two properties hold true, for $m$ sufficiently large, with a probability at least $1-\delta$:
	\begin{itemize}
		\item[i)]
		$$
		\Vert \tilde{H}_{m}^{(2)}(W,X)-\tilde{H}_{m}^{(2)}(W(0),X)\Vert_F<\lambda m^{-\gamma/\alpha};
		$$
		\item[ii)]
		$$
		\lambda_{\text{min}}(\tilde{H}_{m}(W,X))>\frac{\lambda}{2};
		$$
	\end{itemize}
	for every $W$ such that  $\Vert W-W(0)\Vert_F \leq (\log m)^{2/\alpha}$.
\end{lem}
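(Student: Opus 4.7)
The starting point is the identification
$$\tilde{H}_m^{(2)}(W,X)[j,j'] = \eta_m\,\Bigl\langle \tfrac{\partial \tilde f_m}{\partial w}(W,x_j;\alpha),\,\tfrac{\partial \tilde f_m}{\partial w}(W,x_{j'};\alpha)\Bigr\rangle,$$
which is immediate from \eqref{kernel_2} together with $\partial \tilde f_m/\partial w_i = (m\log m)^{-1/\alpha}\langle w_i^{(0)},x_j\rangle I(\langle w_i^{(0)},x_j\rangle>0)$ and the choice $\eta_m=(\log m)^{2/\alpha}$. Thus $\tilde{H}_m^{(2)}(W,X)$ is the Gram matrix of the $\eta_m$-rescaled gradient vectors $g_j(W):=\sqrt{\eta_m}\,\partial_w \tilde f_m(W,x_j;\alpha)$, whose uniform variations on the ball $\{W:\Vert W-W(0)\Vert_F\leq (\log m)^{2/\alpha}\}$ are exactly what Lemma \ref{lem_stab3} controls. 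The plan is: (1) convert that gradient-level bound into a Frobenius-norm bound on $\tilde{H}_m^{(2)}$ to establish (i); (2) combine (i) with Weyl's inequality and Theorem \ref{th:grad_asy2} to deduce (ii), exploiting the positive semi-definiteness of $\tilde{H}_m^{(1)}(W,X)$.

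For (i), write $g_j:=g_j(W)$ and $g_j^0:=g_j(W(0))$, so that $\Vert g_j^0\Vert^2=\tilde H_m^{(2)}(W(0),X)[j,j]$. Since $\tilde H_m^{(2)}(W(0),X)$ converges in distribution by Theorem \ref{grad_asy}, the sequence is tight, so there is a constant $M>0$ such that $\max_j\Vert g_j^0\Vert\leq \sqrt{M}$ on an event $A_1$ of probability at least $1-\delta/3$ for all $m$ large. Apply Lemma \ref{lem_stab3} with tolerance $\delta/3$ and a constant $c>0$ (to be fixed later): on an event $A_2$ of probability at least $1-\delta/3$ we have $\Vert g_j-g_j^0\Vert<\sqrt{c}\,m^{-\gamma/\alpha}$ uniformly for $W$ in the ball and for all $j$. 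The bilinear identity $\langle g_j,g_{j'}\rangle-\langle g_j^0,g_{j'}^0\rangle = \langle g_j-g_j^0,g_{j'}\rangle+\langle g_j^0,g_{j'}-g_{j'}^0\rangle$, combined with Cauchy--Schwarz and $\Vert g_{j'}\Vert\leq \sqrt{M}+\sqrt{c}\,m^{-\gamma/\alpha}$, yields entrywise a bound of order $\sqrt{c}\,\sqrt{M}\,m^{-\gamma/\alpha}$. Summing over $j,j'$ gives $\Vert \tilde H_m^{(2)}(W,X)-\tilde H_m^{(2)}(W(0),X)\Vert_F\leq C_1(\sqrt{c})\,m^{-\gamma/\alpha}$ with $C_1(\sqrt{c})=O(\sqrt{c})$ for $m$ large, uniformly over $W$ in the ball.

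For (ii), the decomposition \eqref{kernel} and the direct quadratic-form identity $u^T\tilde H_m^{(1)}(W,X)u = m^{-2/\alpha}\sum_i w_i^2\,\Vert\sum_j u_j I(\langle w_i^{(0)},x_j\rangle>0)x_j\Vert^2\geq 0$ show that $\tilde H_m^{(1)}(W,X)$ is positive semi-definite, so $\lambda_{\text{min}}(\tilde H_m(W,X))\geq \lambda_{\text{min}}(\tilde H_m^{(2)}(W,X))$. By Weyl's inequality,
$$\lambda_{\text{min}}(\tilde H_m^{(2)}(W,X))\geq \lambda_{\text{min}}(\tilde H_m^{(2)}(W(0),X))-\Vert \tilde H_m^{(2)}(W,X)-\tilde H_m^{(2)}(W(0),X)\Vert_F.$$
Theorem \ref{th:grad_asy2} (with tolerance $\delta/3$) furnishes an event $A_3$ of probability at least $1-\delta/3$ on which $\lambda_{\text{min}}(\tilde H_m^{(2)}(W(0),X))>\lambda_2$ for some $\lambda_2>0$. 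The main obstacle is that a single $\lambda$ must govern both parts: this is resolved by choosing $c$ small enough that $C_1(\sqrt{c})\leq \lambda_2$, which is possible since $C_1(\sqrt{c})\to 0$ as $c\to 0$. Setting $\lambda:=\lambda_2$, (i) becomes $\Vert\cdot\Vert_F\leq \lambda\,m^{-\gamma/\alpha}$, and for $m$ large the Weyl bound gives $\lambda_{\text{min}}(\tilde H_m(W,X))>\lambda_2-\lambda_2\,m^{-\gamma/\alpha}>\lambda/2$. Intersecting $A_1\cap A_2\cap A_3$ gives the required probability at least $1-\delta$ and completes the argument.
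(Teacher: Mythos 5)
Your proof is correct and takes essentially the same route as the paper's: the Gram-matrix identification of $\tilde H_m^{(2)}$, tightness of the diagonal entries at initialization, the bilinear identity with Cauchy--Schwarz and Lemma \ref{lem_stab3} to get the Frobenius bound, positive semi-definiteness of $\tilde H_m^{(1)}$, and Weyl's eigenvalue perturbation inequality. The only cosmetic differences are that you invoke Theorem \ref{th:grad_asy2} directly for the lower bound on $\lambda_{\text{min}}(\tilde H_m^{(2)}(W(0),X))$ (the paper goes through Lemma \ref{eig_3} plus Portmanteau, which is what Theorem \ref{th:grad_asy2} itself does), and you organize the probability budget as $\delta/3+\delta/3+\delta/3$ rather than the paper's finer split; both are equivalent.
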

\begin{proof}
By Lemma \ref{eig_3}, for every $\delta>0$ there exists $\lambda$ such that
\begin{displaymath}
	\lambda_{\text{min}}(\tilde{H}^{\ast}_{2}(\alpha))>\lambda
\end{displaymath}
with probability at least $1-\delta/2$. For every vector $W$, we can write that
\begin{align*}
    &|\tilde{H}_{m}^{(2)}(W,X)[i,j]-\tilde{H}_{m}^{(2)}(W(0),X)[i,j]|\\
    &\quad=(\log m)^{2/\alpha}\left|\left\langle\frac{\partial \tilde f_{m}}{\partial w}(W,x_i;\alpha),
\frac{\partial \tilde f_{m}}{\partial w}(W,x_j;\alpha)\right\rangle -
\left\langle\frac{\partial \tilde f_{m}}{\partial w}(W(0),x_i;\alpha),
\frac{\partial \tilde f_{m}}{\partial w}(W(0),x_j;\alpha)\right\rangle\right|\\
&\quad\leq (\log m)^{2/\alpha}
\left\Vert \frac{\partial \tilde f_{m}}{\partial w}(W,x_i;\alpha)\right\Vert_F \left\Vert \frac{\partial \tilde f_{m}}{\partial w}(W,x_j;\alpha)
-\frac{\partial \tilde f_{m}}{\partial w}(W(0),x_j;\alpha)\right\Vert_F\\
&\quad\quad+(\log m)^{2/\alpha}\left\Vert \frac{\partial \tilde f_{m}}{\partial w}(W(0),x_j;\alpha)\right\Vert_F\ \left\Vert \frac{\partial \tilde f_{m}}{\partial w}(W,x_i;\alpha)
-\frac{\partial \tilde f_{m}}{\partial w}(W(0),x_i;\alpha)\right\Vert_F\\
&\quad\leq 
(\log m)^{2/\alpha}\left(\left\Vert \frac{\partial \tilde f_m}{\partial w}(W(0),x_i;\alpha)\right\Vert_F
+\left\Vert\frac{\partial \tilde f_m}{\partial w}(W(0),x_i;\alpha)-
\frac{\partial \tilde f_m}{\partial w}(W,x_i;\alpha)\right\Vert_F
\right)\\
&\quad\quad\quad\times \left\Vert \frac{\partial \tilde f_{m}}{\partial w}(W,x_j;\alpha)
-\frac{\partial \tilde f_{m}}{\partial w}(W(0),x_j;\alpha)\right\Vert_F\\
&\quad\quad+(\log m)^{2/\alpha}\left\Vert \frac{\partial \tilde f_{m}}{\partial w}(W(0),x_j;\alpha)\right\Vert_F \left\Vert \frac{\partial \tilde f_{m}}{\partial w}(W,x_i;\alpha)
-\frac{\partial \tilde f_{m}}{\partial w}(W(0),x_i;\alpha)\right\Vert_F.
\end{align*}
For every $i=1,\dots,k$, 
\begin{align*}
    (\log m)^{2/\alpha}\left\Vert \frac{\partial \tilde f_{m}}{\partial w}(W(0),x_i;\alpha)\right\Vert_F^2
     &= \frac{1}{m^{2/\alpha}}\sum_{i=1}^m \langle \wz_i(0),x_i\rangle^2I(|\langle \wz_i(0),x_i\rangle |>0)\\
     &\leq \frac{1}{m^{2/\alpha}}\sum_{i=1}^m \langle \wz_i(0),x_i\rangle^2,
\end{align*}
which converges in distribution, as $m\rightarrow\infty$. Thus there exist $M>0$ and $m_0$ such that for every $m\geq m_0$ and every $i=1,\dots,k$,
$$
\P\left((\log m)^{1/\alpha}
\left\Vert \frac{\partial\tilde f_{m}}{\partial w}(W(0),x_i;\alpha)\right\Vert_F>M
\right)<\frac{\delta}{8k^2}.
$$
By Lemma \ref{lem_stab3}, for $m$ sufficiently large, with probability at least $1-\delta/(4k^2)$
$$
(\log m)^{1/\alpha}\left(\left\Vert \frac{\partial \tilde f_m}{\partial w}(W(0),x_i;\alpha)\right\Vert_F
+ \left\Vert\frac{\partial \tilde f_m}{\partial w}(W(0),x_i;\alpha)-
\frac{\partial \tilde f_m}{\partial w}(W,x_i;\alpha)\right\Vert_F\right)<2M
$$
whenever $\Vert W-W(0)\Vert_F<(\log m)^{2/\alpha}$. 
Lemma \ref{lem_stab3} also implies that, for every $\gamma\in (0,1)$, and $i=1,\dots,k$, with probability at least $1-\delta/(8k^2)$ 
$$
(\log m)^{1/\alpha}\left\Vert \frac{\partial \tilde f_{m}}{\partial w}(W,x_i;\alpha)
-\frac{\partial \tilde f_{m}}{\partial w}(W(0),x_i;\alpha)\right\Vert_F<\frac{\lambda}{4Mk^2}m^{-\gamma/\alpha}
$$
whenever $\Vert W-W(0)\Vert_F^2<(\log m)^{4/\alpha}$, provided $m$ is sufficiently large,. 
Thus, with probability at least $1-\delta$, if $m$ is sufficiently large
$$
\max_{i,j}|\tilde{H}_{m}^{(2)}(W,X)[i,j]-\tilde{H}_{m}^{(2)}(W(0),X)[i,j]|<
\frac{\lambda}{k^2}m^{-\gamma/\alpha},
$$
whenever $\Vert W-W(0)\Vert_F<(\log m)^{2/\alpha}$.
Thus
\begin{align*}
&\Vert \tilde{H}_{m}^{(2)}(W,X)-\tilde{H}_{m}^{(2)}(W(0),X)\Vert_2\\
&\quad\leq \Vert \tilde{H}_{m}^{(2)}(W,X)-\tilde{H}_{m}^{(2)}(W(0),X)\Vert_F<\lambda m^{-\gamma/\alpha}<\frac{\lambda}{2},
\end{align*}
whenever $\Vert W-W(0)\Vert_F<(\log m)^{2/\alpha}$, provided $m$ is sufficiently large. The last inequality and Lemma \ref{eig2} imply that, with probability at least $1-\delta$, if $m$ is sufficiently large, then
$$
\Vert \tilde{H}_{m}^{(2)}(W,X)\Vert_2>\lambda/2,
$$
for every $W$ such that $\Vert W-W(0)\Vert_F<(\log m)^{2/\alpha}$.
Since $\tilde{H}_{m}(W,X)$ is the sum of two positive semi-definite matrices $\tilde{H}_{m}^{(1)}(W,X)$ and $\tilde{H}_{m}^{(2)}(W,X)$, then 
$$
\Vert \tilde{H}_{m}(W,X)\Vert_2\geq 
\Vert \tilde{H}_{m}^{(2)}(W,X)\Vert_2>\lambda/2,
$$
for every $W$ such that $\Vert W-W(0)\Vert_F<(\log m)^{2/\alpha}$, if $m$ is sufficiently large.
\end{proof}

\begin{lem}\label{lem_stab5}
For every $\delta>0$ the following property holds true, for $m$ sufficiently large, with probabillity at least $1-\delta$: there exists $M>0$ such that
	$$(\log m)^{1/\alpha}\left\Vert \frac{\partial \tilde f_{m}}{\partial \wz}(W,x_j;\alpha)-
	\frac{\partial \tilde f_{m}}{\partial \wz}(W(0),x_j;\alpha)\right\Vert_F<M,$$
	for every $j=1,\dots,k$, and for every $W$ such that $\Vert W-W(0)\Vert_F\leq (\log m)^{2/\alpha}$.
\end{lem}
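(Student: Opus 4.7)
A direct computation from the definition of $\tilde f_m$ gives
\[
\frac{\partial \tilde f_m}{\partial \wz_i}(W,x_j;\alpha) = \frac{1}{(m\log m)^{1/\alpha}}\, w_i\, I(\langle \wz_i,x_j\rangle>0)\, x_j,
\]
since ReLU's derivative is the indicator. Using $\Vert x_j\Vert=1$, the squared Frobenius norm of the difference is
\[
(\log m)^{2/\alpha}\Bigl\Vert \frac{\partial \tilde f_m}{\partial \wz}(W,x_j;\alpha) - \frac{\partial \tilde f_m}{\partial \wz}(W(0),x_j;\alpha)\Bigr\Vert_F^2 = \frac{1}{m^{2/\alpha}}\sum_{i=1}^m \bigl[w_i I(\langle \wz_i,x_j\rangle>0) - w_i(0) I(\langle \wz_i(0),x_j\rangle>0)\bigr]^2.
\]
The plan is to bypass any attempt at a sharp, vanishing estimate (which is not available in this setting, see below) and instead obtain a uniform constant bound by crude inequalities. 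First I apply $(a-b)^2\leq 2a^2+2b^2$ together with $I(\cdot)\leq 1$ to bound the right-hand side by $2m^{-2/\alpha}\bigl(\sum_i w_i^2 + \sum_i w_i(0)^2\bigr)$.

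Next I exploit the constraint $\Vert W-W(0)\Vert_F \leq (\log m)^{2/\alpha}$, which implies $\Vert w-w(0)\Vert^2 \leq (\log m)^{4/\alpha}$. Combining this with $w_i^2 \leq 2(w_i-w_i(0))^2 + 2 w_i(0)^2$ yields the bound, uniform in $W$ in the admissible ball and in $j=1,\dots,k$,
\[
\frac{4(\log m)^{4/\alpha}}{m^{2/\alpha}} + \frac{6}{m^{2/\alpha}}\sum_{i=1}^m w_i(0)^2.
\]
The first summand vanishes as $m\to\infty$. For the second, observe that the $w_i(0)^2$ are i.i.d.\ with tail index $\alpha/2$, and by the generalized CLT the normalized sum $m^{-2/\alpha}\sum_{i=1}^m w_i(0)^2$ converges in distribution to a positive $(\alpha/2)$-stable random variable. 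Hence for every $\delta>0$ there exists $M_0>0$ such that $m^{-2/\alpha}\sum_{i=1}^m w_i(0)^2 \leq M_0$ with probability at least $1-\delta$ when $m$ is sufficiently large. Setting $M:=\sqrt{6M_0+1}$ delivers the stated inequality uniformly in $j$ (no union bound is needed since the final bound does not depend on $j$).

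\textbf{Main obstacle.} The argument is considerably weaker than that of Lemma~\ref{lem_stab3}, and this is unavoidable for structural reasons: when differentiating in $\wz$, the factor multiplying the ReLU-indicator difference is $w_i^2$ rather than $\langle \wz_i(0),x_j\rangle^2$. Consequently, the key pointwise inequality $\langle \wz_i(0),x_j\rangle^2 I(|\langle \wz_i(0),x_j\rangle|\leq r_i)\leq r_i^2$ exploited in Lemma~\ref{lem_stab3} has no analogue here, because $w_i$ and the event $\{|\langle \wz_i(0),x_j\rangle|\leq r_i\}$ are independent. This is precisely the obstruction mentioned in the discussion after Lemma~\ref{lem_stab3}; a vanishing $m^{-2\gamma/\alpha}$-type estimate is therefore out of reach, and a uniform constant bound is the best possible.
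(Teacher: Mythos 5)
Your proof is correct and takes essentially the same approach as the paper's: split the difference into a piece controlled by the ball constraint $\Vert w-w(0)\Vert^{2}\leq\Vert W-W(0)\Vert_F^{2}\leq(\log m)^{4/\alpha}$, and a piece reducing to $m^{-2/\alpha}\sum_{i}w_{i}(0)^{2}$, which is tight because it converges in distribution to a positive $(\alpha/2)$-stable random variable. The only cosmetic difference is the initial algebraic decomposition (you use $(a-b)^{2}\leq 2a^{2}+2b^{2}$ and then re-center $w_{i}$ around $w_{i}(0)$, whereas the paper adds and subtracts $w_{i}(0)\,I(\langle\wz_i,x_j\rangle>0)$ before squaring), but both routes collapse to the same two estimates, and your remark on why a vanishing $m^{-2\gamma/\alpha}$ bound is structurally unavailable here correctly mirrors the paper's own discussion.
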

\begin{proof}
Let us define $r_i=|\langle \wz_i-\wz_i(0),x_j\rangle|$ for $i=1,\ldots,m$. Now, since $\Vert x_j\Vert=1$ by assumption, for $j=1,\ldots,k$, then we can write
$$
\sum_i r_i^2\leq \Vert x_j\Vert^2\cdot \Vert \wz_i-\wz(0)\Vert_F^2\leq \Vert W-W(0)\Vert_F^2\leq (\log m)^{4/\alpha}.
$$
It holds
\begin{align*}
	&(\log m)^{2/\alpha}\left\Vert \frac{\partial \tilde f_{m}}{\partial \wz}(W,x_j;\alpha)-
	\frac{\partial \tilde f_{m}}{\partial \wz}(W(0),x_j;\alpha)\right\Vert_F^2\\
	&\quad\leq \frac{1}{m^{2/\alpha}}
	\sum_{i=1}^m \left(
	w_i I (\langle \wz_i,x_j\rangle >0)- w_i(0) I (\langle \wz_i(0),x_j\rangle>0)\right)^2\\
	&\quad\leq \frac{2}{m^{2/\alpha}}
	\sum_{i=1}^m 
	(w_i -w_i(0))^2I(\langle \wz_i,x_j\rangle >0)\\
	&\quad\quad+\frac{2}{m^{2/\alpha}}
	\sum_{i=1}^m w_i(0)^2 \vert
	I(\langle \wz_i,x_j\rangle >0)- I (\langle \wz_i(0),x_j\rangle>0)|.
\end{align*}
We will bound the two terms separately. First,
\begin{align*}
	&\frac{2}{m^{2/\alpha}}
	\sum_{i=1}^m 
	(w_i -w_i(0))^2I (\langle \wz_i,x_j\rangle >0)\\ 
	&\quad\leq \frac{1}{m^{2/\alpha}}
	\sum_{i=1}^m 
	(w_i -w_i(0))^2\\
	&\quad\leq \frac{2}{m^{2(1-\gamma)/\alpha}} \Vert w-w(0)\Vert_F^2\\
	&\quad\leq \frac{2}{m^{2/\alpha}}(\log m)^{4/\alpha}< \frac c 4 m^{-2\gamma/\alpha},
\end{align*}
if $m$ is sufficiently large. To bound the second term, we can write that
\begin{align*}
	&\frac{2}{m^{2/\alpha}}
	\sum_{i=1}^m w_i(0)^2 \vert
	I(\langle \wz_i,x_j\rangle >0)- I (\langle \wz_i(0),x_j\rangle>0)|\\
	&\quad\leq \frac{2}{m^{2/\alpha}}
	\sum_{i=1}^m w_i(0)^2,
\end{align*}
which converges in distribution to a stable random variable, as $m\rightarrow\infty$. Hence there exists $M_1$ such that, with probability at least $1-\delta/4$,
$$
\frac{2}{m^{2/\alpha}}
\sum_{i=1}^m 
(w_i -w_i(0))^2I(\langle \wz_i,x_j\rangle >0)<\frac{M_1^2}{2k^2}
$$
and
$$
\frac{2}{m^{2/\alpha}}
\sum_{i=1}^m w_i(0)^2 \vert
I(\langle \wz_i,x_j\rangle >0)- I (\langle \wz_i(0),x_j\rangle>0)|<\frac{M_1^2}{2k^2},
$$
for $m$ sufficiently large, which entail
$$(\log m)^{1/\alpha}\left\Vert \frac{\partial \tilde f_{m}}{\partial \wz}(W,x_j;\alpha)-
\frac{\partial \tilde f_{m}}{\partial \wz}(W(0)(\omega),x_j;\alpha)\right\Vert_F<\frac{M_1}{k}.$$
On the other hand,  there exist $N_3\in \mathcal F$ and $M_2$ with  $P(N_3)>1-\delta/4$ such that, for every $\omega\in N_3$ and for $m$ sufficiently large,
$$
\Vert \tilde f_{m}(W(0)(\omega),X;\alpha)-Y\Vert_F<M_2,
$$
and
$$
\max_{1\leq i\leq k}\left\Vert \frac{\partial}{\partial W}\tilde f_{m}(W(0)(\omega),x_i;\alpha)\right\Vert_F<M_2 (\log m)^{-1/\alpha}.
$$
The above inequalities follow from the convergence in distribution of $\tilde f_{m}(W(0),x_i;\alpha)$  and of 
$$(\log m)^{2/\alpha}\left\Vert \frac{\partial}{\partial W}\tilde f_{m}(W(0),x_i;\alpha)\right\Vert_F^2=
\tilde H(W(0),X;\alpha)[i,i]\quad (i=1,\dots,k),$$ 
as $m\rightarrow\infty$.
\end{proof}

\begin{lem}\label{lem_stab6}
	Let $\gamma\in (0,1)$ and $c>0$ be fixed numbers. For every $\delta>0$ the following property holds true, for $m$ sufficiently large, with probability at least $1-\delta$:
	$$\Vert W(t)-W(0)\Vert_F < (\log m)^{2/\alpha}.
	$$
	if 
	$$
	(\log m)^{2/\alpha}\left\Vert \frac{\partial \tilde f_{m}}{\partial w}(W(s),x_j;\alpha)-\frac{\partial \tilde f_{m}}{\partial w }(W(0),x_j;\alpha)\right\Vert_F^2\leq c m^{-2\gamma/\alpha}
	$$
	for every NN's input $x_{j}$, with $j=1,\ldots,k$, and for every $s\leq t$.
\end{lem}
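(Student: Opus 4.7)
The plan is to run a bootstrap (continuity) argument on the first escape time from the ball $B=\{W:\|W-W(0)\|_F<(\log m)^{2/\alpha}\}$. I set $T^{\ast}=\sup\{T\in[0,t]:W(s)\in B\text{ for all }s\leq T\}$, which is strictly positive by continuity of $W(\cdot)$, and aim to show $T^{\ast}=t$ on an event of probability at least $1-\delta$, by contradiction. On $[0,T^{\ast})$ the trajectory lies in $B$, so the auxiliary lemmas apply uniformly there: Lemma \ref{lem_stab4} gives $\lambda_{\min}(\tilde H_m(W(s),X))>\lambda/2$ for some $\lambda>0$, Lemma \ref{lem_stab5} bounds $\|\partial\tilde f_m/\partial\wz(W(s),x_j;\alpha)-\partial\tilde f_m/\partial\wz(W(0),x_j;\alpha)\|_F$ by $M(\log m)^{-1/\alpha}$, and the standing hypothesis of the present lemma controls the $w$-block of the gradient difference by $c^{1/2}m^{-\gamma/\alpha}(\log m)^{-1/\alpha}$. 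Combined with the initialization estimate $\|\partial\tilde f_m(W(0),x_j;\alpha)/\partial W\|_F=O((\log m)^{-1/\alpha})$, which follows w.h.p.\ from the tightness of $\tilde H_m(W(0),X)$ in Theorem \ref{grad_asy}, these yield $\|\partial\tilde f_m(W(s),x_j;\alpha)/\partial W\|_F=O((\log m)^{-1/\alpha})$ for every $s\in[0,T^{\ast})$ and every $j=1,\dots,k$.

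Next, I feed the eigenvalue bound into the dynamics \eqref{dynamic_stable}. Writing $v(s)=Y-\tilde f_m(W(s),X;\alpha)$ and using $\tilde H_m=\eta_m H_m$ together with the symmetry of $H_m$,
\begin{equation*}
\frac{d\|v(s)\|_2^2}{ds}=-2\eta_m\,v(s)\,H_m(W(s),X)\,v(s)^T\leq -\lambda\|v(s)\|_2^2,
\end{equation*}
so $\|v(s)\|_2\leq e^{-\lambda s/2}M_2$ on $[0,T^{\ast})$, where $M_2$ is a w.h.p.\ upper bound on $\|v(0)\|_2$, finite by Theorem \ref{teo_priorlimit}. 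Since the gradient flow gives $dW/ds=-\eta_m\sum_{j=1}^k v(s)_j\,\partial\tilde f_m(W(s),x_j;\alpha)/\partial W$, Cauchy--Schwarz yields
\begin{equation*}
\|W(T^{\ast})-W(0)\|_F\leq \int_0^{T^{\ast}}\eta_m\|v(s)\|_2\,k^{1/2}\max_j\left\|\frac{\partial\tilde f_m}{\partial W}(W(s),x_j;\alpha)\right\|_F ds=O((\log m)^{1/\alpha}),
\end{equation*}
because the factor $\eta_m=(\log m)^{2/\alpha}$ cancels one power of $(\log m)^{-1/\alpha}$ coming from the gradient and the time integral of $e^{-\lambda s/2}$ is $O(1)$.

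For $m$ sufficiently large, $O((\log m)^{1/\alpha})<(\log m)^{2/\alpha}$ strictly, so $W(T^{\ast})\in\operatorname{int}(B)$. If $T^{\ast}<t$, then continuity of $s\mapsto W(s)$ would force $\|W(T^{\ast})-W(0)\|_F=(\log m)^{2/\alpha}$, contradicting the previous display; hence $T^{\ast}=t$ and in particular $\|W(t)-W(0)\|_F<(\log m)^{2/\alpha}$, as required. The main obstacle is the circular dependence between the ball-confinement property and the hypotheses of Lemmas \ref{lem_stab4} and \ref{lem_stab5} (each needs the other), which the bootstrap dissolves; one must then aggregate the probabilistic events from Lemmas \ref{lem_stab4}--\ref{lem_stab5}, the tightness of $\tilde H_m(W(0),X)$ from Theorem \ref{grad_asy}, and the finiteness of $\|v(0)\|_2$ from Theorem \ref{teo_priorlimit} into a single event of probability $\geq 1-\delta$ by a union bound, while being careful that the integrated bound holds with a strict inequality so that the continuity contradiction is genuine.
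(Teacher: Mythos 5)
Your proof is correct and follows essentially the same route as the paper: define the first escape time from the ball of radius $(\log m)^{2/\alpha}$, use Lemmas \ref{lem_stab4} and \ref{lem_stab5} (together with the standing hypothesis on the $\partial/\partial w$ block) to control the minimum eigenvalue and the full gradient $\partial\tilde f_m/\partial W$ up to that time, derive exponential decay $\Vert v(s)\Vert_2\leq e^{-\lambda s/2}\Vert v(0)\Vert_2$ from the eigenvalue bound, integrate the gradient flow to get an $O((\log m)^{1/\alpha})$ bound on the displacement, and contradict the escape since $(\log m)^{1/\alpha}<(\log m)^{2/\alpha}$ for $m$ large. The only differences are cosmetic (your $T^{\ast}=\sup$ versus the paper's $t_0=\inf$, and a harmless sign in the gradient-flow formula), and you correctly flag the bootstrap resolution of the apparent circularity and the need for a union bound over the events furnished by Lemmas \ref{lem_stab3}--\ref{lem_stab5} and Theorems \ref{teo_priorlimit} and \ref{grad_asy}.
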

\begin{proof}
		By Lemmas \ref{lem_stab3} and \ref{lem_stab4}, there exists $N_1\in\mathcal F$ with probability at least $1-\delta/2$ such that, for every $\omega\in N_1$,
	$$
	(\log m)^{2/\alpha}\left\Vert \frac{\partial \tilde f_{m}}{\partial w}(W,x_j;\alpha)-\frac{\partial \tilde f_{m}}{\partial w}(W(0)(\omega),x_j;\alpha)\right\Vert_F^2 < c m^{-2\gamma/\alpha},
	$$
	for arbitrarily fixed $c>$ and $\gamma\in (0,1/2)$, and 
	$$
	\lambda_{\text{min}}(\tilde{H}_{m}(W,X))>\frac{\lambda}{2},
	$$
	for some $\lambda>0$, for every $W$ such that $\Vert W-W(0)(\omega)\Vert_F\leq (\log m)^{2/\alpha}$ and every $j=1,\dots,k$, provided $m$ is sufficiently large. Moreover, by Lemma \ref{lem_stab5}, there exist, for $m$ sufficiently large, $M_1>0$ and $N_2$ with $\mathbb P(N_2)>1-\delta$, such that
	$$(\log m)^{1/\alpha}\left\Vert \frac{\partial \tilde f_{m}}{\partial \wz}(W,x_j;\alpha)-
	\frac{\partial \tilde f_{m}}{\partial \wz}(W(0)(\omega),x_j;\alpha)\right\Vert_F<\frac{M_1}{k},$$
	for every $j=1,\dots,k$, and for every $W$ such that $\Vert W-W(0)(\omega)\Vert_F\leq (\log m)^{2/\alpha}$.
	We will prove, by contradiction, that for every $\omega\in N_1\cap N_2\cap N_3$, $\Vert W(t)-W(0)\Vert_F<(\log m)^{2/\alpha}$ for every $t>0$. In the following we will write $W(s)$ in the place of  $W(s)(\omega)$ and always assume that $\omega$ belongs to $N_1\cap N_2\cap N_3$.\\
Suppose that there exists $t$ such that $\Vert W(t)-W(0)\Vert_F\geq (\log m)^{2/\alpha}$, and let
$$
t_0=\mathrm{argmin}_{t\geq 0}\{t:\Vert W(t)-W(0)\Vert_F\geq (\log m)^{2/\alpha}\}.
$$
Since $\Vert W(s)-W(0)\Vert_F\leq (\log m)^{2/\alpha}$ for every $s\leq t_0$, then, for every $s\leq t_0$,
\begin{align*}
	&\lambda_{\text{min}}(\tilde{H}_{m}(W(s),X))>\frac{\lambda}{2},
\\&
\left\Vert \frac{\partial \tilde f_{m}}{\partial w}(W(s),x_j;\alpha)-\frac{\partial \tilde f_{m}}{\partial w}(W(0),x_j;\alpha)\right\Vert_F < c m^{-\gamma/\alpha}(\log m)^{-1/\alpha},
\\&\left\Vert \frac{\partial \tilde f_{m}}{\partial \wz}(W(s),x_j;\alpha)-
\frac{\partial \tilde f_{m}}{\partial \wz}(W(0)(\omega),x_j;\alpha)\right\Vert_F<\frac{M_1}{k}(\log m)^{-1/\alpha}\quad(j=1,\dots,k),\\&
\Vert \tilde f_{m}(W(0)(\omega),X;\alpha)-Y\Vert_F<M_2,
\\&
\max_{1\leq i\leq k}\left\Vert \frac{\partial}{\partial W}\tilde f_{m}(W(0)(\omega),x_i;\alpha)\right\Vert_F<M_2(\log m)^{-1/\alpha}.
\end{align*}
Let us now consider the gradient descent dynamic, with continuous learning rate $\eta=(\log m)^{2/\alpha}$:
\begin{align*}
	\frac{\ddr W(s)}{\ddr s}&=-(\log m)^{2/\alpha}\nabla_{W}\frac{1}{2}\sum_{i=1}^{k}
	\left(\tilde{f}_{m}(W(s),x_{i};\alpha)-y_{i}\right)^2\\
	&=-(\log m)^{2/\alpha}\sum_{i=1}^{k}
	\left( \tilde{f}_{m}(W(s),x_{i})- y_i\right)\frac{\partial \tilde f_{m}}{\partial W}(W(s),x_i;\alpha).
\end{align*}
This expression allows to write
\begin{align*}
   &\left \Vert W(t_0)-W(0)\right\Vert_F\\
   &\quad\leq \left\Vert \int_0^{t_0} \frac{\ddr}{\ddr s}W(s)\ddr s\right\Vert_F\\
    &\quad\leq(\log m)^{2/\alpha} \left\Vert \int_0^{t_0} \sum_{i=1}^k(\tilde f_{m}(W(s),x_i;\alpha)-y_i)\frac{\partial \tilde f_{m}}{\partial W}(W(s),x_i;\alpha)\ddr s\right\Vert_F\\
    &\quad\leq (\log m)^{2/\alpha}\max_{0\leq s\leq t_0}
    \sum_{i=1}^k \left\Vert 
    \frac{\partial\tilde  f_{m}}{\partial W}(W(s),x_i;\alpha)\right\Vert_F
    \int_0^{t_0}\Vert\tilde f_{m}(W(s),X;\alpha)-Y\Vert ds.
\end{align*}
To bound the term $\Vert\tilde f_{m}(W(s),X;\alpha)-Y\Vert$ we will exploit the dynamics of the NN output
\begin{align*}
	\frac{\ddr \tilde{f}_{m}(W(s),X;\alpha)}{\ddr s}&=
	 \frac{\partial \tilde f_{m}}{\partial W}(W(s),X;\alpha)\frac{\ddr W^T(s)}{\ddr s}\\
	&=-(\log m)^{2/\alpha} (\tilde{f}_{m}(W(s),X;\alpha)-Y)H_{m}(W(s),X)\\
	&=-(\tilde{f}_{m}(W(s),X;\alpha)-Y)\tilde H_{m}(W(s),X),
\end{align*}
that gives
\begin {align*}
\frac{\ddr }{\ddr s}\Vert \tilde{f}_{m}(W(s),X;\alpha)-Y\Vert_2^2
&=-2\left( \tilde{f}_{m}(W(s),X;\alpha)-Y \right)\tilde  H_{m}(W(s),X)
\left( \tilde{f}_{m}(W(s),X;\alpha)-Y \right)^T.
\end{align*}
Since $\lambda_{\text{min}}(\tilde H_m(W(s),X))>\lambda/2$ for every $s\leq t_0$, then
\begin{align*}
\frac{\ddr }{\ddr s}\Vert \tilde{f}_{m}(W(s),X;\alpha)-Y\Vert_2^2&\leq -\lambda \Vert \tilde{f}_{m}(W(s),X;\alpha)-Y\Vert_2^2,
\end{align*}
which implies that
$$
\frac{\ddr }{\ddr s}\left(\exp(\lambda s)\Vert \tilde{f}_{m}(W(s),X;\alpha)-Y\Vert_2^2\right)\leq 0.
$$
It follows that $\exp(\lambda s)\Vert \tilde{f}_{m}(W(s),X;\alpha)-Y\Vert_2^2$ is a decreasing function of $s$, and therefore
$$
\Vert \tilde{f}_{m}(W(s),X;\alpha)-Y\Vert_2\leq \exp(-\lambda /2)\Vert \tilde{f}_{m}(W(0),X;\alpha)-Y\Vert_2,
$$
for every $s\leq t_0$.
Substituting in the integral, we can write that
\begin{align*}
	&\left \Vert W(t_0)-W(0)\right\Vert_F\\
	&\quad\leq (\log m)^{2/\alpha} \max_{0\leq s\leq  t_0}
	\sum_{i=1}^k \left\Vert 
	\frac{\partial \tilde f_{m}}{\partial W}(W(s),x_i;\alpha)\right\Vert_F
	\int_0^{t_0} \exp(-\lambda s/2)\ddr s\cdot \Vert \tilde f_{m}(W(0),X;\alpha)-Y\Vert\\
	&\quad \leq \frac{2(\log m)^{2/\alpha}}{\lambda }\max_{0\leq s\leq  t_0}
	\sum_{i=1}^k \left(\left\Vert 
	\frac{\partial \tilde f_{m}}{\partial W}(W(0),x_i;\alpha)\right\Vert_F
	+\left\Vert 
	\frac{\partial \tilde f_{m}}{\partial W}(W(s),x_i;\alpha)-
	\frac{\partial \tilde f_{m}}{\partial W}(W(0),x_i;\alpha)\right\Vert_F\right)\\
	&\quad\quad\times \Vert\tilde  f_{m}(W(0),X;\alpha)-Y\Vert\\
	&\quad \leq \frac{2(\log m)^{2/\alpha}}{\lambda }\max_{0\leq s\leq  t_0}
	\sum_{i=1}^k \left(\left\Vert 
	\frac{\partial \tilde f_{m}}{\partial W}(W(0),x_i;\alpha)\right\Vert_F
	+\left\Vert 
	\frac{\partial \tilde f_{m}}{\partial \wz}(W(s),x_i;\alpha)-
	\frac{\partial \tilde f_{m}}{\partial \wz}(W(0),x_i;\alpha)\right\Vert_F\right.\\
	&\quad\quad\quad +\left.\left\Vert 
	\frac{\partial \tilde f_{m}}{\partial w}(W(s),x_i;\alpha)-
	\frac{\partial \tilde f_{m}}{\partial w}(W(0),x_i;\alpha)\right\Vert_F\right) \Vert\tilde  f_{m}(W(0),X;\alpha)-Y\Vert\\
	&\leq\quad \frac{2(\log m)^{1/\alpha}}{\lambda } 
	\left(M_2+M_1+kcm^{-\gamma/\alpha}
	\right)M_2,
\end{align*}
which, for $m$ large, contradicts $\Vert W(t_0)-W(0)\Vert_F\geq (\log m)^{2/\alpha}$ .
\end{proof}

\bigskip

\begin{proof}[\bf Proof of Theorem \ref{teo_ntk0}]
	Let $m\in\mathbb N$ and $N\in\mathcal F$ be such that $\P(N)>1-\delta$ and the properties mentioned in Lemma \ref{lem_stab3}, Lemma \ref{lem_stab4}, Lemma \ref{lem_stab5} and Lemma \ref{lem_stab6} hold true for every $\omega\in N$. Therefore, by means of Lemma \ref{lem_stab3} and of Lemma \ref{lem_stab4}, it is sufficient to show that
	$$
	\Vert W(t)-W(0)\Vert_F^2(\omega)<(\log m)^{2/\alpha}
	$$
	for every $t>0$ and $\omega\in N$. By contradiction, suppose that there exists, for some $\omega\in N$,  $t_0(\omega)$ finite with
	$$
	t_0(\omega):=\inf_{t\geq 0}\{t:\Vert W(t)-W(0)\Vert_F(\omega)\geq  (\log m)^{2/\alpha}\}.
	$$
	Since $W(t)(\omega)$ is a continuous function of $t$, then $\Vert W(t_0(\omega))-W(0)\Vert_F^2(\omega)=(\log m)^{2/\alpha}$. Then, by Lemma \ref{lem_stab3},
	$$
	(\log m)^{2/\alpha}\left\Vert \frac{\partial \tilde f_{m}}{\partial W}(W(s),x_j;\alpha)-\frac{\partial \tilde f_{m}}{\partial W}(W(0),x_j;\alpha)\right\Vert_F^2(\omega) < c m^{-2\gamma/\alpha},
	$$
	for every  $s\leq t_0$ and every $j$. Therefore, by Lemma \ref{lem_stab6} it holds true that $\Vert W(t_0(\omega))-W(0)\Vert_F(\omega)<(\log m)^{2/\alpha}$, which contradicts the definition of $t_0$.
\end{proof}

\section{}\label{appendixid}
The distribution of a random vector $\xi$ is said to be infinitely divisible if, for every $n$, there exist some i.i.d. random vectors $\xi_{n1},\dots,\xi_{nn}$ such that $\sum_k\xi_{nk}\stackrel{d}{=}\xi$. 
A $k$-dimensional random vector $\xi$ is infinitely divisible if and only if its characteristic function admits the representation $e^{\psi( u)}$, where
\begin{equation}
	\label{eq:id}
	\psi( u)=i u^T b-\frac{1}{2} u^T a u+
	\int\left(e^{i u^T  x}-1-i u^T xI(|| x||\leq 1)\right)\nu(d x)
\end{equation}
where $\nu$ is a measure on $\mathbb R^k\setminus \{ 0\}$ satisfying $\int (|| x||^2\wedge 1)\nu(d x)<\infty$, $ a$ is a $k\times k$ positive semi-definite, symmetric matrix and $ b$ is a vector. The measure $\nu$ is called the L\'evy measure of $\xi$ and $( a, b,\nu)$ are called the characteristics of the infinitely divisible distribution. We will write $\xi\sim i.d.( a, b,\nu)$. Other kinds of truncation can be used for the term $i u^T  x$. This affects only the vector of centering constants $ b$.
An i.i.d. array of random vectors is a collection of random vectors $\{\xi_{nj},j\leq m_n,n\geq 1\}$ such that, for every $n$, $\xi_{n1},\dots,\xi_{nm_n}$ are i.i.d.
The class of infinitely divisible distributions coincides with the class of limits of sums of i.i.d. arrays \citep[Theorem 13.12]{Kal(02)}.

To state a general criterion of convergence, we first introduce some notations. 
Let $\xi\sim i.d.( a, b,\nu)$. Define, for each $h>0$,
\[
 a^{(h)}= a+\int_{|| x||<h} x x^T\nu(d x),
\]
\[
 b^{(h)}= b-\int_{h<|| x||\leq 1} x\nu(d x),
\]
where $\int_{h<|| x||\leq 1}=-\int_{1<|| x||\leq h}$ if $h>1$. Denote by $\stackrel{v}{\rightarrow}$ vague convergence, that is convergence of measures with respect to the topology induced by bounded, measurable functions with compact support. Moreover, let $\overline{\mathbb R^k}$ be the one-point compactification of $\mathbb R^k$.
The following criterion for convergence holds \citep[Corollary 13.16]{Kal(02)}.

\begin{theorem}
	\label{th:clt}
	Consider in $\mathbb R^k$ an i.i.d. array $(\xi_{nj})_{j=1,\dots,m_n,n\geq 1}$ and let $\xi$ be $i.d.( a, b,\nu)$.  Let $h>0$ be such that $\nu(||x||=h)=0$. Then $\sum_j\xi_{nj}\stackrel{d}{\rightarrow} \xi$ if and only if the following conditions hold:
	\begin{description}
		\item (i) $m_n\P\left(\xi_{n1}\in \cdot\right)\stackrel{v}{\rightarrow} \nu(\cdot)$ on $\overline{\mathbb R^k}\setminus \{ 0\}$
		\item (ii) $m_n\E(\xi_{n1}\xi_{n1}^TI(||\xi_{n1}||<h))\rightarrow  a^{(h)}$
		\item (iii) $m_n\E(\xi_{n1}I(||\xi_{n1}||<h))\rightarrow  b^{(h)}$
	\end{description}
\end{theorem}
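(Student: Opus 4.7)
\medskip

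\noindent\textbf{Proof proposal for Theorem \ref{th:clt}.}
The plan is to pass to characteristic functions and reduce the statement to a statement about the convergence of the L\'evy-Khintchine exponent, exploiting the uniform infinitesimality of the row sum. By L\'evy's continuity theorem, $\sum_j \xi_{nj}\stackrel{d}{\to}\xi$ is equivalent to $\varphi_n(u)^{m_n}\to e^{\psi(u)}$ uniformly on compact subsets of $\mathbb{R}^k$, where $\varphi_n(u)=\E(e^{iu^T\xi_{n1}})$. A preliminary step is to observe that under either side of the asserted equivalence the array is uniformly null, i.e.\ $\max_{j\le m_n}\P(\Vert\xi_{nj}\Vert>\varepsilon)\to 0$ for every $\varepsilon>0$. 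Under assumption~(i) this is immediate from $m_n\P(\Vert\xi_{n1}\Vert>\varepsilon)\to\nu(\Vert x\Vert>\varepsilon)<\infty$, while for the converse direction it follows from the classical fact that a convergent sum of an i.i.d.\ array with a non-degenerate limit is automatically uniformly null.

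Once uniform nullness is in hand, $\varphi_n(u)\to 1$ uniformly on compacta, so the identity $\log z = (z-1)+O(|z-1|^2)$ applied to $z=\varphi_n(u)$ gives $m_n\log\varphi_n(u)=m_n(\varphi_n(u)-1)+o(1)$. Hence the task reduces to proving, under assumptions (i)--(iii),
\begin{equation}\label{plan:main}
m_n\E\bigl(e^{iu^T\xi_{n1}}-1\bigr)\;\longrightarrow\;\psi(u)
\end{equation}
uniformly on compacta, and conversely that \eqref{plan:main} forces (i)--(iii). Writing $g_h(x,u)=e^{iu^Tx}-1-iu^Tx\,I(\Vert x\Vert<h)$ and splitting the integrand as $g_h(x,u)+iu^Tx\,I(\Vert x\Vert<h)$, further splitting the first piece into its restrictions to $\{\Vert x\Vert\ge h\}$ and $\{\Vert x\Vert<h\}$, the left-hand side of \eqref{plan:main} decomposes into three terms:
\begin{align*}
T_n^{(1)}(u)&=m_n\!\!\int_{\Vert x\Vert\ge h}\!\!(e^{iu^Tx}-1)\,\mu_n(\ddr x),\\
T_n^{(2)}(u)&=m_n\!\!\int_{\Vert x\Vert<h}\!\!\bigl(e^{iu^Tx}-1-iu^Tx\bigr)\mu_n(\ddr x),\\
T_n^{(3)}(u)&=iu^T\Bigl(m_n\!\!\int_{\Vert x\Vert<h}\!\!x\,\mu_n(\ddr x)\Bigr),
\end{align*}
where $\mu_n=\P(\xi_{n1}\in\cdot)$. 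The integrand in $T_n^{(1)}$ is bounded and continuous on the open set $\{\Vert x\Vert>h\}\cup\{\infty\}$ (using $e^{iu^Tx}-1\to -1$ as a bounded boundary value, absorbed by continuity on the one-point compactification), and the choice $\nu(\Vert x\Vert=h)=0$ makes the boundary $\nu$-null, so vague convergence (i) on $\overline{\mathbb{R}^k}\setminus\{0\}$ yields $T_n^{(1)}(u)\to\int_{\Vert x\Vert\ge h}(e^{iu^Tx}-1)\nu(\ddr x)$. For $T_n^{(3)}$, condition (iii) gives convergence to $iu^Tb^{(h)}$ directly.

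The delicate piece is $T_n^{(2)}$, because the integrand is $O(\Vert x\Vert^2)$ near $0$ where $\nu$ may be non-integrable. I would handle it by writing $e^{iu^Tx}-1-iu^Tx=-\tfrac12(u^Tx)^2+r(u,x)$ with $|r(u,x)|\le C_u\Vert x\Vert^3$ on $\{\Vert x\Vert<h\}$. Assumption (ii) gives $m_n\int_{\Vert x\Vert<h}(u^Tx)^2\mu_n(\ddr x)=u^T\bigl(m_n\E(\xi_{n1}\xi_{n1}^TI_{\{\Vert\xi_{n1}\Vert<h\}})\bigr)u\to u^Ta^{(h)}u$. The remainder is controlled by splitting $\{\Vert x\Vert<h\}$ into $\{\Vert x\Vert<\varepsilon\}$ and $\{\varepsilon\le\Vert x\Vert<h\}$: on the outer piece (i) forces $m_n\mu_n$ to converge to $\nu$ and the remainder is $O(\varepsilon)$ times the second-moment bound from (ii). Letting $\varepsilon\downarrow 0$ after $n\to\infty$ yields $T_n^{(2)}(u)\to -\tfrac12 u^Ta^{(h)}u+\int_{\Vert x\Vert<h}(e^{iu^Tx}-1-iu^Tx)\nu(\ddr x)$. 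Adding the three limits reconstructs exactly $\psi(u)$ in the representation \eqref{eq:id} relative to the truncation at $h$.

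For the converse direction, I would assume $\sum_j\xi_{nj}\stackrel{d}{\to}\xi$, obtain uniform nullness, and reverse the argument: the convergence of the full exponent \eqref{plan:main} together with the freedom to vary $u$ forces the vague convergence in (i), and then (ii) and (iii) follow by applying \eqref{plan:main} to appropriate test functions (or equivalently by matching real and imaginary parts of the quadratic expansion in $u$). The main obstacle in the whole argument is the simultaneous handling of the truncation boundary at $\Vert x\Vert=h$ (where one needs the $\nu$-null hypothesis) and of the point at infinity (where vague convergence on the one-point compactification keeps mass from escaping); once these two pieces are reconciled by the choice of test function $g_h$, the rest is a Taylor expansion combined with condition (ii) to kill the $O(\Vert x\Vert^2)$ singularity of $g_h$ near the origin.
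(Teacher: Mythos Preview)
The paper does not prove Theorem~\ref{th:clt} at all: it is quoted verbatim as \citet[Corollary~13.16]{Kal(02)} and used as a black box throughout Appendix~\ref{app_limit}. So there is no ``paper's proof'' to compare against; your sketch is supplying an argument where the authors simply cite the literature.

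That said, your outline is the standard characteristic-function route (essentially the one Kallenberg uses), and the decomposition into $T_n^{(1)},T_n^{(2)},T_n^{(3)}$ together with the second-order Taylor control of $T_n^{(2)}$ via condition~(ii) is correct in spirit. Two places deserve more care. First, the claim that uniform nullness is automatic ``from the classical fact that a convergent sum of an i.i.d.\ array with a non-degenerate limit is automatically uniformly null'' is not true as stated; in Kallenberg's formulation the array is assumed to be a \emph{null array} from the outset, and the paper's restatement silently inherits that hypothesis. Second, your treatment of $T_n^{(1)}$ asserts that $e^{iu^Tx}-1$ extends continuously to the point at infinity, which it does not (it oscillates); what vague convergence on $\overline{\mathbb{R}^k}\setminus\{0\}$ actually gives you is tightness at infinity ($\limsup_n m_n\mu_n(\Vert x\Vert>R)\le\nu(\Vert x\Vert\ge R)\to 0$), and you then need an $R$-truncation plus a tail estimate, not a direct appeal to continuity on the compactification. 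The converse direction (``reverse the argument \ldots\ forces the vague convergence in~(i)'') is also thinner than it should be: recovering $\nu$ from the limiting exponent requires a separate identification step, typically via test functions of the form $f(x)=\prod_j(1-\cos(u_j^Tx))$ or a Stone--Weierstrass argument, not just ``varying $u$''.
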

Inside the class of infinitely divisible distribution, we can distinguish the subclass of stable distributions.
A $k$-dimensional random vector $\xi$ has stable distribution if, for every independent random vectors $\xi_1$ and $\xi_2$ with $\xi_1\stackrel{d}{=}\xi_2\stackrel{d}{=}\xi$ and every $a,b\in\mathbb R$, there exists $c\in\mathbb R$ and $ d\in\mathbb R^k$ such that
$a\xi_1+b\xi_2\stackrel{d}{=}c\xi+ d$. This is equivalent to the condition: for every $n\geq 1$,
\begin{equation}
	\label{eq:stable}
	\xi_1+\dots+\xi_n\stackrel{d}{=}n^{1/\alpha}\xi+ d_n
\end{equation}
where $\alpha\in (0,2]$, $\xi_1,\dots,\xi_n$ are i.i.d. copies of $\xi$ and $ d_n$ is a vector. The random vector $\xi$ is said to be strictly stable if \eqref{eq:stable} holds with $ d_n= 0$. A stable vector $\xi$ is strictly stable if and only if all its components are strictly stable. The coefficient $\alpha$ is called the index of stability of $\xi$ and the law of $\xi$ is called $\alpha$-stable. A stable vector $\xi$ is symmetric stable if $\mathbb{P}(\xi\in A)=\mathbb{P}(-\xi\in A)$ for every Borel set $A$. A symmetric stable vector is strictly stable. 
The class of stable distributions coincides with the class of limit laws of sequences $((\sum_{k=1}^nX_k-b_n)/a_n)$, where $(X_n)$ are i.i.d. random variables.

A stable distribution is infinitely divisible. Thus its characteristic function admits the L\'evy representation \eqref{eq:id}. If $\alpha=2$, then the L\'evy measure is the null measure and, therefore, the stable distribution coincides with the multivariate normal distribution with covariance matrix $ a$ and mean vector $ b$. 
If $\alpha<2$, then $ a=0$ (the zero matrix) and the $\alpha$-stability implies that there exists a measure $\sigma$ on the unit sphere $\mathbb S^{k-1}$ such that $\nu(d x)=r^{-(\alpha+1)}dr\sigma(d s)$, where $r=|| x||$ and $ s= x/|| x||$. 
Substituting in \eqref{eq:id}, we obtain
\[
\psi( u)=i u^T b+\int_S\int_0^\infty \left(e^{ir u^T s}-1-
ir u^T sI(r\leq 1)\right)\frac{1}{r^{1+\alpha}}dr \sigma(d s)
\]
For  $\alpha<1$, the centering $ir u^T sI(r\leq 1)$ is not needed, since the function (of $r$) is integrable, and we can write 
\[\psi( u)
=i u^T b'+\int_S\int_0^\infty \left(e^{ir u^T s}-1\right)\frac{1}{r^{1+\alpha}}dr \sigma(d s),
\]
for some vector $ b'$. After evaluating the inner integrals as in \citet[Example XVII.3]{Fel(68)}, we obtain
\[
\psi( u)=i u^T b'-
\int_S | u^T s|^\alpha
\Gamma(1-\alpha)\left(\cos(\pi\alpha/2)-i \,\text{sign}( u^T s)\sin(\pi\alpha/2)\right)\sigma(d s)
\]
\[
=i u^T b'-
\int_S  | u^T s|^\alpha\left(1-i \,\text{sign}( u^T s)\tan(\pi\alpha/2)\right)\Gamma(1-\alpha)\cos(\pi\alpha/2)\sigma(d s).
\]
For  $\alpha>1$, using the centering $ir u^T s$, we can write  
\[\psi( u)
=i u^T b''+\int_S\int_0^\infty \left(e^{ir u^T s}-1-ir u^T s\right)\frac{1}{r^{1+\alpha}}dr \sigma(d s),
\]
for some $ b''$. After evaluating the inner integrals as in \citet[Example XVII.3]{Fel(68)}, we obtain
\[
\psi( u)=i u^T b''+
\int_S | u^T s|^\alpha
\frac{\Gamma(2-\alpha)}{\alpha-1}\left(\cos(\pi\alpha/2)-i \,\text{sign}( u^T s)\sin(\pi\alpha/2)\right)\sigma(d s)
\]
\[
=i u^T b''-
\int_S  | u^T s^\alpha\left(1-i \,\text{sign}( u^T s)\tan(\pi\alpha/2)\right)\frac{\Gamma(2-\alpha)}{1-\alpha}\cos(\pi\alpha/2)\sigma(d s).
\]
Since, for $\alpha<1$, $\Gamma(2-\alpha)=(1-\alpha)\Gamma(1-\alpha)$, we can encompass the above results in one equation, and write, for $\alpha\neq 1$,
\[
\psi( u)=i u^T b'''-
\int_S | u^T s|^\alpha\left(1-i \,\text{sign}( u^T s)\tan(\pi\alpha/2)\right)\frac{\Gamma(2-\alpha)}{1-\alpha}\cos(\pi\alpha/2)\sigma(d s),
\]
for some $ b'''$.
Finally, for $\alpha=1$, using the centering $ir\sin r  u^T s $, we can write
\[
\psi( u)=i u^T b''''+\int_S\int_0^\infty \left(e^{ir u^T s}-1-
ir\sin r  u^T s \right)\frac{1}{r^{2}}dr \sigma(d s),
\]
for some $ b''''$. Evaluating the inner integral as in \citet[Example XVII.3]{Fel(68)}, we obtain
\[
\psi( u)=i u^T b''''-\int_S
| u^T s|\left(\frac{\pi}{2}+i \text{sign}( u^T s)\log | u^T s|\right)
\sigma(d s)
\]
\[
=i u^T b''''-\int_S
| u^T s|\left(1+i\frac{2}{\pi} \text{sign}( u^T s)\log | u^T s|\right)\frac{\pi}{2}
\sigma(d s).
\]
Considering the spectral representation $e^{\psi( u)}$ of the multivariate stable characteristic function
\[
\psi( u)=
\left\{
\begin{array}{ll}
	-\int_S
	| u^T s|^\alpha \left(1-i\;\text{sign}( u^T s)\tan(\pi\alpha/2)\right) \Gamma(ds)+i u^T\mu^{(0)}
	&\alpha\neq 1\\
	\\
	-\int_S
	| u^T s| \left(1+i\frac{2}{\pi}\;\text{sign}( u^T s)\log| u^T s|\right) \Gamma(ds)+i u^T\mu^{(0)}
	&
	\alpha=1,
\end{array}
\right.
\]
we can establish the following relationship between the L\'evy measure $\nu$ and the spectral measure $\Gamma$:
\[
\nu(dx)=C_\alpha \frac{1}{r^{\alpha+1}}\Gamma(ds),
\]
where $r=||x||$, $s=x/||x||$ and
\[
C_\alpha=\left\{
\begin{array}{ll}
	\displaystyle{\frac{1-\alpha}{\Gamma(2-\alpha)\cos(\pi\alpha/2)}}&\alpha\neq 1\\
	\\
	2/\pi&\alpha=1
\end{array}\right.
\]
A Stable random vector $\xi$ is strictly stable if and only if 
\[
\left\{
\begin{array}{ll}
	\mu^{(0)}=  0&\alpha\neq 1\\
	\int_S s_j\Gamma(ds)=0 \mbox{ for every j }& \alpha=1.
\end{array}
\right.
\]
(see e.g. \citet[Theorem 2.4.1]{Sam(94)}).
By Theorem \ref{th:clt}, the spectral measure $\Gamma$ of a symmetric stable random vector $\xi$ satisfies
\begin{equation}
	\label{eq:levy}
	\lim_{n\rightarrow \infty}n\mathbb{P}\left(||\xi||>n^{1/\alpha} x,\frac{\xi}{||\xi||}\in A\right)=C_\alpha x^{-\alpha}\Gamma(A)
\end{equation}
for every  Borel set $A$ of $S$ such that $\Gamma (\partial A)=0$. Moreover, the distribution of a random vector $\xi$ belongs to the domain of attraction of the $\text{St}_{k}(\alpha,\Gamma)$ distribution, with $\alpha\in (0,2)$ and $\Gamma$ simmetric finite measure on $\mathbb S^{k-1}$, if and only if \eqref{eq:levy} holds (see e.g.  \citet[ Theorem 4.3]{Davydov2008}).


\section*{Acknowledgement}

Stefano Favaro is grateful to Professor Lorenzo Rosasco for some stimulating conversations on the neural tangent kernel and for valuable suggestions. Stefano Favaro received funding from the European Research Council (ERC) under the European Union's Horizon 2020 research and innovation programme under grant agreement No 817257. Stefano Favaro gratefully acknowledges the financial support from the Italian Ministry of Education, University and Research (MIUR), ``Dipartimenti di Eccellenza" grant agreement 2018-2022.



\end{document}